\DeclareMathOperator*{\argmin}{arg\,min}
\renewenvironment{thebibliography}[1]{
  \begin{oldthebibliography}{#1}
   \setlength{\itemsep}{0.4em}
    \setlength{\parskip}{0em}
}
{
  \end{oldthebibliography}
}
\numberwithin{equation}{section}
\newtheorem{theorem}{Theorem}[section]
\newtheorem{lemma}[theorem]{Lemma}
\newtheorem{fact}[theorem]{Fact}
\newtheorem{corollary}[theorem]{Corollary}
\newtheorem{proposition}[theorem]{Proposition}
\theoremstyle{definition}
\newtheorem{conjecture}[theorem]{Conjecture}
\newtheorem{definition}[theorem]{Definition}
\newtheorem{procedure}[theorem]{Procedure}
\newenvironment{example}
{\pushQED{\qed}\examplex}
{\popQED\endexamplex}
\newenvironment{remark}
{\pushQED{\qed}\remarkx}
{\popQED\endremarkx}
\newtheoremstyle{citing}
{}
{}
{\itshape}
{}
{\bfseries}
{\textbf{.}}
{.5em}
{\thmnote{#3}}
{\theoremstyle{citing}
}
\DeclareMathOperator{\deged}{EDdegree}
\DeclareMathOperator{\squerdeg}{SEdegree}
\DeclareMathOperator{\rank}{rank}
\DeclareMathOperator{\Diag}{diag}
\DeclareMathOperator{\Sing}{Sing}
\DeclareMathOperator{\ord}{ord}
\DeclareMathOperator{\GL}{GL}
\DeclareMathOperator{\Id}{I}
\DeclareMathOperator{\vect}{vec}
\DeclareMathOperator{\Subv}{Subv}
\DeclareMathOperator{\Part}{Part}
\newcommand{\PP}{\mathbb{P}}
\renewcommand{\d}{\mathrm{d}}
\newcommand{\cM}{\mathcal{M}}
\newcommand{\cF}{\mathcal{F}}
\newcommand{\cP}{\mathcal{P}}
\newcommand{\cS}{\mathcal{S}}
\newcommand{\cE}{\mathcal{E}}
\newcommand{\cI}{\mathcal{I}}
\newcommand{\cX}{\mathcal{X}}
\newcommand{\cR}{\mathcal{R}}
\newcommand{\NN}{\mathbb{N}}
\newcommand{\CC}{\mathbb{C}}
\newcommand{\RR}{\mathbb{R}}
\newcommand{\ZZ}{\mathbb{Z}}
\newcommand{\KK}{\mathbb{K}}
\renewcommand{\Re}{\mathfrak{Re}}
\renewcommand{\Im}{\mathfrak{Im}}
\title{Geometry of Linear Neural Networks:\\Equivariance and Invariance under Permutation Groups}
\author{Kathl\'{e}n Kohn$^{\flat}$, Anna-Laura Sattelberger$^{\flat}$, and Vahid Shahverdi$^{\flat}$}
\date{}
\pgfplotsset{compat=1.18}
\begin{document}

\maketitle 
\thispagestyle{empty}

\begin{abstract}
The set of functions parameterized by a linear fully-connected neural network is a determinantal variety. We investigate the subvariety of functions that are equivariant or invariant under the action of a permutation group. Examples of such group actions are translations or $90^\circ$ rotations on images.
We describe such equivariant or invariant subvarieties as direct products of determinantal varieties, from which we deduce  their dimension, degree,  Euclidean distance degree, and their singularities.
We fully characterize invariance for arbitrary permutation groups, and equivariance for cyclic groups. We draw conclusions for the parameterization and the design of equivariant and invariant linear networks in terms of sparsity and weight-sharing properties. We prove that all invariant linear functions can be parameterized by a single linear autoencoder with a weight-sharing property imposed by the cycle decomposition of the considered permutation. The space of rank-bounded equivariant functions has several irreducible components, so it can {\em not} be parameterized by a single network---but each irreducible~component~can. Finally, we show that minimizing the squared-error loss on our invariant or equivariant networks reduces to minimizing the Euclidean distance from determinantal varieties via the Eckart--Young theorem.
\end{abstract} 

{\hypersetup{linkcolor=black}
\setcounter{tocdepth}{1}
\renewcommand{\baselinestretch}{0.72}\normalsize
\tableofcontents
\renewcommand{\baselinestretch}{1.0}\normalsize
}

\vfill

{\small
\noindent $^{\flat}$ Department of Mathematics, KTH Royal Institute of Technology, 
100~44~Stockholm, Sweden\\
\hspace*{1.4mm} {\tt kathlen@kth.se}, {\tt anna-laura.sattelberger@mis.mpg.de}, and {\tt vahidsha@kth.se} 
}

\newpage

\section{Introduction}\label{sec:intro}
Neural networks that are equivariant or invariant under the action of a group  attract high interest both in applications and in the theory of machine learning. It is important to thoroughly study their fundamental properties. While invariance is important for classifiers, equivariance typically comes into play in feature extraction tasks.
Modding out such symmetries can drastically reduce time and memory needed for the training of neural networks.
Taking carbon emissions during the training of models \cite{climateML} into account, it is important for the role of AI in the climate crisis to encounter the increasing training and hence energy costs. This is also one of the aims that the initiative {\em Green AI} \cite{GreenAI} is thriving for, to which the construction of group equi- or invariant neural networks might contribute.

The present article investigates linear neural networks. An example of them are linear encoder-decoder models: they are families of functions $\{f_\theta\}_{\theta \in \Theta}$, parameterized by a set $\Theta=\RR^{n\times r}\times \RR^{r \times n}$. For each parameter $\theta \in \Theta$, the function $f_\theta$ is a composition of linear maps
\begin{align}\label{eq:enddec}
f_\theta\colon \ \RR^n \stackrel{f_{1,\theta}}{\longrightarrow} \RR^r \stackrel{f_{2,\theta}}{\longrightarrow} \RR^n ,
\end{align}
where $r\leq n$.
One commonly visualizes $f_\theta$ as in \Cref{fig:intro}. 
\begin{figure}[h]
	\begin{tikzpicture}[scale=0.42]
	\node[shape=circle,draw=black] (A) at (-1,5) {};
	\node[shape=circle,draw=black] (B) at (-1,3.5) {};
    \node (M) at (-1,2.24) {$\vdots$};
	\node[shape=circle,draw=black] (C) at (-1,0.5) {};
	\node[shape=circle,draw=black] (D) at (-1,-1) {};
	\node[shape=circle,draw=black] (E) at (6,3.0) {};
    \node (O) at (6,2.24) {$\vdots$} ;
	\node[shape=circle,draw=black] (F) at (6,1.0) {} ;
	\node[shape=circle,draw=black] (G) at (13,5) {} ;
	\node[shape=circle,draw=black] (H) at (13,3.5) {} ;
    \node (N) at (13,2.24) {$\vdots$} ;
	\node[shape=circle,draw=black] (I) at (13,0.5) {} ;
    \node[shape=circle,draw=black] (J) at (13,-1) {} ;
	\path [] (A) edge node[left] {}  (E); 
    \path  (A) edge node[left] {} (F);
	\path [] (B) edge node[left] {}  (E); 
    \path  (B) edge node[left] {}  (F);
	\path [] (C) edge node[left] {}  (E); 
    \path  (C) edge node[left] {}  (F);
	\path  (D) edge node[left] {}  (E); 
    \path  (D)  edge node[left] {}  (F);
	\path  (E) edge node[left] {}  (G); 
    \path  (E) edge node[left] {}  (H); 
    \path  (E) edge node[left] {}  (I);
	\path  (F)  edge node[left] {}  (G);  	   	
    \path  (F) edge node[left] {}  (H);  	   	
    \path  (F) edge node[left] {}  (I);
    \path  (E) edge node[left] {}  (J);
    \path  (F) edge node[left] {}  (J);
	\node[] at (3.2,-1) {$f_{1,\theta}$}; 
    \node[] at (8.8,-1) {$f_{2,\theta}$};
\end{tikzpicture}
\caption{A fully-connected network of depth~$2$.}
\label{fig:intro}
\end{figure}
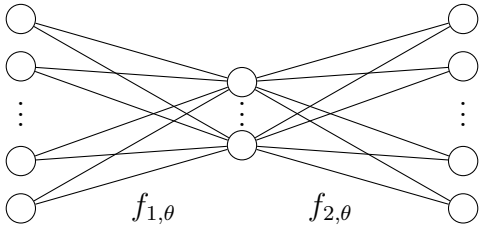

If $n=p^2$ is a square number, one can think of the input of the network as a quadratic image with $p\times p$ pixels. If $n=p^3$, the input might be a cubic $3$D scenery. In applications, one often aims to learn functions that are equi- or invariant under certain group actions, such as translations, rotations, or reflections. 

\medskip

The function space of a  linear fully-connected neural network is a determinantal variety: For natural numbers $r,m,n$, we write $\cM_{r,m\times n}$ for the subvariety of $\CC^{m\times n}$ whose points are complex $m\times n$ matrices of rank at most~$r$. In learning tasks, {\em real} matrices of rank at most~$r$ are in use; these are precisely the real-valued  points~$\cM_{r,m\times n}(\RR)$ of~$\cM_{r,m\times n}$. Reading the entries of a matrix $M$ as variables, the variety $\cM_{r,m\times n}$ is the locus of simultaneous vanishing of all $(r+1)\times (r+1)$ minors of~$M$. For a linear fully-connected neural network with input dimension $n$, output dimension $m$, and whose smallest layer has width $r$, the set of functions parameterized by it is exactly $\cM_{r,m\times n}(\RR)$.

A good understanding of the geometry of the function space of a neural network is not only mathematically interesting per se. It is useful to understand the training process of a network. For instance, it is important for understanding  the type of the critical points of the loss function. This behavior typically varies from architecture to architecture. 
In the case of linear fully-connected networks, critical points often correspond to matrices of rank even lower than~$r$, i.e., they lie in the singular locus of the determinantal variety $\cM_{r,m\times n}(\RR)$~\cite{purespurcrit}.
Investigating those points is crucial for proving the convergence of such networks to nice minima \cite{nguegnang2021convergence}. The nature of critical points is very different in the case of linear convolutional networks. Here, critical points are almost always smooth points of the function space~\cite{KMMT22,KMST23}. 

\bigskip

{\bf Main results.} In the present article, we investigate the subvarieties $\cE^G_{r,n\times n}\subset \cM_{r,n\times n}$ and $\cI^G_{r,m\times n}\subset \cM_{r,m\times n}$ of linear functions of bounded rank that are equivariant and invariant under the action of a permutation group~$G$, respectively. The group $G$ is a subgroup of the symmetric group~$\cS_n$ and acts on the input and output space $\RR^n$ (or~$\RR^m$, respectively) by permuting the entries of the input or output vector. For $m=n$, the subvarieties~$\cE^G_{r,n\times n}$ and~$\cI^G_{r,n\times n}$ encode the part of the function space of a linear autoencoder that is equi- or invariant under the action of~$G$, respectively. We provide an algebraic characterization of $\cI^G_{r,m\times n}$ for arbitrary permutation groups, and for cyclic subgroups of $\cS_n$ in the case of equivariance. Our results allow for implications on the design of equi- and invariant networks.

For invariant autoencoders, we prove a weight-sharing property on the encoder and deduce a rank constraint, i.e., a constraint on the width of the middle layer. We prove that the function space of such a constrained autoencoder is exactly~$\cI^G_{r,n\times n}$. 
In other words, linear autoencoders with our weight-sharing property on the encoder precisely parameterize invariant functions.

For equivariance, we show that the space of rank-bounded equivariant functions $\cE_{r, n \times n}^G$ typically has several irreducible components. This implies that there is \emph{no} linear neural network  that can parameterize the whole space $\cE_{r, n \times n}^G$ at once.
Every network can parameterize at most one of the 
components. This raises the natural question: {\em Which component should one choose when designing a network, and is  there one which is ``best''?}
We count the irreducible components via integer partitions of a specific form, and prove that they are direct products of determinantal varieties. We show that each of them can be parameterized by an autoencoder whose encoder and decoder have the same sparsity and weight-sharing~pattern.

\medskip

We also investigate the squared-error loss for our equi- and invariant autoencoders by relating it to Euclidean distance optimization on their function spaces. 
More concretely, we provide linear transformations that minimize the squared-error loss by minimizing the Frobenius norm on determinantal varieties which can be done using the Eckart--Young theorem. We point out that this approach cannot generally be applied to arbitrary subvarieties of~$\cM_{r,m\times n}$; it is a special feature of the varieties $\cE^G_{r,n\times n}$ and $\cI^G_{r,m\times n}$. 
To that end, we introduce the \emph{squared-error degree} as an algebraic complexity measure for minimizing the squared-error loss on a real variety, and compare it to the (generic) Euclidean distance degree.

\medskip
To showcase our results, we train linear autoencoders on the MNIST dataset, comparing architectures with and without imposed equivariance under horizontal shifts.

\medskip

Finally, we would like to highlight that our findings extend beyond linear networks by informing the construction of nonlinear equivariant architectures, as follows.
It is a common strategy to build nonlinear equivariant networks from linear equivariant layers.  These layers often involve different input and output dimensions, necessitating the use of different group representations on either side of a layer to account for how the group acts on spaces of varying dimensions. 
Instead, we suggest to consider the individual linear layers as square matrices with specific rank constraints.
That way, we can use the same group representation on both the input and output side of each layer. 
Like that, it is no longer needed
to introduce distinct group representations on either side, offering a new perspective on designing equivariant networks that maintains important structural properties---such as symmetries and dimensional constraints---within each layer, while adhering to the desired architectural constraints.

\bigskip

{\bf Related work.} 
We here give a petite sample of related references, which is by no means claimed to be exhaustive. An overview of equivariant neural networks is provided in the survey~\cite{LimNelson22}. The study of equivariance has roots in pattern recognition \cite{WoodShawe}. Group-equivariant convolutional networks were introduced by Cohen and Welling in~\cite{cohenwelling}, allowing for applications in image analysis~\cite{RotoCNN}. In \cite{Bekkers}, transitive group actions are considered. Therein, Bekkers proves that, on the level of feature maps, a linear map is equivariant if and only if it is a group convolution. Isometry- and gauge-equivariant CNNs on Riemannian manifolds were investigated in~\cite{gaugeCNN}. This geometric perspective has strong connections to physics. 

Permutation-equivariant architectures are also important in graph neural networks (GNNs), where preserving symmetry under node permutations is essential. Maron et al.~\cite{maron2018invariant} characterized all permutation-invariant and equivariant linear layers for graph data, providing a foundational framework for designing GNNs. Covariant compositional networks~\cite{hy2018predicting} were developed for molecular property prediction, incorporating physical symmetries to improve model performance. Multiresolution equivariant graph autoencoders~\cite{hy2023multiresolution} enabled hierarchical graph representation and generation. SignNet and BasisNet~\cite{lim2022sign} introduced sign- and basis-invariant architectures for spectral graph representation learning, offering robust methods for handling spectral ambiguities.

\medskip

To the best of our knowledge, our article is the first one to tackle the {\em algebro-geometric} study of equi- or invariant networks, their function~spaces,  and squared-error loss minimization on these spaces.

\bigskip

{\bf Notation.} By $\cF$, we denote the function space, and by $\Theta$ the parameter space of a neural network $F\colon \Theta\to \cF$. For a parameter $\theta\in \Theta$, we denote the function $F(\theta)$ by~$f_\theta$. The symbol~$\KK$ denotes one of the fields $\{\RR,\CC\}$. For an ideal $I$ in a polynomial ring $\KK[x_1,\ldots,x_n]$, we denote by $V(I)$  the algebraic variety $V(I)=\{x\in \KK^n \,| \, p(x)=0 \text{ for all } p\in I\}$. The symbol $\cM_{r,m\times n}$ denotes the determinantal variety in $\CC^{m\times n}$ whose points are complex $m\times n$ matrices $M$ of rank at most~$r$, and $\cM_{r,m\times n}(\RR)$ its real-valued points, i.e., real matrices $M$ of rank at most~$r$. Its subsets of equi- and invariant subspaces under the action of a group $G$ will be denoted by $\cE_{r,m\times n}^G$ and $\cI_{r,m\times n}^G$, respectively. If the letter~$r$ is dropped in the notation, this means that no rank constraint is imposed on the matrices.
For fixed $T\in \GL_n(\KK)$ and any matrix $M\in \cM_{n\times n}(\KK)$, we write shortly  $M^{\sim_T}\coloneqq T^{-1}MT $ for the respective similarity transform.
We denote the identity matrix of size $n$ by~$\Id_n$, and by~$\cS_n$ the symmetric group on the set $[n]=\{1,\ldots,n\}$. For a permutation $\sigma \in \cS_n$, $\cP(\sigma)$ denotes the partition $\{A_1,\ldots,A_k\}$ of the set~$[n]$ induced by the decomposition $\sigma=\pi_1\circ \cdots \circ \pi_k$ of $\sigma$ into pairwise disjoint cycles~$\pi_i$, where we also count trivial cycles, i.e., cycles of length~$1$.

\bigskip

{\bf Outline.} In \Cref{sec:prelim}, we present motivating examples, basics about squared-error loss minimization, and necessary preliminaries from (non-)linear algebra. In particular, we introduce the notion of \emph{squared-error degree}. \Cref{sec:inv} treats invariance under arbitrary permutation groups. \Cref{sec:equiv} characterizes equivariance of linear autoencoders under cyclic subgroups of the symmetric group. 
To demonstrate our results, we run experiments on the MNIST dataset in \Cref{sec:experiments}, comparing different architectures.
In \Cref{sec:outlook}, we present which other groups and further generalizations we are planning to tackle in future work. 

\section{Warm-up and preliminaries}\label{sec:prelim}
We start with cyclic subgroups $G$ of the symmetric group~$\cS_n$, i.e., groups of the form $G=\langle \sigma \rangle$ for some permutation $\sigma \in \cS_n$. Such groups~$G$ naturally act on the input space $\RR^n$ and the output space $\RR^n$ of the network~\eqref{eq:enddec} by permuting the entries of the in- and output vector, respectively, and on linear maps $f\colon \, \RR^n\to \RR^m$ by permuting the columns of the $m\times n$ matrix representing~$f$.

\subsection{Warm-up examples}\label{sec:warmupexps}
\subsubsection{Rotation-invariance of linear maps for \texorpdfstring{$p\times p$}{p x p} pictures}\label{sec:invrot}
Let $n=p^2$ be a square number and let $\sigma$ denote the clockwise rotation of the input $p\times p$ picture by $90$ degrees, i.e., $\sigma$ is the following permutation of the pixels $a_{ij}$:
\begin{align}\label{eq:rot}
 \sigma \colon \ \RR^{p\times p}\to \RR^{p\times p}, \qquad 
\begin{pmatrix}
 a_{11} & a_{12} & \cdots & a_{1p}\\
a_{21} & a_{22} & \cdots  & a_{2p}\\
 \vdots & \vdots & \ddots & \vdots \\
a_{p1} & a_{p2} & \cdots & a_{pp}
\end{pmatrix}
 \,\mapsto \,
\begin{pmatrix}
 a_{p1} & a_{p-1, 1} & \cdots & a_{11}\\
a_{p2} & a_{p-1,2} & \cdots  & a_{12}\\
 \vdots & \vdots & \ddots & \vdots \\
a_{pp} & a_{p, p-1} & \cdots & a_{1p}
\end{pmatrix}.
\end{align}
Since square numbers are either $0$ or $1$ modulo $4$, we will distinguish between $p$ odd and $p$~even for the identification of $\RR^{p\times p}$ with $ \RR^n$.
If $m$ is odd, we identify
\begin{align}
\RR^{p\times p} \stackrel{\cong }{\longrightarrow} \RR^n, \qquad 
  A \ =
    \begin{pmatrix}
        a_{1,1} & a_{1,2} & \cdots & a_{1,p-1} & a_{1,p}\\
        a_{2,1} & a_{2,2} & \cdots & a_{2,p-1} & a_{2,p}\\
        \vdots & \vdots  & \ddots & \vdots & \vdots \\
        a_{p-1,1} & a_{p-1,2} & \cdots & a_{p-1,p-1} & a_{p-1,p}\\
     a_{p,1} & a_{p,2} & \cdots & a_{p,p-1} & a_{p,p}
    \end{pmatrix}  \ \mapsto \ \vect(A) \, , 
\end{align}

\noindent where $ \vect(A)$ denotes $(
        a_{1,1} , a_{1,p} , a_{p,p},a_{p,1} , a_{1,2} , a_{2,p} , a_{p,p-1} , a_{p-1,1} , \ldots , a_{1,p-1} , a_{p-1,p} ,a_{p,2}, a_{2,1} ,$ $ a_{2,2}, a_{2,p-1} , a_{p-1,p-1},a_{p-1,2} , \ldots , a_{\frac{p+1}{2},\frac{p+1}{2}} 
)^\top \, .$ The intuition of the choice of vectorization can be described as ``passing from corner to corner clockwise, inwards layer by layer''. 
Under this identification, the action of~$\sigma$ is given by the $n\times n$ block matrix
{\small 
\begin{align}\label{eq:rotationodd}
    \begin{pmatrix}
    0 & 0 & 0 & 1 &  &  &  &  & & \\
    1 & 0 & 0 &0  &  &  & &  \\
        0 & 1 & 0  & 0 & &   &  &  & \\
        0 & 0 & 1 & 0 &  &  &   & && \\
       & & & & \cdots & & & & &\\
        & & & &  & 0 & 0 & 0 &1&\\
          & & & &  & 1 & 0 & 0 &0&\\
           & & & &  & 0 & 1 & 0 &0&\\
        & &  & &  & 0 & 0 & 1 &0&\\
       & & & & &  & & & & 1\\  
    \end{pmatrix}\, ,
\end{align}}

\noindent where non-filled entries are~$0$. If $m$ is even, we use the identification
\begin{align}
\RR^{p\times p} \stackrel{\cong }{\longrightarrow} \RR^n, \qquad 
    A \ = \begin{pmatrix}
        a_{1,1} & a_{1,2} & \cdots & a_{1,p-1} & a_{1,p}\\
        a_{2,1} & a_{2,2} & \cdots & a_{2,p-1} & a_{2,p}\\
         \vdots & \vdots & \ddots & \vdots & \vdots \\
        a_{p-1,1} & a_{p-1,2} & \cdots & a_{p-1,p-1} & a_{p-1,p}\\
     a_{p,1} & a_{p,2} & \cdots & a_{p,p-1} & a_{p,p}
    \end{pmatrix} \ \mapsto \ \vect(A) \, ,
    \end{align}
    where $ \vect(A)$ denotes $(a_{1,1} , a_{1,p} , a_{p,p},a_{p,1}, a_{1,2} , a_{2,p} , a_{p,p-1} , a_{p-1,1} , \ldots , a_{1,p-1} , a_{p-1,p} , 
        a_{p,2},a_{2,1}, $ $ a_{2,2}, a_{2,p-1} , a_{p-1,p-1},a_{p-1,2} , \ldots , a_{\frac{p}{2},\frac{p}{2}} ,a_{\frac{p}{2},\frac{p}{2}+1},a_{\frac{p}{2}+1,\frac{p}{2}},a_{\frac{p}{2}+1,\frac{p}{2}+1}
)^\top \,.$ Under this identification, $\sigma$~acts on $\RR^n$ by the $n\times n$ block matrix
\begin{align}\label{eq:rotationeven}
{\small \begin{pmatrix}
    0 & 0 & 0 & 1 &  &  &  &  &  \\
    1 & 0 & 0 &0  &  &  &   \\
    0 & 1 & 0  & 0 &    &  &  & \\
    0 & 0 & 1 & 0 &    &   & && \\
      & & & & \cdots & & & &\\
      & & & &  & 0 & 0 & 0 &1\\
      & & & &  & 1 & 0 & 0 &0\\
      & & & &  & 0 & 1 & 0 &0\\
      & &  & &  & 0 & 0 & 1 &0\\
    \end{pmatrix} }.
\end{align}

\noindent Invariance of~$f\colon \RR^n \to \RR^n$ under~$\sigma$ hence implies that columns $1$--$4$, $5$--$8$, $9$--$12$, and so on, of the matrix $M$  representing $f$ have to coincide. In particular, $\sigma$-invariance implies that the rank of $f$ is at most $\lceil \frac{p^2}{4} \rceil$, where~$\lceil \cdot \rceil$ denotes the ceiling function. Note that the set $\cI_{n\times n}^\sigma$ of all linear rotation-invariant maps $f\colon \RR^n \to \RR^n$ is a vector space.

\begin{remark} 
Also operations such as rotations, reflections, and shifting the rows of~$A$, all can be seen as special cases of permutations. 
\end{remark}

\begin{remark}[Design of rotation-invariant autoencoders]
 From what was found above, we deduce that for any linear encoder-decoder $f\colon \, \RR^{p\times p} \to \RR^r \to \RR^{p\times p}$ that is invariant under~$\sigma$, the number~$r$ can be chosen to be $\leq \lceil \frac{p^2}{4} \rceil$. A rank constraint with $r$ small imposes moreover that some of the blocks of four consecutive columns coincide, are zero, or linear combinations of each other.
\end{remark} 

\subsubsection{Rotation-equivariance of linear maps for
\texorpdfstring{$3\times 3$}{3 x 3} pictures}\label{sec:warmup}
Consider the set of $\RR$-linear maps $f\colon \RR^9 \to \RR^9$ of rank at most~$3.$ Every such map can be written as a composition of $\RR$-linear maps  
\begin{align} \label{eq:autoencoder939}
\RR^9 \stackrel{}{\longrightarrow} \RR^3 \stackrel{}{\longrightarrow} \RR^9.
\end{align} 
Those maps are encoded precisely by real $9\times 9$ matrices $M=(m_{ij})_{i,j=1,\ldots,9}\in \cM_{9\times 9}(\RR)$ all whose $4\times 4$ minors vanish.  The minors are homogeneous polynomials of degree~$4$ in the entries of the matrix~$M.$ We denote by $I_{3,9\times 9}\subset  \CC[\{m_{ij}\}]$ the ideal generated by those polynomials.
In more geometric terms, we are looking for the real points of the $45$-dimensional variety
\begin{align}\label{eq:detvar}
\mathcal{M}_{3,9\times 9} \,= \, V(I_{3,9\times 9}) \,\subset \, \CC^{9\times 9}.
\end{align}

\noindent Denote by $\sigma$ the clockwise rotation of a $3\times 3$ matrix by $90$ degrees, i.e.,
\begin{align}\label{eq:rotm}
 \sigma \colon \ \RR^{3\times 3}\to \RR^{3\times 3}, \qquad 
\begin{pmatrix}
 a_{11} & a_{12} & a_{13}\\
a_{21} & a_{22} & a_{23}\\
a_{31} & a_{32} &  a_{33}
\end{pmatrix}
 \mapsto 
\begin{pmatrix}
a_{31} & a_{21} & a_{11}\\
a_{32} & a_{22} & a_{12}\\
a_{33} & a_{23} &  a_{13}
\end{pmatrix},
\end{align}
and let $G=\langle \sigma \rangle$. This is a finite, cyclic subgroup of $\operatorname{O}(2)$ which preserves the $p\times p$ shape of the input matrix, which we interpret as a quadratic image with $n=9$ real pixels~$a_{ij}$.
We are interested in those maps that are equivariant under $G,$ i.e., linear maps~$f$ for which
\begin{align}\label{eq:rotinv}
\sigma \circ f \, = \, f \circ \sigma \, .
\end{align}
Again, we identify $\RR^{3\times 3}\cong\RR^9$ via 
\begin{align}
\begin{pmatrix}
a_{11} & a_{12} & a_{13}\\
a_{21} & a_{22} & a_{23}\\
a_{31} & a_{32} &  a_{33}
\end{pmatrix}\, \mapsto \,
\begin{pmatrix}
 a_{11} & a_{13} & a_{33} & a_{31} & a_{12} & a_{23} & a_{32}& a_{21} & a_{22}
\end{pmatrix}^\top.
\end{align}
Then $\sigma(A)$ is represented by the vector 
$(a_{31} \ a_{11} \ a_{13} \ a_{33} \ a_{21} \ a_{12} \ a_{23} \ a_{32}\ a_{22})^\top.$
Under this identification, the rotation is the permutation $\sigma=(1 \,4 \,3\,2)(5 \,8\,7\,6)\in \cS_9$ and is represented~by
\begin{align}\label{example:permute-99}
{\small 
\begin{pmatrix}
\begin{array}{c|c|c}
\begin{matrix}
0 & 0 & 0 & 1\\
1 & 0 & 0& 0 \\
0 & 1 & 0& 0\\ 
0& 0 & 1 & 0\\
\end{matrix} & 
\begin{matrix}
\text{\large 0}
\end{matrix} & 
\begin{matrix}
 0 
\end{matrix} \\ \hline
\begin{matrix}
\text{\large 0}
\end{matrix} & 
\begin{matrix}
 0& 0& 0 & 1\\
 1 & 0& 0& 0\\
 0 & 1 & 0& 0\\
 0& 0& 1& 0 \\
\end{matrix} & 
\begin{matrix}
0 
\end{matrix} \\ \hline
\begin{matrix}
  0
\end{matrix} & \begin{matrix}
 0 
\end{matrix} & 
\begin{matrix}
    1
\end{matrix}
\end{array}
\end{pmatrix} \, .}
\end{align}
from \Cref{eq:rotationodd}.
A map $f$ is equivariant under~$\sigma$, and hence under~$G$, if and only if its representing matrix $M$ satisfies
\begin{align}\label{eq:comm}
P_\sigma\cdot M \, =\, M \cdot P_\sigma \, .
\end{align}
We therefore aim to determine all matrices $M$ that commute with $P_\sigma$. Hence, a matrix $M$ is equivariant under $\sigma$ if and only if $M$ is similar to itself with the permutation matrix of~$\sigma$ as base change. 
Also condition~\eqref{eq:rotinv} can be expressed as the vanishing of polynomials read from \Cref{eq:comm}. Those~$81$ homogeneous binomials 
of degree~$1$ cut out the vector space~$\cE^\sigma_{9 \times 9}$. 
We see from \eqref{example:permute-99} that the matrices in $\cE_{9\times 9}^\sigma$ must be of the form
\begin{align}\label{eq:matrotequi}
\begin{pmatrix}
\begin{array}{c|c|c}
\begin{matrix}
\alpha_1 & \alpha_2 & \alpha_3 & \alpha_4\\
\alpha_4 & \alpha_1 & \alpha_2 & \alpha_3\\
\alpha_3 & \alpha_4 & \alpha_1 & \alpha_2\\
\alpha_2 & \alpha_3 & \alpha_4 & \alpha_1 
\end{matrix} &
\begin{matrix}
  \beta_1 & \beta_2 & \beta_3 & \beta_4\\  
  \beta_4 & \beta_1 & \beta_2 & \beta_3\\
  \beta_3 & \beta_4 & \beta_1 & \beta_2 \\
  \beta_2 & \beta_3 & \beta_4 & \beta_1\\
\end{matrix} &
\begin{matrix}
\varepsilon_3\\\varepsilon_3\\\varepsilon_3\\\varepsilon_3
\end{matrix} \\ \hline 
\begin{matrix}
     \gamma_1 & \gamma_2 & \gamma_3 & \gamma_4\\ 
     \gamma_4 & \gamma_1 & \gamma_2 & \gamma_3 \\
     \gamma_3 & \gamma_4 & \gamma_1 & \gamma_2 \\
     \gamma_2 & \gamma_3 & \gamma_4 & \gamma_1 \\
\end{matrix} &
\begin{matrix}
\delta_1 & \delta_2 & \delta_3 & \delta_4\\
\delta_4 & \delta_1 & \delta_2 & \delta_3 \\
\delta_3 & \delta_4 & \delta_1 & \delta_2 \\
 \delta_2 & \delta_3 & \delta_4 & \delta_1
\end{matrix} &
\begin{matrix}
    \varepsilon_4 \\ \varepsilon_4 \\ \varepsilon_4 \\ \varepsilon_4
\end{matrix}\\ \hline 
\begin{matrix}
    \varepsilon_1 & \varepsilon_1 & \varepsilon_1 & \varepsilon_1
\end{matrix}  &
\begin{matrix}
    \varepsilon_2 & \varepsilon_2 & \varepsilon_2 & \varepsilon_2
\end{matrix} &
\begin{matrix}
 \varepsilon_5
\end{matrix}
 \end{array}
\end{pmatrix} \, .
\end{align}
Hence, the dimension of the vector space  $\cE_{9\times 9}^\sigma$  is $81-60=4\cdot 4+5\cdot 1=21.$
The matrices in this vector space that lie in the function space of the autoencoder \eqref{eq:autoencoder939}
form the variety~$\cE_{3,9\times 9}^\sigma$, which is obtained as the intersection of $\cE^\sigma$ and $\cM_{3,9\times 9}$, i.e.,
\begin{align}
\cE_{3,9\times 9}^\sigma \, = \, \cM_{3,9\times 9}\cap\cE^\sigma_{9 \times 9} \, .
\end{align} 
In Theorems~\ref{thm:irreducible-dim} and \ref{thm:irreducible-dim-real}, we will see that this variety has $17$ irreducible components over~$\mathbb{C}$, and $5$ irreducible components over~$\mathbb{R}$.
The latter implies that there is \emph{no} linear neural network whose function space equals $\cE_{3,9\times 9}^\sigma$.
We will explain in \Cref{sec:parameterizingEquiv} how one can construct five autoencoders whose function spaces are the five real irreducible components of~$\cE_{3,9\times 9}^\sigma $. 
Note that also the determinant of the matrix~\eqref{eq:matrotequi}, cutting out $\cE_{8,9\times 9}^\sigma$, is reducible: over $\RR$, it has three factors; one of them factorizes over the complex numbers, resulting in a total of four factors over~$\CC$. 

\subsection{Squared-error loss \& Euclidean distance minimization}
Consider a linear network whose function space $\cF$ is contained in $\cM_{m \times n}(\RR)$.
Given training data $\mathcal{D}=\{(x_i,y_i) \, |\, i=1,\ldots,d\}\subset \RR^{n} \times \RR^{m}$ consisting of input and output pairs $(x_i, y_i)$, 
we collect all these in- and output vectors as the columns of matrices $X \in \RR^{n \times d}$ and $Y \in \RR^{m \times d}$, respectively.
The \emph{squared-error loss} with respect to the data matrices $X$ and $Y$ then is 
\begin{align} \label{eq:squaredErrorLoss}
    \cM_{m \times n}(\RR) \rightarrow \RR \, , \quad M \mapsto \Vert MX-Y \Vert_F^2 \, ,
\end{align}
where $\Vert \cdot \Vert_F$ denotes the Frobenius norm.
Given sufficiently many training data (more specifically, $d \geq n$) that are sufficiently generic, minimizing the squared-error loss on the function space $\cF \subset \cM_{m \times n}(\RR)$ is equivalent to minimizing a weighted Euclidean distance on~$\cF$:
\begin{lemma}\label{lem:squaredErrorED}
    If $\rank (XX^\top) = n$, then 
    \begin{align}\label{eq:weightedEDproblem}
        \argmin_{M \in \cF} \Vert MX-Y \Vert_F^2 \ =\ \argmin_{M \in \cF} \Vert M-U \Vert^2_{XX^\top}\, , \ \  \text{ where } \ U \coloneqq YX^\top \left(XX^\top\right)^{-1}.
    \end{align}
\end{lemma}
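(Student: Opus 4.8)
The plan is to show that the two objective functions appearing in \eqref{eq:weightedEDproblem} differ by an additive constant that does not depend on $M$; the equality of the $\argmin$-sets over the arbitrary subset $\cF$ is then immediate, since minimizing sets are unchanged by adding a constant. First I would record how the hypothesis enters: the matrix $W \coloneqq XX^\top \in \RR^{n\times n}$ is always positive semidefinite, so the condition $\rank(XX^\top)=n$ says precisely that $W$ is positive definite, hence invertible. This is exactly what makes $U = YX^\top W^{-1}$ well-defined and makes $\Vert \cdot\Vert_{XX^\top}$, with the convention $\Vert A\Vert_W^2 = \operatorname{tr}(AWA^\top) = \Vert AW^{1/2}\Vert_F^2$, a genuine norm on $\RR^{m\times n}$.

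The computational core is a ``complete the square'' / normal-equations identity. Splitting $MX - Y = (M-U)X + (UX-Y)$ and expanding with respect to the Frobenius inner product $\langle A,B\rangle_F = \operatorname{tr}(AB^\top)$ gives
\begin{align*}
\Vert MX - Y\Vert_F^2 \;=\; \Vert (M-U)X\Vert_F^2 \;+\; 2\,\langle (M-U)X,\, UX-Y\rangle_F \;+\; \Vert UX-Y\Vert_F^2 .
\end{align*}
The cross term equals $\operatorname{tr}\!\big((M-U)\,(XX^\top U^\top - XY^\top)\big)$, and since $U^\top = W^{-1}XY^\top$ (using $W^\top = W$) we get $XX^\top U^\top = WW^{-1}XY^\top = XY^\top$, so the cross term vanishes identically in $M$. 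This is the single point where invertibility of $W$, i.e.\ the rank hypothesis, is used. Consequently
\begin{align*}
\Vert MX - Y\Vert_F^2 \;=\; \Vert (M-U)X\Vert_F^2 + \Vert UX-Y\Vert_F^2 \;=\; \Vert M - U\Vert_{XX^\top}^2 + \Vert UX-Y\Vert_F^2 ,
\end{align*}
the last step being simply $\Vert (M-U)X\Vert_F^2 = \operatorname{tr}\!\big((M-U)XX^\top(M-U)^\top\big)$. Since $\Vert UX-Y\Vert_F^2$ does not involve $M$, the two minimization problems in \eqref{eq:weightedEDproblem} have exactly the same set of minimizers over any $\cF \subseteq \cM_{m\times n}(\RR)$.

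I do not anticipate a genuine obstacle here: this is a short direct calculation, essentially the observation that $U = YX^\top(XX^\top)^{-1}$ is the ordinary least-squares solution of the \emph{unconstrained} problem $\min_{M\in\RR^{m\times n}}\Vert MX-Y\Vert_F^2$, after which the residual decomposes orthogonally and the constraint $M\in\cF$ only affects the term $\Vert(M-U)X\Vert_F^2$. The only thing requiring care is the bookkeeping of transposes when checking that the cross term vanishes, together with noting explicitly that this vanishing — equivalently the well-definedness of $U$ — uses $\rank(XX^\top)=n$ and nothing more.
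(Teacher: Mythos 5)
Your proof is correct and follows essentially the same route as the paper: both arguments show that $\Vert MX-Y\Vert_F^2$ and $\Vert M-U\Vert_{XX^\top}^2$ differ by an additive constant independent of $M$, the paper via the identity $\langle M,U\rangle_{XX^\top}=\langle MX,Y\rangle_F$ and direct expansion, you via the orthogonal splitting $MX-Y=(M-U)X+(UX-Y)$ and the vanishing cross term — the same completing-the-square computation in a slightly different arrangement.
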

\noindent In the lemma, $\Vert \cdot \Vert_{XX^\top}$ denotes the norm induced by the inner product given by the positive definite matrix $XX^\top$, which is defined as 
\begin{align}
    \langle A,B \rangle_{XX^\top} \,\coloneqq\,  \big\langle\,   A (XX^\top)^{1/2}, \, B (XX^\top)^{1/2} \, \big\rangle_F .
    \end{align}
\begin{proof}[Proof of \Cref{lem:squaredErrorED}]
We start by observing that 
\begin{align}
    \langle M,U \rangle_{XX^\top}  \,=\, \mathrm{tr}\left(MXX^\top U^\top\right)
\,=\, \mathrm{tr}\left(MXY^\top\right)
\,=\, \langle MX,Y \rangle_F \, ,
\end{align}
from which we obtain that
\begin{align}\begin{split} 
    \Vert MX-Y \Vert_F^2 &\,=\, \langle MX, MX \rangle_F - 2 \langle MX,Y \rangle_F + \langle Y,Y \rangle_F \\
    &\,=\, \langle M, M \rangle_{XX^\top} - 2 \langle M,U \rangle_{XX^\top} + \langle Y,Y \rangle_F \\ 
    &\,=\, \Vert M-U\Vert^2_{XX^\top} \,+\, (\langle Y,Y \rangle_F - \langle U,U \rangle_{XX^\top}) \, .
\end{split}
\end{align}
The last term, in parentheses, is constant in the sense that it does not depend on~$M$, but only on the data matrices $X$ and $Y$. This proves the assertion.
\end{proof}

Let us first consider the case where $XX^\top$ is (close to)  a multiple of the identity matrix.
For instance,  when $x_1,\ldots,x_d$ are samples drawn from a multivariate normal distribution $\mathcal{N}(0,\sigma I_n)$, then the matrix $XX^\top$ converges to $\sigma^2 I_n$ as the number of samples $d$ tends to infinity.
In that case,  \eqref{eq:weightedEDproblem} is the problem of finding {a point in the function space $\mathcal{F}$ that is  closest to the point~$U$  with respect to the standard Euclidean (i.e., Frobenius) distance.}
We will see in \Cref{prop:EYinv} and \Cref{sec:EDopt} that, when $\mathcal{F}$ is either~$\cI^G_{r,m \times n}$ or one of the real irreducible components of~$\cE^\sigma_{r,m \times n}$, the minimum in~\eqref{eq:weightedEDproblem} can be easily found using singular value decompositions and the Eckart--Young theorem, which we recall now.
Given a matrix  $U \in \cM_{m \times n}(\RR)$, we write  $ U = V_1 \cdot \Diag(\sigma_1, \sigma_2, \ldots, \sigma_{\min(m,n)}) \cdot V_2$ for its singular value decomposition, where $\sigma_1 \geq \sigma_2 \geq \cdots \geq \sigma_{\min(m,n)}$ and $V_1$ and~$V_2$ are orthogonal matrices.
\begin{theorem}[Eckart--Young] \label{thm:EY}
    A solution to $
        \argmin_{M \in \cM_{r, m \times n}(\RR)} \Vert M - U \Vert_F^2$
    is 
    \begin{align}\label{eq:EYminimum}
    U^\ast \,=\, V_1 \cdot \Diag\left(\sigma_1, \ldots, \sigma_{r}, 0, \ldots, 0\right) \cdot V_2 \, .
\end{align}
If $\sigma_r > \sigma_{r+1}$,
then $U^\ast$ is the unique local minimum.
\end{theorem}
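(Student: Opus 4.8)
The plan is to reduce to the case where $U$ is diagonal, establish global optimality via a classical singular-value inequality, and obtain the uniqueness statement by classifying the critical points of the distance function on the determinantal variety.

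\textbf{Reduction.} Since the Frobenius norm is invariant under left and right multiplication by orthogonal matrices, $\Vert M-U\Vert_F=\Vert V_1^\top M V_2^\top-\Sigma\Vert_F$ with $\Sigma\coloneqq\Diag(\sigma_1,\dots,\sigma_{\min(m,n)})$, and $M\mapsto V_1^\top M V_2^\top$ is a rank-preserving bijection of $\cM_{r,m\times n}(\RR)$ onto itself. Hence it suffices to minimize $\Vert N-\Sigma\Vert_F^2$ over $N\in\cM_{r,m\times n}(\RR)$ and conjugate back; the asserted minimizer becomes $N^\ast=\Diag(\sigma_1,\dots,\sigma_r,0,\dots,0)$. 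We may assume $r<\min(m,n)$, as otherwise $\cM_{r,m\times n}(\RR)=\RR^{m\times n}$ and everything is trivial.

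\textbf{Global optimality.} For any $N$ with $\rank N\le r$, Weyl's inequality for singular values, $\sigma_{i+j-1}(A+B)\le\sigma_i(A)+\sigma_j(B)$, applied with $A=\Sigma-N$, $B=N$, $i=k$, $j=r+1$, gives $\sigma_{k+r}(\Sigma)\le\sigma_k(\Sigma-N)$ for all $k\ge 1$, because $\sigma_{r+1}(N)=0$. Summing squares, $\Vert N-\Sigma\Vert_F^2=\sum_{k\ge 1}\sigma_k(\Sigma-N)^2\ge\sum_{k\ge 1}\sigma_{k+r}(\Sigma)^2=\sum_{i>r}\sigma_i^2=\Vert N^\ast-\Sigma\Vert_F^2$, so $N^\ast$ is a global minimizer. (Weyl's inequality itself comes from the Courant--Fischer min--max description of the singular values.)

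\textbf{Uniqueness of the local minimum.} Now assume the $\sigma_i$ are pairwise distinct; then $\rank\Sigma\ge\min(m,n)-1\ge r$. I will show that every local minimum of $N\mapsto\Vert N-\Sigma\Vert_F^2$ on $\cM_{r,m\times n}(\RR)$ equals $N^\ast$. If $N$ lies in the singular locus $\cM_{r-1,m\times n}(\RR)$, set $R\coloneqq\Sigma-N\ne 0$ (nonzero since $\rank\Sigma>r-1\ge\rank N$) and let $uw^\top$ be a leading rank-one term in the SVD of $R$; then $t\mapsto N+t\,uw^\top$ stays in $\cM_{r,m\times n}(\RR)$ and $\Vert N+t\,uw^\top-\Sigma\Vert_F^2=\Vert R\Vert_F^2-2t\,\sigma_1(R)+t^2<\Vert R\Vert_F^2$ for small $t>0$, so $N$ is not a local minimum. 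If instead $\rank N=r$ (the smooth locus of $\cM_{r,m\times n}$), a local minimum must be a critical point, i.e.\ $\Sigma-N$ lies in the normal space, which consists of matrices whose column span is orthogonal to that of $N$ and whose row span is orthogonal to that of $N$; since $N+(\Sigma-N)=\Sigma$ is diagonal with distinct entries, a short computation with the orthogonal projectors onto the column and row spans of $N$ forces $N=\Sigma_S$ for some $S\subseteq[\min(m,n)]$ with $|S|=r$, where $\Sigma_S$ keeps the diagonal entries indexed by $S$ and zeros the rest (and $\sigma_i\ne 0$ for $i\in S$ because $\rank N=r$). Finally, if $S\ne\{1,\dots,r\}$, then $S$ is not downward closed, so we may pick $i\in S$ and $j\notin S$ with $j<i$, hence $\sigma_j>\sigma_i>0$; perturbing $\Sigma_S$ within the $2\times 2$ block on coordinates $\{i,j\}$ by the rank-one matrix $\left(\begin{smallmatrix}\sigma_i & \varepsilon\\ \delta & \varepsilon\delta/\sigma_i\end{smallmatrix}\right)$ keeps the rank equal to $r$ and changes the squared distance by exactly $\varepsilon^2+\delta^2-2(\sigma_j/\sigma_i)\,\varepsilon\delta+\varepsilon^2\delta^2/\sigma_i^2$; setting $\varepsilon=\delta=t$ this equals $2t^2(1-\sigma_j/\sigma_i)+t^4/\sigma_i^2<0$ for small $t>0$. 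Hence $\Sigma_S$ is a saddle. Therefore the only local minimum is $N^\ast$, which is then also the unique global minimum; conjugating back proves the claim.

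\textbf{Main obstacle.} The global part is routine once Weyl's inequality is granted; the delicate part is the uniqueness argument—pinning down the critical points $\Sigma_S$ on the smooth locus via the normal-space computation (this is where distinctness of the $\sigma_i$ is essential) and verifying rigorously that no point of the singular locus $\cM_{r-1,m\times n}(\RR)$ can be a local minimum. Some additional bookkeeping is needed for degenerate cases, such as $r\ge\min(m,n)$ or a vanishing singular value pushing some $\Sigma_S$ into the singular locus.
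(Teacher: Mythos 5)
The paper itself gives no proof of \Cref{thm:EY}: it is recalled as a classical fact, and the only justification offered later is a pointer to the critical-point description in \cite[Example~2.3]{EDdeg}, so there is no internal argument to compare yours against. Judged on its own, your proof is correct and is essentially the standard route: the reduction to diagonal $\Sigma$ by orthogonal invariance, the global bound via Weyl's interlacing inequality $\sigma_{k+r}(\Sigma)\le\sigma_k(\Sigma-N)$ (the usual Eckart--Young/Mirsky argument), and for uniqueness a re-derivation of exactly the fact the paper imports from the ED-degree literature, namely that the critical points of $N\mapsto\Vert N-\Sigma\Vert_F^2$ on the smooth locus of $\cM_{r,m\times n}(\RR)$ are the $\binom{\min(m,n)}{r}$ coordinate truncations $\Sigma_S$, supplemented by your explicit second-order/saddle perturbation showing only $S=\{1,\dots,r\}$ is a local minimum and the rank-one descent direction showing no point of $\cM_{r-1,m\times n}(\RR)$ can be a local minimum. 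What your argument buys over the paper's citation is a self-contained proof of both the minimizer and the uniqueness claim; what it costs is the one step you gloss: the ``short computation'' that criticality at a smooth rank-$r$ point forces $N=\Sigma_S$. That step does go through --- from $PN=N=NQ$ and $P(\Sigma-N)=0=(\Sigma-N)Q$ one gets $N=P\Sigma=\Sigma Q$, hence $P$ commutes with $\Sigma\Sigma^\top$, and distinctness of the singular values (plus a small rank count to rule out components of $P$ in a repeated zero eigenspace of $\Sigma\Sigma^\top$ when $m\neq n$) forces $P$ and $Q$ to be coordinate projectors --- but since this is precisely where the hypothesis of pairwise distinct singular values enters, it should be written out rather than asserted.
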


In the case when $XX^\top$ is not a multiple of the identity, but another matrix of full rank, the optimization problem~\eqref{eq:weightedEDproblem} can be more complicated.
\begin{example}\label{ex:FisEverything}
    When $\cF = \mathcal{M}_{r,m \times n}(\RR)$ is the function space of a fully-connected linear network, then the problem \eqref{eq:weightedEDproblem} is actually always equivalent to  distance minimization under the standard Euclidean norm.
    More concretely, since the left-multiplication by the positive definite matrix $(XX^\top)^{1/2}$ is an automorphism on $\cF$, we have
    \begin{align}
    \label{eq:FisEverything}
       \left( \argmin_{M \in \, \mathcal{M}_{r,m \times n}(\RR)} \Vert M-U \Vert^2_{XX^\top} \right) \cdot \left( XX^\top \right)^{1/2} \ =\ 
        \argmin_{M \in \, \mathcal{M}_{r,m \times n}(\RR)} \Vert M-U(XX^\top)^{1/2}  \Vert^2_F \, .
    \end{align}
    Hence, we can solve~\eqref{eq:weightedEDproblem} by solving the right-hand side of~\eqref{eq:FisEverything} by Eckart--Young.
\end{example}

However, when the multiplication by positive definite matrices does not leave the function space $\cF$ invariant, then the argument presented in \Cref{ex:FisEverything} does not 
{apply} and the optimization problem in \eqref{eq:weightedEDproblem} can  in general not be solved using Eckart--Young. 
In such a situation, in order to find a global minimum of \eqref{eq:weightedEDproblem}, one might have to compute all complex critical points of the optimization problem, then discard the non-real ones, and finally determine the minimum among the remaining real critical points.
Here, when working over the complex numbers, we do not mean to replace the real Frobenius norm  in \eqref{eq:squaredErrorLoss} by the Hermitian inner product---but rather by the \emph{algebraic extension} of the real norm squared, which is $\mathrm{tr}(M^\top M)$ also for complex matrices $M$.
This is because taking the conjugate transpose is \emph{not} an algebraic operation.
The benefit of working with the algebraic extension is that the number of complex critical points of \eqref{eq:weightedEDproblem} is the same for almost all data matrices~$X$ and~$Y$ whenever~$\mathcal{F}$ is Zariski closed. This number provides an upper bound for the number of real critical points and, more importantly, serves as a certificate that one has indeed found \emph{all} critical points. Therefore, in general, it measures the algebraic complexity of the minimization problem~\eqref{eq:weightedEDproblem} and we refer to it as the \emph{squared-error degree} of the algebraic variety~$\cF$. 
\begin{definition} \label{def:squaredErrorDegree}
 Let $\cF$ be a subvariety of $ \cM_{m \times n}(\RR)$, and $X \in \RR^{n \times n}$ and $Y \in \RR^{m \times n}$ generic matrices.   
 The \emph{squared-error degree} of $\cF$ is the number of complex critical points of the squared-error loss \eqref{eq:squaredErrorLoss} restricted to~$\cF$. We denote it by $\squerdeg(\cF)$.
\end{definition}

If $X$ is of the special form such that $XX^\top$ is a multiple of the identity matrix, then the squared-error degree specializes to the \emph{Euclidean distance degree} of~$\cF$, introduced in~\cite{EDdeg}. We denote it by $\deged(\cF)$.
By \Cref{ex:FisEverything}, we see that 
\begin{align}\label{eq:SEdegMrmn}
 \squerdeg (\cM_{r,m\times n}(\RR)) \,=\, \deged(\cM_{r,m\times n}(\RR))  \, ,
\end{align}
but in general, the squared-error degree is not equal to the Euclidean distance degree. 
\begin{example}
    \label{ex:hankel}
    Let $\cF$ be the space of $2 \times 2$ Hankel matrices $\left(\begin{smallmatrix}
        a & b \\ b & c
    \end{smallmatrix}\right)$ of rank at most one. 
    A computation in \texttt{Macaulay2}~\cite{M2} shows that 
    $\deged(\cF) = 2$ and $\squerdeg(\cF)=4$.
\end{example}
\begin{remark}
    There is also a notion of \emph{generic} Euclidean distance degree of a subvariety $\cF \subset \cM_{m \times n}(\RR)$, see \cite[Chapter~2]{MetricAlgGeo}. It is the number of complex critical points of the map  \begin{align}\label{eq:genericEDminimization}
        \cF \longrightarrow \RR, \quad M \mapsto \sum_{i=1}^m \sum_{j=1}^n \lambda_{ij}\cdot (m_{ij} - u_{ij})^2,
    \end{align} where {$U=(u_{ij})_{i,j}, \, \Lambda =(\lambda_{ij})_{i,j}\in \cM_{m \times n}(\RR)$} are generic matrices.
    Squared-error loss minimization in \eqref{eq:weightedEDproblem} is a special case of \eqref{eq:genericEDminimization}, where the positive definite matrix $XX^\top$ determines the weight matrix~$\Lambda$. Hence, the following relations hold:
\begin{align}\label{eq:degreeRelation}
        \deged(\cF) \,\leq\, \squerdeg(\cF)\, \leq \, \mathrm{genericEDdegree}(\cF) \, .
    \end{align}
    Both inequalities can be equalities or strict inequalities. 
    In \Cref{ex:hankel}, the three optimization degrees in~\eqref{eq:degreeRelation} are $2 < 4 = 4$.
    For $\cF = \cM_{1,2 \times 2}(\RR)$, they are $2 = 2 < 6$.
\end{remark}

For the case that $\mathcal{F}$ is either $\cI^G_{r,m \times n}$ or one of the real irreducible components of $\cE^\sigma_{r,m \times n}$, we will show in \Cref{prop:EYinv} and \Cref{sec:EDopt} that the squared-error loss minimization in \eqref{eq:weightedEDproblem} can be reduced to standard Euclidean distance minimization and solved explicitly using Eckart--Young, which in particular implies that $\squerdeg(\cF)  = \deged(\cF)$.
For the irreducible components of $\cE^\sigma_{r,m \times n}$, we will make use of the following crucial property of squared-error minimization.
\begin{lemma} \label{lem:directProduct}
    Let $\cF_1 \subset \cM_{m_1 \times n_1}(\RR)$ and $\cF_2 \subset \cM_{m_2 \times n_2}(\RR)$ be subvarieties. Consider their direct product $\cF \coloneqq \cF_1 \times \cF_2$. I.e., the elements of $\cF$ are block diagonal matrices $M_1 \oplus M_2$ with blocks $M_i \in \cF_i$.
    For all matrices $X \in \RR^{(n_1+n_2) \times d}$ and $Y \in \RR^{(m_1+m_2) \times d}$  with $\rank(XX^\top)=n_1+n_2$, every minimizer $M$ of the squared-error loss minimization \eqref{eq:weightedEDproblem} is of the form $M = M_1 \oplus M_2$, where $M_i$ is a minimizer of a squared-error loss minimization on $\cF_i$. 
\end{lemma}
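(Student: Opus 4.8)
The plan is to bypass the weighted-norm reformulation in \eqref{eq:weightedEDproblem} and reason directly with the squared-error loss \eqref{eq:squaredErrorLoss}; passing through $\Vert M-U\Vert_{XX^\top}^2$ only obscures the argument, since neither $XX^\top$ nor $U=YX^\top(XX^\top)^{-1}$ respects the block structure in general. First I would partition the data conformally with the two blocks, writing $X=\left(\begin{smallmatrix}X_1\\X_2\end{smallmatrix}\right)$ and $Y=\left(\begin{smallmatrix}Y_1\\Y_2\end{smallmatrix}\right)$ with $X_i\in\RR^{n_i\times d}$ and $Y_i\in\RR^{m_i\times d}$. For a block-diagonal $M=M_1\oplus M_2\in\cF$ one has $MX=\left(\begin{smallmatrix}M_1X_1\\M_2X_2\end{smallmatrix}\right)$, and therefore
\begin{align}\label{eq:sqErrLossSplits}
	\Vert MX-Y\Vert_F^2 \,=\, \Vert M_1X_1-Y_1\Vert_F^2 + \Vert M_2X_2-Y_2\Vert_F^2 \, .
\end{align}
The right-hand side is a sum of two squared-error losses, the $i$-th of which depends only on $M_i$ and on the data pair $(X_i,Y_i)$.

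Since $\cF=\cF_1\times\cF_2$, minimizing \eqref{eq:sqErrLossSplits} over $M\in\cF$ is the same as minimizing over $(M_1,M_2)\in\cF_1\times\cF_2$ with the two factors ranging independently; and a sum of two functions of disjoint variable groups attains its minimum exactly at those pairs whose components minimize the respective summands. Hence every minimizer is of the form $M=M_1\oplus M_2$ with $M_i\in\argmin_{N_i\in\cF_i}\Vert N_iX_i-Y_i\Vert_F^2$, which is the assertion --- provided each of the two sub-problems is itself a genuine squared-error minimization, i.e.\ provided $\rank(X_iX_i^\top)=n_i$.

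It remains to check this rank condition, and this is the one place where the hypothesis $\rank(XX^\top)=n_1+n_2$ is used. Over $\RR$ one has $\rank(XX^\top)=\rank(X)$, because $XX^\top v=0$ implies $\Vert X^\top v\Vert^2=v^\top XX^\top v=0$ and hence $X^\top v=0$; so the hypothesis is equivalent to $X$ having full row rank $n_1+n_2$. Selecting a subset of the rows of a full-row-rank matrix leaves them linearly independent, so $X_1$ and $X_2$ have full row ranks $n_1$ and $n_2$ respectively, whence $\rank(X_iX_i^\top)=\rank(X_i)=n_i$. (This bound also makes $M_i\mapsto\Vert M_iX_i-Y_i\Vert_F^2$ coercive on the closed variety $\cF_i$, so the minimizers under discussion exist.) I do not expect a genuine obstacle here: the only subtlety is methodological, namely to work with \eqref{eq:squaredErrorLoss}, in which the block splitting \eqref{eq:sqErrLossSplits} is transparent, rather than with the weighted-distance form \eqref{eq:weightedEDproblem}, where the off-diagonal blocks of $XX^\top$ create the spurious appearance of coupling between the two factors.
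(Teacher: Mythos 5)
Your proof is correct, and its core is the same as the paper's: split $X$ into the row blocks $X_1,X_2$ conforming with the two factors and use that the squared Frobenius norm of a vertically stacked matrix is the sum over the blocks, so that the objective decouples into independent problems on $\cF_1$ and $\cF_2$. The difference is purely in how the decoupling is reached. The paper stays inside the weighted-distance formulation \eqref{eq:weightedEDproblem}: it writes $\Vert M-U\Vert^2_{XX^\top}=\Vert (M-U)X\Vert_F^2$, first replaces $U$ by its orthogonal projection (with respect to $\langle\cdot,\cdot\rangle_{XX^\top}$) onto the space of block diagonal matrices so that $U=U_1\oplus U_2$, and then splits; at the end it invokes \Cref{lem:squaredErrorED} to recognize the two blockwise problems as squared-error minimizations with $\rank(X_iX_i^\top)=n_i$. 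You instead split $Y$ into $Y_1,Y_2$ and work directly with $\Vert MX-Y\Vert_F^2$, which makes the decoupling immediate and removes the projection step entirely; the price is none, and you gain explicit data matrices $(X_i,Y_i)$ for the subproblems as well as an explicit verification (only asserted in the paper) that full row rank of $X$ passes to $X_1$ and $X_2$, so each subproblem genuinely satisfies the hypothesis of \Cref{lem:squaredErrorED}. Your remark on existence of minimizers via coercivity is not needed for the statement as phrased (it quantifies over minimizers, and the argument that a minimizer of a sum over independent variables must minimize each summand needs no existence assumption), but it does no harm.
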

\begin{proof}
    The minimization problem we need to solve is
\begin{align}\label{eq:directProductSEL}
        \argmin_{M = M_1 \oplus M_2 \,\in\, \cF_1 \times \cF_2} \Vert M-U \Vert^2_{XX^\top} \ =\,
        \argmin_{M = M_1 \oplus M_2 \,\in\,  \cF_1 \times \cF_2} \Vert (M-U)X \Vert^2_{F} \, .
    \end{align}
   After projecting~$U$ orthogonally onto the linear space of block diagonal matrices (with respect to the inner product $\langle \cdot, \cdot \rangle_{XX^\top}$), we may assume that $U = U_1 \oplus U_2$ has the same block diagonal form as $M$.
   Now, we let $X_1$ be the matrix that consists of the first $n_1$ rows of $X$, and let $X_2$ consist of the remaining $n_2$ rows, i.e., $X = \left( \begin{smallmatrix}
       X_1 \\ X_2
   \end{smallmatrix} \right)$.
   Then, $(M-U)X = \left( \begin{smallmatrix}
       (M_1-U_1)X_1 \\ (M_2-U_2)X_2
   \end{smallmatrix} \right)$,
   which shows that \eqref{eq:directProductSEL} is
   \begin{align*}
       \argmin_{M_1 \oplus M_2 \in \cF_1 \times \cF_2} \sum_{i=1}^2 \Vert (M_i - U_i) X_i \Vert^2_F
       \ = \,
       \argmin_{M_1 \in \cF_1} \Vert M_1 - U_1 \Vert^2_{X_1X_1^\top} \,\oplus\,
              \argmin_{M_2 \in \cF_2} \Vert M_2 - U_2 \Vert^2_{X_2X_2^\top}. 
   \end{align*}
   By \Cref{lem:squaredErrorED}, the last two minimization problems are squared-error loss minimizations on the individual factors $\cF_i$, since~$\rank(X_iX_i^\top) = n_i$.
\end{proof}

We point out that Euclidean distance minimization on a space of functions is of interest for learning besides its relation to the squared-error loss.
For example, consider the scenario where one has learned a function $f$ that is not equivariant under some given group action, but reasonably close to the space of equivariant function.
For instance, $f$ could be obtained by training some neural network with a regularized loss that favors functions that are almost equivariant. 
Then, one can try to turn $f$ into an equivariant function by finding the closest point on the space of equivariant functions to $f$.
For a linear function $f$ represented by a matrix $U$, that would mean to find a matrix  $M \in \cE^\sigma_{r, m \times n}$ that minimizes~$\Vert M-U \Vert^2_F$.

\subsection{Algebraic geometry of similarity transforms}\label{sec:simcomm}
 For natural numbers $m,n$ and $r\leq \min (m,n),$ the variety $\cM_{r,m\times n}$ of $m\times n$ matrices of rank at most~$r$ has dimension 
\begin{align}\label{eq:dimMrmn}
    \dim(\cM_{r,m\times n}) \,=\, r\cdot (m+n-r)\, ,
\end{align}
cf.~\cite[Proposition~12.2]{harris}. 
We remind our readers that the dimension of an affine variety means the Krull dimension of its coordinate ring. 
\begin{remark}[Real vs. complex]\label{rem:dimreal}
Since all the coefficients of the contributing polynomials are real, the real variety of real-valued points 
$\cM_{r,m\times n}(\mathbb{R}) = \cM_{r,m\times n}(\mathbb{C}) \cap \mathbb{R}^{m\times n}$ of $\cM_{r,m\times n}$ has the same dimension as the complex variety $\cM_{r,m\times n}$. The points of $\cM_{r,m\times n}(\RR)$ are real $m \times n$ matrices of rank at most~$r$. 
\end{remark}
As was pointed out in \cite[Example~19.10]{harris}, it is proven in \cite[Example~14.4.11]{FultonIC84} that the degree of $\cM_{r,m\times n}$ is
\begin{align}\label{eq:degMrmn}
    \deg\left( \cM_{r,m\times n} \right) \ =\, \prod_{i=0}^{n-r-1} \frac{(m+i)!\cdot i!}{(r+i)!\cdot (m-r+i)!}  \, .
\end{align}
\begin{fact}\label{lem:sing-mat}
Let $0 < r < n$.
A matrix $M\in \cM_{r,m\times n}$ is a singular point of $\cM_{r,m\times n}$ if and only if its rank is strictly smaller than~$r$, i.e., 
{$\Sing(\cM_{r,m\times n})=\cM_{r-1,m\times n}$.}
\hfill \qed
\end{fact}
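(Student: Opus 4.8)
The plan is to establish the two inclusions $\cM_{r-1,m\times n}\subseteq\Sing(\cM_{r,m\times n})$ and $\Sing(\cM_{r,m\times n})\subseteq\cM_{r-1,m\times n}$ by a normal-form reduction followed by Zariski tangent space computations. The starting observation is that $\GL_m(\CC)\times\GL_n(\CC)$ acts on $\CC^{m\times n}$ by $(g,h)\cdot M=gMh^{-1}$; this action preserves $\cM_{r,m\times n}$ and each of its rank strata, and every group element is a linear automorphism of the ambient space, hence maps $\Sing(\cM_{r,m\times n})$ onto itself. Since a matrix of rank exactly $s$ lies in the orbit of the normal form $E_s\coloneqq\left(\begin{smallmatrix}\Id_s&0\\0&0\end{smallmatrix}\right)$ by Gaussian elimination, it suffices to decide, for each $0\le s\le r$, whether $E_s$ is a smooth or a singular point of $\cM_{r,m\times n}=V(I_{r,m\times n})$, where $I_{r,m\times n}$ is the (prime, hence radical) ideal generated by all $(r+1)\times(r+1)$ minors of the matrix of coordinates.

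For the first inclusion I would argue that a point of rank $s<r$ is singular. Writing a matrix near $E_s$ as $E_s+\varepsilon N$ with $N$ block-partitioned compatibly with the block structure of $E_s$, every monomial in the Leibniz expansion of a $(r+1)\times(r+1)$ minor of $E_s+\varepsilon N$ picks one entry per selected row and column; all entries outside the top-left $s\times s$ block carry a factor $\varepsilon$, and at most $s$ of the $r+1$ chosen entries can lie in that block. Hence every such minor is divisible by $\varepsilon^{(r+1)-s}$, and $(r+1)-s\ge 2$ because $s<r$, so every generator of $I_{r,m\times n}$ vanishes to order $\ge 2$ at $E_s$. Consequently the Jacobian of the generators vanishes at $E_s$, and the Zariski tangent space there is all of $\CC^{m\times n}$, of dimension $mn$. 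Since $mn-r(m+n-r)=(m-r)(n-r)>0$ (here one also uses $r<m$), this strictly exceeds $\dim\cM_{r,m\times n}=r(m+n-r)$ by \eqref{eq:dimMrmn}, so $E_s$, and hence every rank-$s$ matrix with $s<r$, is singular; this gives $\cM_{r-1,m\times n}\subseteq\Sing(\cM_{r,m\times n})$.

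For the reverse inclusion I would show that a rank-$r$ matrix is smooth. Repeating the expansion with $s=r$, the same divisibility bound forces any $(r+1)\times(r+1)$ minor with a nonzero linear term at $E_r$ to have row set $[r]\cup\{i\}$ and column set $[r]\cup\{j\}$ for some $i,j>r$; a Schur-complement computation shows such a minor equals $\varepsilon\,N_{ij}+O(\varepsilon^2)$, where $N_{ij}$ is the coordinate of $N$ in the $(i,j)$ position of the bottom-right block. The differentials of the generators of $I_{r,m\times n}$ at $E_r$ therefore span exactly the $(m-r)(n-r)$ coordinate functionals of that block, so the Zariski tangent space at $E_r$ has dimension $mn-(m-r)(n-r)=r(m+n-r)=\dim\cM_{r,m\times n}$ (using that $\cM_{r,m\times n}$ is irreducible, hence equidimensional). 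Thus $E_r$ is a smooth point and $\Sing(\cM_{r,m\times n})$ contains no rank-$r$ matrix, which is the inclusion $\Sing(\cM_{r,m\times n})\subseteq\cM_{r-1,m\times n}$. Alternatively, this last step needs no computation at all: the rank-$r$ matrices form a single $\GL_m\times\GL_n$-orbit, hence a smooth locally closed subvariety, open and dense in the irreducible variety $\cM_{r,m\times n}$; as $\Sing(\cM_{r,m\times n})$ is closed, proper, and invariant under the action, it cannot meet this dense orbit. Combining the two inclusions proves $\Sing(\cM_{r,m\times n})=\cM_{r-1,m\times n}$.

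The first inclusion is immediate once one records that every $(r+1)\times(r+1)$ minor is divisible by $\varepsilon^{(r+1)-s}$; the dimension comparison there genuinely uses $\dim\cM_{r,m\times n}=r(m+n-r)$ from \eqref{eq:dimMrmn}, the inequalities $r<m$ and $r<n$ (so that $(m-r)(n-r)>0$), and the irreducibility of $\cM_{r,m\times n}$. The one computational point on the direct route to the second inclusion --- identifying the minors with row set $[r]\cup\{i\}$ and column set $[r]\cup\{j\}$ as the only ones with a nonzero linear term and evaluating that term by a Schur complement --- is the part worth carrying out carefully, but it is bypassed by the orbit argument. Beyond that, the proof only assembles standard facts about determinantal varieties, so I do not anticipate a serious obstacle.
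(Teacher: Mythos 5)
Your proof is correct. There is nothing in the paper to compare it against: the statement is recorded as a \emph{Fact} about determinantal varieties and closed without any argument, so the authors treat it as classical. Your route is the standard one --- reduce to the normal forms $E_s$ via the $\GL_m\times\GL_n$-action, which acts by linear automorphisms preserving $\cM_{r,m\times n}$ and its rank strata, and then compare Zariski tangent spaces with $\dim\cM_{r,m\times n}=r(m+n-r)$ --- and both directions are sound, including the alternative orbit argument for smoothness (the rank-$r$ locus is a single dense orbit, while the singular locus is closed, proper, and invariant). Two points are worth keeping explicit, and you do flag both. First, in the inclusion $\cM_{r-1,m\times n}\subseteq\Sing(\cM_{r,m\times n})$ the tangent-space computation genuinely needs that the $(r+1)\times(r+1)$ minors generate the full vanishing ideal: the kernel of the Jacobian of an arbitrary set of equations cutting out the variety only bounds the true Zariski tangent space from above, which proves nothing about singularity, so the classical primality (hence radicality) of determinantal ideals is a real input there; by contrast, the smoothness direction needs no such input, since for that one only compares the kernel dimension with the local dimension, using irreducibility. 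Second, the hypothesis $0<r<n$ as literally stated is not enough --- if $r\ge m$ the variety is the whole smooth ambient space and the claimed equality fails --- so your use of $r<m$, i.e.\ the implicit assumption $r<\min(m,n)$, is exactly how the paper applies the Fact elsewhere (e.g.\ the case distinction $r<\min(m,k)$ in \Cref{cor:dimdeginv}).
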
 

The Eckart--Young theorem can be extended to also list all critical points of minimizing the distance from $\cM_{r, m\times n}(\RR)$ to a given matrix $U$.
{In fact, every critical point is real and is obtained from a singular value decomposition of $U$ by setting $\min(m,n)-r$ singular values to zero, similarly as in \Cref{thm:EY}, see \cite[Example~2.3]{EDdeg}.}
Hence, the Euclidean distance degree of the determinantal variety $\cM_{r,m\times n}(\RR)$ is $\binom{\min(m,n)}{r}$.
By \eqref{eq:SEdegMrmn}, this number is also the squared-error degree, i.e., 
\begin{align}\label{eq:EDdegMrmn}
 \squerdeg (\cM_{r,m\times n}(\RR)) \,=\, \deged(\cM_{r,m\times n}(\RR)) \,=\, \binom{\min(m,n)}{r} \, . 
\end{align}

In our investigations, we will commonly perform base changes of the matrices in~$\cM_{m \times n}$.
For a subvariety $\cX \subset \cM_{m \times n}$ and any $T\in \GL_n$, we denote by $\cX^{\cdot T}$ the image of $\cX$ under the linear isomorphism
\begin{align}\label{eq:baseChangeLeft}
 \ \cdot T\colon \ \cM_{m\times n} \longrightarrow \cM_{m\times n}, \quad M \mapsto MT \, . 
\end{align}

\begin{lemma}\label{lem:baseChange}
    Let $\cX \subset \cM_{m \times n}$ be a subvariety and let $T\in \GL_n$.
    Then, $\dim(\cX^{\cdot T}) = \dim \cX$, $\deg(\cX^{\cdot T}) = \deg \cX$, 
    $\squerdeg(\cX^{\cdot T}) = \squerdeg(\cX)$,
    $\Sing(\cX^{\cdot T}) = \Sing(\cX)^{\cdot T}$, and 
    $(\cX^{\cdot T}) \cap \cM_{r, m \times n} = (\cX \cap \cM_{r, m \times n})^{\cdot T}$
    for any $r \le \min(m,n)$.
\end{lemma}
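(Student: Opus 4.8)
The claim bundles together five consequences of the fact that $\cdot T\colon \cM_{m\times n}\to\cM_{m\times n}$ is a linear isomorphism that moreover induces an isomorphism of the Zariski topology on $\cM_{m\times n}$. The overarching strategy is: identify $\cdot T$ as a linear automorphism of $\CC^{m\times n}$, note it restricts to an isomorphism of varieties $\cX\xrightarrow{\sim}\cX^{\cdot T}$, and then invoke, for each of the five assertions, the corresponding invariance property of the quantity in question under linear isomorphisms.

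\textbf{Step 1: Dimension and degree-type invariants.} Since $\cdot T$ is a linear isomorphism of the ambient space $\CC^{m\times n}$ restricting to an isomorphism $\cX\cong\cX^{\cdot T}$ of (quasi-)affine varieties, the equality $\dim(\cX^{\cdot T})=\dim\cX$ is immediate: isomorphic varieties have the same Krull dimension. For $\deg(\cX^{\cdot T})=\deg\cX$, I would argue that degree is preserved under invertible \emph{linear} (in fact invertible affine) changes of coordinates on the ambient affine/projective space—this is standard, as degree can be computed by intersecting with a generic linear subspace of complementary dimension, and $\cdot T$ maps generic linear subspaces to generic linear subspaces bijectively. (One should take the projective closure to speak of degree, but $\cdot T$ extends to a linear automorphism of $\PP^{mn}$ fixing the hyperplane at infinity, so this causes no trouble.)

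\textbf{Step 2: Singular locus and intersection with $\cM_{r,m\times n}$.} Because $\cdot T$ is an isomorphism of varieties, it carries smooth points to smooth points and singular points to singular points; hence $\Sing(\cX^{\cdot T})=(\Sing\cX)^{\cdot T}$. For the last identity, observe that $\cM_{r,m\times n}$ is itself invariant under $\cdot T$ since right-multiplication by an invertible matrix does not change the rank; more precisely $(\cM_{r,m\times n})^{\cdot T}=\cM_{r,m\times n}$. Therefore $\cdot T$ maps $\cX\cap\cM_{r,m\times n}$ bijectively onto $\cX^{\cdot T}\cap(\cM_{r,m\times n})^{\cdot T}=\cX^{\cdot T}\cap\cM_{r,m\times n}$, which is exactly the asserted equality $(\cX^{\cdot T})\cap\cM_{r,m\times n}=(\cX\cap\cM_{r,m\times n})^{\cdot T}$.

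\textbf{Step 3: Squared-error degree.} This is the one requiring a genuine (if short) argument rather than a citation. By \Cref{def:squaredErrorDegree}, $\squerdeg(\cX)$ counts complex critical points of $M\mapsto\Vert MX-Y\Vert_F^2$ on $\cX$ for generic $X\in\RR^{n\times n}$, $Y\in\RR^{m\times n}$. For $\cX^{\cdot T}$ with $T\in\GL_n$, write a general point as $M'=MT$ with $M\in\cX$; then $\Vert M'X-Y\Vert_F^2=\Vert M(TX)-Y\Vert_F^2$. Since $T$ is invertible, $X\mapsto TX$ is a bijection on $\RR^{n\times n}$ preserving genericity, and the substitution $M=M'T^{-1}$ is an isomorphism $\cX^{\cdot T}\xrightarrow{\sim}\cX$; hence the critical-point equations for the loss on $\cX^{\cdot T}$ with data $(X,Y)$ correspond bijectively to those for the loss on $\cX$ with data $(TX,Y)$. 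As $(TX,Y)$ is generic whenever $(X,Y)$ is, the number of complex critical points agrees, so $\squerdeg(\cX^{\cdot T})=\squerdeg(\cX)$. (One should note $\rank((TX)(TX)^\top)=n$ still holds, so the loss is honestly defined; and the count is generically constant by Zariski-closedness of $\cX^{\cdot T}$, as discussed before \Cref{def:squaredErrorDegree}.)

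\textbf{Main obstacle.} None of the five assertions is deep; the only place where care is needed is the squared-error-degree claim, where one must check that right-multiplying the variety by $T$ corresponds cleanly to left-multiplying the data matrix $X$ by $T$, and that this reparametrization respects genericity and the full-rank condition on $XX^\top$. Everything else is a direct appeal to the fact that $\cdot T$ is a linear automorphism of the ambient space that preserves rank.
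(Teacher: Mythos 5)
Your proposal is correct and follows essentially the same route as the paper's proof: the map $\cdot T$ is a linear isomorphism, so it preserves dimension, degree, and regular/singular points; rank invariance under right-multiplication by $T$ gives the intersection statement; and the squared-error degree is unaffected because genericity of the data is preserved under the reparameterization. Your Step 3 merely spells out in detail the genericity argument the paper states in one line.
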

\begin{proof}
Since \eqref{eq:baseChangeLeft} is a linear isomorphism, it preserves the dimension and the degree, and maps regular points to regular points.
Due to the genericity of~$X$ in \Cref{def:squaredErrorDegree}, the squared-error degree is not affected by that isomorphism.
For the last assertion, we observe that every matrix $M \in \cM_{m \times n}$ satisfies that $\rank(MT) = \rank(M)$, since $T$ has full rank.
\end{proof}
We point out that the Euclidean distance degree (short, ED degree) is in general not preserved by the isomorphism~\eqref{eq:baseChangeLeft} as the following example demonstrates.

\begin{example}\label{ex:degedchange}
Let $m=1$ and $n=3$. Consider the circle $\cX \subset \cM_{1 \times 3}(\CC)$ defined by the equation $M_{1,1}^2+M_{1,2}^2-M_{1,3}^2 = 0$. The ED degree of $\cX(\RR)$ in the standard Euclidean distance
\begin{align}\label{eq:stanardEDdist} 
d(0,M) \,\coloneqq \, \lVert M \rVert ^2 \,=\, \mathrm{tr}(M M^\top) \,=\, M_{1,1}^2+M_{1,2}^2+M_{1,3}^2
\end{align}
is equal to~$2$. 
Throughout this article, we often consider permutation matrices and diagonalize them. 
For instance to diagonalize the permutation matrix $P\coloneqq\left( \begin{smallmatrix}
0&0&1 \\ 1 &0&0 \\ 0&1&0
\end{smallmatrix} \right)$, we use the symmetric Vandermonde matrix $$T \,\coloneqq \,V\!\left(1,\zeta_3,\zeta_3^2\right) \,=\, 
\begin{pmatrix}
    1&1&1 \\ 1 &\zeta_3&\zeta_3^2 \\ 1&\zeta_3^2&\zeta_3
\end{pmatrix},$$
where $\zeta_3$ is any primitive third root of unity.
In fact, we obtain $T^{-1}PT = \operatorname{diag}(1,\zeta_3^2,\zeta_3)$. The ED degree of $\cX^{\cdot T}(\RR)$ is \underline{not} equal to~$2$ anymore.  In fact, the ED degree of $\cX^{\cdot T}$ counts the critical points of the function $d(0,MT-U)$ over all $M \in \cX(\RR)$ for fixed, generic~$U$. Since~$U$ is generic, we can replace it by~$UT$.  Hence, the ED degree of~$\cX^{\cdot T}$ under the standard Euclidean distance \eqref{eq:stanardEDdist} is equal to the ED degree of $\cX$ under the modified Euclidean distance 
\begin{align*}
 d_T(0,M) \, \coloneqq \, d(0,MT)  \,=\, 3\cdot (M_{1,1}^2 + 2M_{1,2}M_{1,3}) \,.
 \end{align*}
A computation in \texttt{Macaulay2} shows that the ED degree of $\cX(\RR)$ under $d_T$ is~$4$.
\end{example}

However, the ED degree does keep unchanged under the multiplication by an orthogonal matrix by the right.
\begin{lemma}\label{lem:EDorth}
Consider a variety $\mathcal{X}\subset \cM_{m\times n}(\RR)$ and let $T\in O(n)$ be an orthogonal matrix. Then $\deged (\cX)=\deged(\cX^{\cdot T})$.
\end{lemma}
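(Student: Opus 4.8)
The plan is to exploit that right multiplication by an orthogonal matrix is an isometry for the Frobenius norm, so that the ED-degree optimization problem for $\cX^{\cdot T}$ is, after an invertible twist of the data matrix, literally the ED-degree problem for $\cX$. First I would recall that $\deged(\cX^{\cdot T})$ is, for a fixed generic $U \in \CC^{m \times n}$, the number of complex critical points of the function $M' \mapsto \lVert M' - U \rVert_F^2$ restricted to the smooth locus of $\cX^{\cdot T}$. The map $\cdot T \colon \cX \to \cX^{\cdot T}$, $M \mapsto MT$, is a linear isomorphism of varieties that carries the smooth locus to the smooth locus (by \Cref{lem:baseChange}, $\Sing(\cX^{\cdot T}) = \Sing(\cX)^{\cdot T}$), and pulling the objective back along it gives the function $M \mapsto \lVert MT - U \rVert_F^2$ on $\cX$.

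Next I would use that $T \in O(n)$ means $TT^\top = \Id_n$, hence for any matrix $A$ one has $\lVert A T^\top \rVert_F^2 = \operatorname{tr}(A T^\top T A^\top) = \operatorname{tr}(A A^\top) = \lVert A \rVert_F^2$; applying this with $A = MT - U$ yields $\lVert MT - U \rVert_F^2 = \lVert (MT - U)T^\top \rVert_F^2 = \lVert M - U T^\top \rVert_F^2$. Thus the pulled-back objective on $\cX$ is exactly the squared Frobenius distance to the matrix $U T^\top$. Since a biregular isomorphism sends critical points of a function bijectively to critical points of its pullback, $\deged(\cX^{\cdot T})$ equals the number of complex critical points of $M \mapsto \lVert M - U T^\top \rVert_F^2$ on $\cX$.

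Finally, because $T$ is invertible, $U \mapsto U T^\top$ is a linear automorphism of $\CC^{m \times n}$, so $U T^\top$ is generic whenever $U$ is; hence the last count is precisely $\deged(\cX)$, which finishes the proof. I do not anticipate a genuine obstacle here: the only points that need a little care are the identification of critical points across the isomorphism $\cdot T$ (handled by passing to smooth loci via \Cref{lem:baseChange}) and the observation that genericity of $U$ is unchanged under the invertible substitution $U \mapsto U T^\top$. It is instructive to contrast this with \Cref{ex:degedchange}, where $T$ is a non-orthogonal Vandermonde matrix: there $\lVert MT - U \rVert_F^2$ becomes a genuinely different positive definite quadratic form in $M$ (namely $d_T$), which is exactly why the ED degree can jump from $2$ to $4$.
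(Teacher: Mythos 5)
Your proposal is correct and follows essentially the same route as the paper: the paper's proof also uses the trace identity $\operatorname{tr}\left((X-U)T\,((X-U)T)^\top\right)=\operatorname{tr}\left((X-U)(X-U)^\top\right)$ to identify the objective on $\cX^{\cdot T}$ with the standard Euclidean distance objective on $\cX$, concluding that the critical points correspond bijectively. Your additional remarks (matching smooth loci via \Cref{lem:baseChange} and noting that $U\mapsto UT^\top$ preserves genericity) only make explicit what the paper leaves implicit.
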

\begin{proof}
Let $U\in \cM_{m\times n}(\RR)$ be generic and $T$ an orthogonal $n\times n$ matrix. Then 
\begin{align} \label{eq:EDproblemOrthTransform}
\begin{split}
    \min_{X\in \cX} \,  \lVert XT-UT \rVert ^2 & \,=\, \min_{X\in \cX} \left( \operatorname{tr}\left((X-U)T\cdot ((X-U)T)^\top\right) \right)\\
    & \,=\, \min_{X\in \cX} \left( \operatorname{tr}\left((X-U)\cdot (X-U)^\top\right) \right)\\
    & \,=\, \min_{X\in \cX} \,  \lVert X-U \rVert ^2 ,
    \end{split}
\end{align}
resulting in the same number of critical points for minimizing over $\cX$ and $\cX^{\cdot T}$, resp.
\end{proof}

Depending on whether we study invariance or equivariance, we perform base changes on only one side of the matrices, or on both sides. For the latter, we write $M^{\sim T} := T^{-1}MT$ for given matrices $M \in \cM_{n \times n}$ and $T \in \GL_n$.
\begin{lemma}\label{lem:invAndEquivUnderbaseChange}
Consider the matrices $M \in \cM_{m \times m}$, $P \in \cM_{n \times n}$, and $T \in \GL_n$.
Then, $MP=M$ if and only if $M^{\cdot T} P^{\sim T} = M^{\cdot T}$.
In the case that $m=n$, we moreover have that $MP=PM$ if and only if $M^{\sim T}P^{\sim T} = P^{\sim T}M^{\sim T}$.
\hfill \qed
\end{lemma}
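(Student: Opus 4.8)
The plan is to deduce both equivalences from one multiplicativity identity for the base-change maps, so that essentially no computation is left. Recall from \eqref{eq:baseChangeLeft} that $M^{\cdot T} = MT$, and read $M$ as an $m \times n$ matrix so that the product $MP$ with $P \in \cM_{n \times n}$ is defined (note $M^{\cdot T}$ is then again $m \times n$). The one identity I would isolate is
\[
M^{\cdot T}\, P^{\sim T} \;=\; (MT)\,(T^{-1}PT) \;=\; MPT \;=\; (MP)^{\cdot T},
\]
i.e.\ right multiplication by $T$ carries right multiplication by $P$ over to right multiplication by $P^{\sim T}$; likewise, in the square case the conjugation $N \mapsto N^{\sim T} = T^{-1}NT$ is an algebra automorphism of $\cM_{n \times n}$, so $(AB)^{\sim T} = A^{\sim T} B^{\sim T}$ for all $A, B \in \cM_{n\times n}$.

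For the first assertion I would then argue: since $T \in \GL_n$, the map $\cdot T$ is a linear isomorphism of $\cM_{m \times n}$, hence injective, so $MP = M$ if and only if $(MP)^{\cdot T} = M^{\cdot T}$; rewriting the left-hand side by the displayed identity turns this into $M^{\cdot T} P^{\sim T} = M^{\cdot T}$, which is the claim. For the second assertion (now $m = n$), the automorphism $N \mapsto N^{\sim T}$ is in particular a bijection, so $MP = PM$ if and only if $(MP)^{\sim T} = (PM)^{\sim T}$; by multiplicativity the two sides equal $M^{\sim T} P^{\sim T}$ and $P^{\sim T} M^{\sim T}$ respectively, giving the stated equivalence.

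I do not expect any genuine obstacle here: the whole content is that $\cdot T$ intertwines the two right multiplications and that conjugation by $T$ is a ring automorphism, both immediate from the invertibility of $T$. The only points worth stating carefully are the bookkeeping of matrix shapes --- in particular that in the first claim $M$ must be taken $m \times n$ (not $m \times m$) for $MP$ to be defined --- and the observation that a one-sided base change suffices for the ``$MP = M$'' (invariance-type) statement, whereas the two-sided conjugation $\sim T$ is what is needed for the ``$MP = PM$'' (equivariance-type) statement, matching the dichotomy pointed out just before the lemma.
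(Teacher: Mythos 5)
Your proof is correct, and it is exactly the routine verification the paper leaves implicit: the lemma is stated with no proof (only a \qed), since $(MP)^{\cdot T}=M^{\cdot T}P^{\sim T}$ and the fact that $N\mapsto N^{\sim T}$ is an algebra automorphism are immediate from $T\in\GL_n$. Your remark that $M$ must be read as an $m\times n$ matrix (the statement's ``$M\in\cM_{m\times m}$'' is a typo) for $MP$ to be defined is a correct and worthwhile observation.
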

To proceed, we first recall the definition of a cycle. A permutation $\sigma \in \cS_n$ is 
a {\em cycle} if there exists a subset $\{a_1, a_2, \ldots, a_m\} \subseteq \{1, 2, \dots, n\}$ such that $\sigma(a_i)=a_{i+1}$ for $i=1,\ldots,m-1$, $\sigma(a_m)=a_1$, 
and $\sigma(x) = x$ for all $x \notin \{a_1, a_2, \dots, a_m\}$. Recall that every permutation can be uniquely decomposed into disjoint cycles (up to re-ordering the cycles).

In our investigations, we will make use of presentations of permutation matrices $P$ in different bases. The strategy is as follows. Step~$1$ consists in decomposing a permutation $\sigma\in \cS_n$ into disjoint cycles $\pi_1,\ldots,\pi_k$ of lengths $\ell_1,\ldots,\ell_k$; this brings the permutation matrix $P_\sigma$ of $\sigma$ into block  diagonal form, where each diagonal block is a circulant matrix. The second step is to diagonalize those  circulant matrices of sizes $\ell_1,\ldots,\ell_k$. Step~$3$ is optional and groups the columns corresponding to the same~eigenvalue. 

\pagebreak

\begin{procedure}\label{proc:basechange}
\begin{enumerate}[Step 1.]
\item[] 
\setcounter{enumi}{-1}
\item Represent $\sigma$ by the permutation matrix $P_\sigma\in \cM_{n \times n}(\{0,1\})$ with respect to the standard basis of~$\RR^n$, i.e., the $j$-th row of $P_\sigma$ is the transpose of the $\sigma(j)$-th standard unit vector of $\RR^n$.
\item Determine a permutation matrix $T_1\in \cM_{n\times n}(\{0,1\})$ such that  $P_\sigma^{\sim_{T_1}}=T_1^{-1}P_\sigma T_1$ is block diagonal whose blocks are circulant matrices $C_1,\ldots,C_k$ of the form 
\begin{align}\label{eq:circulant}
C_i \,=\,
{\small 
\begin{pmatrix}
0&&&&&1\\
1&0\\
&1 & & \ddots & &\\
&&&\ddots&\\
&&&&1&0
\end{pmatrix}} \in \cM_{\ell_i\times \ell_i}(\{0,1\}).
\end{align}
Each $C_i\in \cM_{\ell_i \times \ell_i}(\{0,1\})$ is a matrix representation of the cycle $\pi_i$ of length $\ell_i$, and has $\ell_i$-th roots of unity as eigenvalues, namely $\zeta_{\ell_i}^j$, where $j=0,\ldots,\ell_i-1$, and $\zeta_{n}$ denotes the primitive root of unity $e^{2\pi i/n}$. Depending on the lengths $\ell_i$ of the cycles, some of $C_i$'s might share common eigenvalues. Collect the eigenvalues of all the $C_i$'s in a set $\{ \lambda_1, \lambda_2, \ldots, \lambda_s \}$ together with their multiplicities~$b_1,b_2,\ldots,b_s$. Note that one of the $\lambda_i$'s is equal to~$1$ and for this~$\hat{i}$, $b_{\hat{i}}=k$.
\item Diagonalize each matrix $C_i$ from~\eqref{eq:circulant} via a matrix in~$\GL_n(\CC)$; this can be obtained via Vandermonde matrices $V(1,\zeta_{\ell_i},\ldots,\zeta_{\ell_i}^{\ell_i-1})$ as in \Cref{ex:degedchange}. The following block diagonal matrix then diagonalizes the block circulant diagonal matrix~$P_\sigma^{\sim_{T_1}}$ from~Step~1:
\begin{align}  
T_2 \,=\,  \begin{pmatrix}
V\!\left(1,\zeta_{\ell_1},\ldots,\zeta_{\ell_1}^{\ell_1-1}\right) & &   \\
&   \ddots & \\
& & && V\!\left(1,\zeta_{\ell_k},\ldots,\zeta_{\ell_k}^{\ell_k-1}\right)
\end{pmatrix} \, .
\end{align}
\item Group identical eigenvalues: determine $T_3\in \GL_n(\{0,1\})$ such that the matrix from Step~$2$ is block diagonal with blocks of the form~$\lambda_i \Id_{b_i}$, i.e., determine $T_3$ such that
\begin{align} \label{eq:Pfinal}
    T_3^{-1} \cdot {\left( P_\sigma^{\sim_{T_1}}\right)}^{\sim_{T_2}} \cdot T_3 \ =\ 
    \begin{pmatrix}
  \lambda_1\Id_{b_1} & & &   \\
  & \lambda_2 \Id_{b_2} &  &\\
  &  & \ddots & \\
  & & & & \lambda_s\Id_{b_s}  
\end{pmatrix} \, .
\end{align}
\end{enumerate}
\end{procedure}
\noindent 
We demonstrate Steps~$0$--$2$ of \Cref{proc:basechange} at an example.

\begin{example}\label{ex:proc}
Consider the permutation 
$\sigma= 
\begin{pmatrix}
1 & 2 & 3& 4 & 5\\
 3 & 5 & 4 & 1 & 2
\end{pmatrix}\in \cS_5$. 
Then
\begin{align*}
P_\sigma \,=\, 
\begin{pmatrix}
 0 & 0 & 1 & 0 & 0\\
 0 & 0 & 0 & 0 & 1\\
 0 & 0 & 0 & 1 & 0\\
 1 & 0 & 0 & 0 & 0 \\
 0 & 1 & 0 & 0 & 0
\end{pmatrix}
 \, \stackrel{\sim_{T_1}}{\mapsto} \,
\begin{pmatrix}
\begin{array}{c|c}
\begin{matrix}
0 & 0 & 1\\
1 & 0 & 0\\
0 & 1 & 0
\end{matrix} & \begin{matrix} 0 \end{matrix} \\ \hline
\begin{matrix} 0 \end{matrix} & 
\begin{matrix}
0 & 1\\ 1 & 0
\end{matrix}
\end{array}
\end{pmatrix}
\,\stackrel{\sim_{T_2}}{\mapsto}\,
\begin{pmatrix}
1 & &  & & \\
  & \zeta_3^2 & &  & \\
  & & \zeta_3  & & \\
  & & & 1 & \\
  & & & & -1\\
\end{pmatrix}
\end{align*}
with 
\begin{align*}
T_1\,=\,
\begin{pmatrix}
0 & 0 & 1 & 0 &0\\
0 & 0 & 0 & 1 & 0\\
0 & 1 & 0 & 0 & 0\\
1 & 0 & 0 & 0& 0\\
0 & 0 & 0 & 0 &1
\end{pmatrix} 
\in \cM_{5 \times 5}(\{0,1\}) \quad \text{and}\quad  T_2\,=\,
\begin{pmatrix}
\begin{array}{c|c}
\begin{matrix}
1 & 1 & 1\\
1& \zeta_3 & \zeta_3^2\\
1& \zeta_3^2 & \zeta_3 
\end{matrix} & \begin{matrix} 0 \end{matrix}\\ \hline
\begin{matrix}
  0
\end{matrix} & \begin{matrix}
1 & 1\\
1 & -1
\end{matrix}
\end{array} \end{pmatrix} 
\in \GL_5(\CC) \, , 
\end{align*}
where $\zeta_3$ denotes the primitive $3$rd root of unity~$e^{2\pi i/3}.$
\end{example}

\Cref{proc:basechange} requires complex base change matrices $T_2$.
We will present a real version of this procedure in \Cref{sec:real-irreduce}.
Both the complex and the real procedure turn a permutation matrix $P_\sigma$ into a block diagonal matrix~$P$ such that no two distinct blocks have eigenvalues in common. We can then easily determine the set of matrices that commute with~$P$, and we denote this vector space by~$C(P)$.

\begin{lemma}\label{lem:commute-block}
Let $P =[P_{i,j}] =\bigoplus_{i=1}^n P_{i,i}$ be a block diagonal matrix, where each block $P_{i,i}$ is square and such that sets of eigenvalues of the blocks are pairwise disjoint. Then the commutator $C(P)$ of $P$ consists precisely of those  block matrices $M =[M_{i,j}]= \bigoplus_{i=1}^n M_{i,i}$ for which each $M_{i,i}$ commutes with~$P_{i,i}$.
\end{lemma}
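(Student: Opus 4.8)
The plan is to expand a general matrix $M$ of the same size as $P$ into blocks $M = [M_{i,j}]$ conforming to the block pattern of $P$, and to compare $PM$ with $MP$ block by block. Because $P = \bigoplus_{i=1}^{n} P_{i,i}$ is block diagonal, the $(i,j)$-block of $PM$ equals $P_{i,i}M_{i,j}$ and the $(i,j)$-block of $MP$ equals $M_{i,j}P_{j,j}$. Hence $M \in C(P)$ if and only if
\[
P_{i,i}\,M_{i,j} \;=\; M_{i,j}\,P_{j,j} \qquad \text{for all pairs } (i,j).
\]
For $i=j$ this is exactly the condition that $M_{i,i}$ commutes with $P_{i,i}$, and conversely a block-diagonal $M$ whose diagonal blocks commute with the corresponding $P_{i,i}$ manifestly solves this system. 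So the only content of the lemma is to show that every solution of the system is forced to be block diagonal, i.e.\ that $M_{i,j}=0$ whenever $i\neq j$.

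Fix $i \neq j$. The equation $P_{i,i}M_{i,j} = M_{i,j}P_{j,j}$ is a homogeneous Sylvester equation for the rectangular block $M_{i,j}$. The key point is that the linear operator
\[
\Phi\colon\; X \,\longmapsto\, P_{i,i}\,X - X\,P_{j,j}
\]
on matrices of that size has spectrum $\{\, \lambda - \mu : \lambda \in \operatorname{spec}(P_{i,i}),\ \mu \in \operatorname{spec}(P_{j,j}) \,\}$. By the hypothesis that distinct diagonal blocks have disjoint spectra, $0$ is not among these differences, so $\Phi$ is invertible; therefore its kernel is trivial and $M_{i,j} = 0$. Together with the blockwise computation above, this gives the asserted description of $C(P)$.

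I expect the spectral claim about $\Phi$ to be the only genuine obstacle; it is the classical Sylvester--Rosenblum fact that $AX = XB$ has only the trivial solution when $A$ and $B$ share no eigenvalue. I would either invoke it directly or give the short argument: putting $P_{i,i}$ and $P_{j,j}$ into upper-triangular form over $\CC$ (Schur) conjugates $\Phi$ into an operator that is triangular with respect to an appropriate ordering of the matrix units $E_{ab}$ (e.g.\ by the value of $a-b$), with diagonal entries exactly the pairwise differences $\lambda - \mu$ of eigenvalues; disjointness of the spectra makes all of these nonzero, so $\Phi$ is nonsingular. Alternatively, one may identify $\Phi$ with the Kronecker sum $P_{i,i} \otimes I - I \otimes P_{j,j}^{\top}$ acting on vectorized matrices, whose eigenvalues are well known to be precisely these differences.
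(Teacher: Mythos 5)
Your proof is correct, and the blockwise reduction (comparing the $(i,j)$-blocks of $PM$ and $MP$, so that the whole lemma reduces to showing $M_{i,j}=0$ for $i\neq j$) is exactly the same as in the paper; where you diverge is in how you kill the off-diagonal blocks. You treat $P_{i,i}M_{i,j}=M_{i,j}P_{j,j}$ as a homogeneous Sylvester equation and invoke the Sylvester--Rosenblum fact that the operator $X\mapsto P_{i,i}X-XP_{j,j}$ has spectrum $\{\lambda-\mu\}$ (via Schur triangularization or the Kronecker-sum identification $P_{i,i}\otimes I-I\otimes P_{j,j}^{\top}$), which is nonzero by the disjoint-spectra hypothesis. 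The paper instead argues directly: iterating the intertwining relation gives $M_{i,j}\,p(P_{j,j})=p(P_{i,i})\,M_{i,j}$ for every univariate polynomial~$p$; taking $p$ to be the characteristic polynomial of $P_{j,j}$ makes the left-hand side vanish by Cayley--Hamilton, while $p(P_{i,i})=\prod_k(P_{i,i}-\lambda_k I)$ is invertible because no $\lambda_k$ is an eigenvalue of $P_{i,i}$, forcing $M_{i,j}=0$. The two arguments prove the same intertwining-implies-zero statement; the paper's version is self-contained and needs only Cayley--Hamilton (no Schur form, no vectorization, and no appeal to a named theorem), whereas yours is shorter if one is willing to cite the classical Sylvester result and additionally yields the stronger quantitative statement that the Sylvester operator is invertible, which is more than the lemma requires.
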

In the lemma, $M$ is assumed to be chopped into blocks of the same size pattern as~$P$.
\begin{proof}
Consider the matrix \(M = [M_{i,j}]\) as a block-matrix, where each block \(M_{i,j}\) has the same size as the corresponding block \(P_{i,j}\) in the block diagonal matrix \(P\). Our objective is to establish that if \(M\) commutes with \(P\), i.e., \(MP = PM\), then all off-diagonal blocks \(M_{i,j}\) are zero.
From the condition $MP=PM$, it follows that for all \(i\) and \(j\):
\begin{align}\label{eq:MijPjj}
M_{i,j} \cdot P_{j,j} \,=\, P_{i,i} \cdot  M_{i,j} \, . 
\end{align}
Hence $M_{i,j}P_{j,j}^2=P_{i,i}M_{i,j}P_{j,j}=P_{i,i}^2M_{i,j}$ by iterating~\eqref{eq:MijPjj}.
Consequently, for any univariate polynomial~\(p\), we have 
\begin{align}  \label{eq:polynomialMatrix}
M_{i,j} \cdot p(P_{j,j}) \,=\, p(P_{i,i}) \cdot M_{i,j}\, . 
\end{align}
Let \(p\) be the characteristic polynomial of \(P_{j,j}\). Thus, we have \(p(P_{j,j}) = 0\).
Writing $p=\prod_k (x-\lambda_k)$, where the $\lambda_k$ are the eigenvalues of $P_{j,j}$, we see that 
\(p(P_{i,i}) = \prod_k (P_{i,i}-\lambda_k I)\), for~$i \neq j$, is invertible, since
  \(P_{i,i}\) and \(P_{j,j}\) do not have any eigenvalue in common. Consequently, we deduce from \eqref{eq:polynomialMatrix} that $M_{i,j}$ is the zero matrix. We  conclude that \(M\) commutes with~\(P\) if and only if  $M_{i,i}$ commutes with $P_{i,i}$ for all~\(i=1,\ldots,n\).
\end{proof}

In the subsequent lemma, we determine the dimension of the eigenspaces of $P_{\sigma}$. We are going to leverage this information to compute the dimension of the commutator $C(P_{\sigma})$. 

\begin{lemma}\label{lem:dim-V_k}
Let $P_{\sigma}$ be a permutation matrix of size~$n$ with disjoint cycles, whose associated cyclic matrices $C_j$ are square of size~$\ell_j$. Then the  dimension of the eigenspace of~$P_\sigma$
corresponding to {an eigenvalue of the form} $e^{2 \pi i /l}$ is
\begin{align}
d_l \ \coloneqq\ \#\{ \, j \text{ such that } l|\ell_j \}\, .
\end{align}
In particular, $\dim_{\mathbb{C}}(C(P_{\sigma})) \,=\, \sum_l \varphi(l) \cdot d_l^2$. Here, $\varphi$ denotes Euler's $\varphi$-function, whose value $\varphi(l)$ is the order of the unit group of~$\,\ZZ /l \ZZ \, .$
\end{lemma}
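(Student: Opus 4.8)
The plan is to diagonalize $P_\sigma$ by means of \Cref{proc:basechange} and to read off both the eigenspace dimensions and the structure of the commutator from the resulting block form. First I would apply Step~$1$ of the procedure, conjugating $P_\sigma$ by the permutation matrix $T_1$ into the block diagonal matrix $\bigoplus_{j=1}^k C_j$, where $C_j$ is the cyclic shift \eqref{eq:circulant} of size $\ell_j$; since conjugation by an invertible matrix preserves eigenvalues and the dimensions of their eigenspaces, it suffices to work with $\bigoplus_j C_j$. Each $C_j$ is diagonalizable with the $\ell_j$-th roots of unity $\zeta_{\ell_j}^0,\ldots,\zeta_{\ell_j}^{\ell_j-1}$ as eigenvalues, each of multiplicity~$1$, hence so is $\bigoplus_j C_j$, and the eigenspace of an eigenvalue $\omega$ has dimension $\sum_{j}(\text{multiplicity of }\omega\text{ in }C_j)$. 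A primitive $l$-th root of unity $\omega = e^{2\pi i/l}$ is an eigenvalue of $C_j$ exactly when $\omega^{\ell_j}=1$, i.e. when $l\mid \ell_j$, and then with multiplicity~$1$; summing over $j$ gives eigenspace dimension $d_l = \#\{j : l\mid\ell_j\}$, which is the first claim.

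For the dimension of $C(P_\sigma)$ I would complete \Cref{proc:basechange}, conjugating further by $T_2$ and $T_3$ to bring $P_\sigma$ into the form $P=\bigoplus_{i=1}^s \lambda_i\Id_{b_i}$ of \eqref{eq:Pfinal}, where $\lambda_1,\ldots,\lambda_s$ are the distinct eigenvalues and $b_1,\ldots,b_s$ their multiplicities. Conjugation by $T=T_1T_2T_3$ induces, via \Cref{lem:invAndEquivUnderbaseChange}, a linear isomorphism $C(P_\sigma)\cong C(P)$, so it is enough to compute $\dim_{\CC} C(P)$. Since the $\lambda_i$ are pairwise distinct, the blocks of $P$ have pairwise disjoint spectra, so \Cref{lem:commute-block} applies: $C(P)$ consists of the block diagonal matrices $\bigoplus_i M_{i,i}$ with $M_{i,i}$ commuting with $\lambda_i\Id_{b_i}$. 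But every $b_i\times b_i$ matrix commutes with the scalar matrix $\lambda_i\Id_{b_i}$, so the blocks $M_{i,i}$ are free and $\dim_{\CC} C(P)=\sum_{i=1}^s b_i^2$.

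It remains to reorganize $\sum_i b_i^2$ according to the multiplicative order of the $\lambda_i$. For each $l$ with $d_l>0$, if $l\mid \ell_j$ then $\zeta_l^a=\zeta_{\ell_j}^{a\ell_j/l}$ is an eigenvalue of $C_j$ for every $a$ coprime to $l$, so all $\varphi(l)$ primitive $l$-th roots of unity occur among the $\lambda_i$, and by the computation above each of them has multiplicity exactly $d_l$. Grouping the $\lambda_i$ accordingly yields $\sum_i b_i^2=\sum_l \varphi(l)\,d_l^2$, as claimed. None of these steps is deep; the one point that requires care is this last bookkeeping step, namely that all $\varphi(l)$ primitive $l$-th roots of unity share the \emph{same} multiplicity $d_l$ — this is where one uses that membership of a root of unity in the spectrum of a cyclic shift $C_j$ depends only on its order.
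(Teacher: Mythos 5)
Your proposal is correct and follows essentially the same route as the paper: diagonalize $P_\sigma$ (the paper passes directly to the diagonal matrix $D_\sigma$ with blocks $e^{2\pi i m/l}\Id_{d_l}$, you spell out the base changes of \Cref{proc:basechange}), observe that each $C_j$ has the $\ell_j$-th roots of unity as simple eigenvalues so that every primitive $l$-th root of unity has eigenspace dimension $d_l$, and then apply \Cref{lem:commute-block} together with \Cref{lem:invAndEquivUnderbaseChange} to get $\dim_{\CC}C(P_\sigma)=\sum_l \varphi(l)\,d_l^2$. The bookkeeping point you flag at the end (all $\varphi(l)$ primitive $l$-th roots of unity share multiplicity $d_l$) is exactly the observation the paper also makes before grouping the blocks.
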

\begin{proof}
We begin by observing that the eigenvalues of the matrix $C_j$ are the $\ell_j$-th roots of unity, of multiplicity one each. Consequently,  $e^{2 \pi i/l}$ is an eigenvalue of $C_j$ if and only if $l$ divides $\ell_j$. Hence, $d_l$ is the number of $j$ for which $l$ divides~$\ell_j$. Additionally, we find that also the eigenspace corresponding to the eigenvalue $e^{2 \pi i m/l}$, where $\gcd(m,l)=1$, has dimension~$d_l$.  Over the complex numbers, $P_\sigma$ is similar to the diagonal matrix $D_{\sigma} = [D_{l}^{(m)}]$, where $D_{l}^{(m)} = e^{2 \pi i m/l} \Id_{d_l}$ and $(l,m)$ runs over the set $\{ (l,m)\, | \, m\leq l,\ \gcd(m,l)=1\}$ of cardinality $\varphi(l)$. Employing  \Cref{lem:commute-block}, we  establish that the commutator $C(D_{\sigma})$ consists of arbitrary block diagonal matrices with claimed size of the blocks. Thus, we deduce from \Cref{lem:invAndEquivUnderbaseChange} that $\dim_{\mathbb{C}}(C(P_{\sigma})) = \sum_l \varphi(l) \cdot  d_l^2 \, .$
\end{proof}

\begin{remark}
In the notation of \Cref{proc:basechange}, $\{d_i \, | \,i\in \NN_{>0}, \,  \, d_i\neq 0\} =\{b_1,\ldots,b_s\}$ as sets. Moreover, $d_1=k$, i.e., the number of disjoint cycles into which $\sigma$ decomposes.
\end{remark}

\begin{lemma}
Let $\mathbb{F} \subset  \mathbb{K}$ be a field extension. Then for any $n\times n$ matrix $M \in \cM_{n\times n}(\mathbb{F})$, we have $\dim_{\mathbb{F}} (C_{\mathbb{F}}(M))=\dim_{\mathbb{K}} (C_{\mathbb{K}}(M))$.
\end{lemma}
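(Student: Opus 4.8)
The plan is to realize the centralizer $C_{\mathbb{F}}(M) = \{N \in \cM_{n\times n}(\mathbb{F}) : MN = NM\}$ as the kernel of a single $\mathbb{F}$-linear map and then invoke the fact that the rank of a matrix with entries in $\mathbb{F}$ is unchanged under extension of scalars. Concretely, consider the linear map
\[
\Phi_M \colon \ \cM_{n\times n} \longrightarrow \cM_{n\times n}, \qquad N \longmapsto MN - NM .
\]
Identifying $\cM_{n\times n}$ with the coordinate space of dimension $n^2$ via $N \mapsto \vect(N)$ and using $\vect(AXB) = (B^\top \otimes A)\vect(X)$, the map $\Phi_M$ is represented by the $n^2 \times n^2$ matrix $A_M \coloneqq \Id_n \otimes M - M^\top \otimes \Id_n$, all of whose entries lie in $\mathbb{F}$ because those of $M$ do. By construction $C_{\mathbb{F}}(M) = \ker_{\mathbb{F}}(A_M)$ and $C_{\mathbb{K}}(M) = \ker_{\mathbb{K}}(A_M)$, so by rank--nullity it suffices to show $\rank_{\mathbb{F}}(A_M) = \rank_{\mathbb{K}}(A_M)$.

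Next I would use the field-independence of matrix rank: for a matrix $A$ with entries in $\mathbb{F}$, its rank equals the largest $t$ for which some $t \times t$ minor of $A$ is nonzero, and whether a given minor---an element of $\mathbb{F}$---vanishes does not depend on whether we view it inside $\mathbb{F}$ or inside the larger field $\mathbb{K}$. Hence $\rank_{\mathbb{F}}(A_M) = \rank_{\mathbb{K}}(A_M)$, and therefore
\[
\dim_{\mathbb{F}} C_{\mathbb{F}}(M) \,=\, n^2 - \rank_{\mathbb{F}}(A_M) \,=\, n^2 - \rank_{\mathbb{K}}(A_M) \,=\, \dim_{\mathbb{K}} C_{\mathbb{K}}(M) .
\]
Equivalently, and without mentioning the Kronecker sum, one can run Gaussian elimination directly on the homogeneous linear system $MN = NM$ in the entries of $N$: every pivot choice and every elimination step stays inside $\mathbb{F}$, so the reduced row echelon form---and in particular the number of free variables---is the same whether computed over $\mathbb{F}$ or over $\mathbb{K}$.

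A more structural phrasing, which I would add as a one-line remark, is that any $\mathbb{F}$-basis $N_1,\ldots,N_d$ of $C_{\mathbb{F}}(M)$ remains $\mathbb{K}$-linearly independent and still lies in $C_{\mathbb{K}}(M)$, giving $\dim_{\mathbb{K}} C_{\mathbb{K}}(M) \geq d$; the reverse inequality is precisely the rank computation above, or, said differently, $C_{\mathbb{K}}(M) = C_{\mathbb{F}}(M) \otimes_{\mathbb{F}} \mathbb{K}$ since solution spaces of linear systems commute with (flat) base change. I do not anticipate a genuine obstacle: the only point requiring care is to justify ``rank is insensitive to the ground field'' via the minor characterization (or echelon forms) rather than via eigenvalue or Jordan-form arguments, which would either be circular or tacitly assume $\mathbb{K}$ algebraically closed.
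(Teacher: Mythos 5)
Your proposal is correct and follows essentially the same route as the paper: the paper also views the centralizer as the kernel of the $\mathbb{F}$-linear commutator map $N \mapsto MN-NM$ and observes that Gaussian elimination (equivalently, the rank) is insensitive to the ground field, so the kernel dimension is the same over $\mathbb{F}$ and $\mathbb{K}$. Your added detail via the Kronecker-sum matrix and the minor characterization of rank simply makes explicit what the paper leaves implicit.
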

\begin{proof}
Let $M$ be a matrix in $\cM_{n\times n}(\mathbb{F})$, and let $Q_{M,\mathbb{F}}: \cM_{n\times n}(\mathbb{F})\to \cM_{n\times n}(\mathbb{F})$ be a linear map that maps a matrix $N$ to its commutator with $M$, namely to $[M,N]=MN - NM$. Since the process of Gaussian elimination is independent of the choice of the field for the linear map $Q_{M,\cdot}$, it follows that $\dim_{\mathbb{F}} (\ker(Q_{M,\mathbb{F}}))=\dim_{\mathbb{K}} (\ker(Q_{M,\mathbb{K}}))$.
\end{proof}

\begin{corollary}\label{cor:field-exten}
\Cref{lem:dim-V_k} holds true also when the base field~$\mathbb{F}$ is $\mathbb{R}$ or $\mathbb{Q}$. \hfill \qed
\end{corollary}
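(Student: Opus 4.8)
The plan is to reduce immediately to the lemma stated just above asserting $\dim_{\mathbb{F}}(C_{\mathbb{F}}(M)) = \dim_{\mathbb{K}}(C_{\mathbb{K}}(M))$ for any field extension $\mathbb{F} \subset \mathbb{K}$ and any matrix $M$ with entries in $\mathbb{F}$. A permutation matrix $P_\sigma$ has all its entries in $\{0,1\} \subset \mathbb{Q}$, hence $P_\sigma \in \cM_{n\times n}(\mathbb{F})$ for every field $\mathbb{F} \in \{\mathbb{Q}, \mathbb{R}, \mathbb{C}\}$. First I would apply that lemma to the extension $\mathbb{F} \subset \mathbb{C}$, with $\mathbb{F} = \mathbb{Q}$ and with $\mathbb{F} = \mathbb{R}$, to obtain $\dim_{\mathbb{F}}(C_{\mathbb{F}}(P_\sigma)) = \dim_{\mathbb{C}}(C_{\mathbb{C}}(P_\sigma))$ in both cases. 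Then I would combine this with \Cref{lem:dim-V_k}, which identifies the right-hand side as $\sum_l \varphi(l)\, d_l^2$, so that the very same formula holds over $\mathbb{R}$ and over $\mathbb{Q}$.

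The only point worth remarking is that the integers $d_l = \#\{\, j \text{ such that } l \mid \ell_j \,\}$ are defined purely combinatorially from the cycle lengths of $\sigma$, so they are manifestly independent of the ambient field; thus nothing beyond the dimension identity needs to be transported.

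There is essentially no obstacle here: all the substance lies in the preceding field-extension lemma, whose proof rests on Gaussian elimination of the linear map $N \mapsto [P_\sigma, N]$ being insensitive to enlarging the scalar field. If one preferred a self-contained argument, one could observe directly that the conditions $M P_\sigma = P_\sigma M$ form a homogeneous linear system with integer coefficients, so $C_{\mathbb{Q}}(P_\sigma)$ has a $\mathbb{Q}$-basis which remains both linearly independent and spanning after extension of scalars to $\mathbb{R}$ or $\mathbb{C}$; but invoking the lemma is the cleanest route, and the corollary is then immediate.
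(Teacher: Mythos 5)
Your proposal is correct and matches the paper's intent exactly: the corollary is stated with no written proof precisely because it follows immediately by applying the preceding field-extension lemma (with $P_\sigma$ having entries in $\{0,1\}\subset\mathbb{Q}\subset\mathbb{R}$) together with \Cref{lem:dim-V_k}, and your remark that the $d_l$ are purely combinatorial, hence field-independent, is the only point needing mention. Nothing further is required.
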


\begin{example} \label{ex:compute_dl}
The blocks of the matrix in \eqref{example:permute-99} represent the three disjoint cycles $(1 \, 2 \, 3 \, 4)$, $(5 \, 6 \, 7 \,8 ),$ and $(9)$ of length $4$, $4$, and $1$, respectively. Hence the only non-zero $d_i$'s are\linebreak $d_1 =k= 3$, $d_2 = 2$, and $d_4 = 2$. Therefore, by \Cref{lem:dim-V_k} and \Cref{cor:field-exten}, we have $\dim(C(P_\sigma)) = \varphi(1) \cdot 3^2 + \varphi(2) \cdot 2^2 +  \varphi(4) \cdot 2^2= 9+4+8=21$, in coherence what was obtained in \Cref{sec:warmup}.
After the full base change in \Cref{proc:basechange} such that $P_\sigma$ becomes the diagonal matrix in \Cref{eq:Pfinal}, 
the matrices that commute with that diagonal matrix are those of the form
\begin{align}
	\begin{pmatrix}
 \begin{array}{c|c|c|c}
 \begin{matrix}
		\alpha_{11} & \alpha_{12} & \alpha_{13} \\
		\alpha_{21} & \alpha_{22} & \alpha_{23}  \\
		\alpha_{31} & \alpha_{32} & \alpha_{33}
  \end{matrix} & \begin{matrix} 0 \end{matrix}& \begin{matrix} 0 \end{matrix} & \begin{matrix} 0 \end{matrix} \\ \hline 
  \begin{matrix} 0 \end{matrix} & 
  \begin{matrix}
 \beta_{11} & \beta_{12}\\
\beta_{21} & \beta_{22}
 \end{matrix} 
 & \begin{matrix} 0 \end{matrix} & \begin{matrix} 0 \end{matrix}\\ \hline 
 \begin{matrix} 0 \end{matrix} & \begin{matrix} 0 \end{matrix} &
 \begin{matrix}
	\gamma_{11} & \gamma_{12}\\
	 \gamma_{21} & \gamma_{22} 
  \end{matrix} & \begin{matrix} 0 \end{matrix} \\ \hline 
 \begin{matrix} 0 \end{matrix} & \begin{matrix} 0 \end{matrix} & \begin{matrix} 0 \end{matrix} &\begin{matrix}
 \delta_{11} & \delta_{12} \\
\delta_{21} & \delta_{22} 
\end{matrix}
  \end{array}
	\end{pmatrix},
\end{align}
with scalars $\alpha_{ij}$, $\beta_{ij}$, $\gamma_{ij}$, $\delta_{ij}$. One sees that there are $17$ ways how such a matrix can be of rank~$3$: Either the first block has rank $3$ and all others are zero, or one block has rank~$2$ while another has rank~$1$ (there are $12$ possible choices of such blocks), or one block is zero while all the others have rank~$1$ (which gives $4$ possible choices).
These are precisely the $17$ irreducible components of $\cE^\sigma_{3,9\times 9}(\CC)$ that were predicted in \Cref{sec:warmup}.
\end{example}

\section{Invariance under permutation groups}\label{sec:inv}
We here study linear maps $f \colon \, \RR^n\to \RR^m$ and are going to deal with determinantal subvarieties of~$\cM_{m\times n}$. For invariance, the action of the considered group is required on the input space~$\RR^n$ only. 
Let $G\leq \cS_n$ be an arbitrary subgroup of the symmetric group. The linear map $ f$ is {\em invariant under $G$} if
\begin{align}
f\circ \sigma \, = \, f 
\end{align}
for all permutations $\sigma \in G$.

\subsection{Reduction to cyclic groups}\label{sec:invarbitr}
We denote the  columns of a matrix $M \in \cM_{m \times n}$  by $M_1, M_2, \ldots, M_n$. 
For a decomposition $\sigma=\pi_1\circ \cdots \circ  \pi_k \in \cS_n$ into disjoint cycles, we denote by $\cP(\sigma)=\{A_1,\ldots,A_k\}$ its induced partition of the set~$[n]$. The $A_i\subset [n]$ fulfill $\cup_{i=1}^kA_i=[n]$ and $A_i\cap A_j=\emptyset$ whenever $i\neq j$.

\begin{example}
Let $\sigma=(1 \, 3 \, 4)(2 \, 5)\in \cS_5.$ Its induced partition of $[5]=\{1,2,3,4,5\}$ is $\cP(\sigma)=\{ \{1,3,4\},\{2,5\}\}. $ Note that different permutations might induce the same partition: the permutation $\eta=(1\, 4 \, 3)(2 \, 5)\neq \sigma$ gives rise to the partition $\cP(\eta)=\cP(\sigma)$ of the set~$[5]$.
\end{example}

\begin{lemma}\label{prop:diminvcyclic}
Let $\sigma \in \cS_n$ and consider its   decomposition $\sigma=\pi_1\circ \cdots \circ \pi_k$ into $k$ disjoint cycles $\pi_1,\ldots,\pi_k$. 
Then
\begin{align}
    \cI^\sigma_{m \times n} \,=\, \left\{ M \in \cM_{m \times n} \mid \forall A \in \mathcal{P}(\sigma) \ \forall i,j \in A: \, M_i = M_j \right\} \, \cong \, \cM_{m \times k} \, .
\end{align}
\end{lemma}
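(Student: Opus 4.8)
The plan is to convert the invariance condition $f\circ\sigma=f$ into a statement about the columns of the representing matrix $M$, and then to construct the isomorphism $\cong\cM_{m\times k}$ explicitly. First I would note that invariance under $\sigma$ is the same as invariance under the cyclic group $\langle\sigma\rangle$, since $f\circ\sigma=f$ propagates to all powers of $\sigma$. Writing $P_\sigma$ for the permutation matrix of $\sigma$ as fixed in Step~$0$ of \Cref{proc:basechange}, the map $f\circ\sigma$ is represented by $MP_\sigma$, and with that column convention the $j$-th column of $MP_\sigma$ is the column $M_{\sigma^{-1}(j)}$ of $M$. Hence $f\circ\sigma=f$, i.e.\ $MP_\sigma=M$, holds exactly when $M_{\sigma(j)}=M_j$ for every $j\in[n]$. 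Propagating this relation along the orbits of $\sigma$, it is equivalent to: $M_i=M_j$ whenever $i$ and $j$ lie in a common cycle of $\sigma$, i.e.\ whenever $i,j$ belong to the same block $A\in\cP(\sigma)$ (a fixed point imposing no constraint, in accordance with counting trivial cycles). This gives the claimed set equality
\[
\cI^\sigma_{m\times n}\,=\,\bigl\{\,M\in\cM_{m\times n}\;\big|\;\forall A\in\cP(\sigma)\ \forall i,j\in A:\ M_i=M_j\,\bigr\}.
\]

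For the isomorphism, I would enumerate the blocks $A_1,\dots,A_k$ of $\cP(\sigma)$, fix a representative $a_i\in A_i$, and introduce the two linear maps $\rho\colon\cM_{m\times n}\to\cM_{m\times k}$, $M\mapsto(M_{a_1}\mid\cdots\mid M_{a_k})$ (selecting one column per block), and $\iota\colon\cM_{m\times k}\to\cM_{m\times n}$ sending $N=(N_1\mid\cdots\mid N_k)$ to the matrix whose $j$-th column equals $N_i$ for the unique $i$ with $j\in A_i$ (repeating each column across its block). Both are morphisms of affine varieties; the image of $\iota$ is exactly the right-hand side above; and $\rho\circ\iota=\mathrm{id}_{\cM_{m\times k}}$, while $\iota\circ\rho$ is the identity on $\cI^\sigma_{m\times n}$. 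Therefore $\iota$ restricts to an isomorphism from $\cM_{m\times k}$ onto $\cI^\sigma_{m\times n}$. I would also record that repeating columns does not change the column span, so $\rank\iota(N)=\rank N$; this explains why $\cM_{m\times k}$ carries no rank bound here, and it is what will later let one transport the rank constraint, giving $\cI^\sigma_{r,m\times n}\cong\cM_{r,m\times k}$.

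\textbf{Main obstacle.} There is essentially none: the statement is elementary once the invariance condition is unwound. The only points needing care are keeping the permutation-matrix convention consistent (reassuringly, the alternative convention would give $M_{\sigma^{-1}(j)}=M_j$, which has the same orbits, so no genuine ambiguity arises) and stating the rank-preservation of $\iota$ explicitly so that the rank-free ``$\cong$'' is both justified and usable in the rank-constrained results that follow.
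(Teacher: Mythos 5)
Your proposal is correct and follows essentially the same route as the paper: invariance forces the columns indexed by each cycle of $\sigma$ to coincide, and deleting the repeated columns (with the obvious column-repeating inverse) gives the linear isomorphism onto $\cM_{m\times k}$. Your extra remarks on the convention for $P_\sigma$ and on rank preservation are sound and anticipate what the paper records separately in \Cref{prop:isoinvM} and \Cref{prop:invMfators}.
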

\begin{proof}
Consider the partition $\cP(\sigma)=\{A_1,\ldots,A_k\}$ of $[n]$ induced by the decomposition $\sigma=\pi_1\circ \cdots \circ \pi_k$. Assuming invariance of $M$ under $\sigma$, each cycle $\pi_i$ of $\sigma$ forces some of the columns of $M$ to coincide, namely the columns indexed by~$A_i\subset [n]$.
For each $A_i\in \cP(\sigma)$, we need to remember only one column $M_{A_i}$ of~$M$. For each~$i$, $\operatorname{length}(\pi_i)$ many identical copies of~$M_{A_i}$ are listed as columns in~$M$. Deleting all columns except the $M_{A_i}$ gives a linear isomorphism $\cI^\sigma_{m \times n} \cong \cM_{m \times k}$.
\end{proof}

Thus, invariance under a permutation $\sigma$ depends only on the partition~$\cP(\sigma)$. We also see that any invariant matrix can have at most $k$ pairwise distinct columns:
\begin{corollary}\label{cor:rankinv}
 If a linear function $f\colon \RR^n \to \RR^m$ is invariant under $\sigma\in \cS_n$, then its rank is at most~$k$, the number of disjoint cycles into which~$\sigma$ decomposes. \hfill \qed
\end{corollary}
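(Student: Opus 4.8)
The plan is to read this off directly from \Cref{prop:diminvcyclic}. Since $f$ is invariant under $\sigma$, its representing matrix $M \in \cM_{m \times n}$ lies in $\cI^\sigma_{m\times n}$, so by the lemma we have $M_i = M_j$ whenever $i,j$ lie in the same block $A \in \cP(\sigma)$. Choosing one representative column $M_{A_1}, \ldots, M_{A_k}$ for each of the $k$ blocks of $\cP(\sigma)$, every column of $M$ equals one of these $k$ vectors. Hence the column space of $M$ is spanned by $\{M_{A_1}, \ldots, M_{A_k}\}$, and therefore $\rank(f) = \rank(M) = \dim_{\KK}\bigl(\operatorname{span}(M_1,\ldots,M_n)\bigr) \leq k$.

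Alternatively, one can simply invoke the linear isomorphism $\cI^\sigma_{m\times n} \cong \cM_{m \times k}$ established in \Cref{prop:diminvcyclic}: that isomorphism is realized by deleting the redundant copies among the columns, an operation which never changes the span and hence preserves the rank. Thus $\rank(M)$ equals the rank of the associated matrix in $\cM_{m \times k}$, which is at most $\min(m,k) \leq k$.

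There is no real obstacle here; the only point requiring a moment's care is that the identification in \Cref{prop:diminvcyclic} is genuinely rank-preserving, but this is immediate since removing duplicate columns leaves the column space unchanged. So the corollary follows at once from the lemma.
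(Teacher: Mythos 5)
Your proof is correct and follows exactly the paper's route: the corollary is stated as an immediate consequence of \Cref{prop:diminvcyclic}, since an invariant matrix has at most $k$ pairwise distinct columns, so its column space has dimension at most~$k$. Nothing further is needed.
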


Invariance under a permutation group~$G$ with arbitrarily many generators can be reduced to invariance under cyclic groups, which we make precise in the following proposition. 
\begin{lemma}\label{prop:reducetocyclic}
Let $G = \langle \sigma_1, \sigma_2, \ldots, \sigma_g \rangle \leq \cS_n$ be a permutation group. There exists \mbox{$\sigma \in \cS_n$} such that $\cI_{m\times n}^G=\cI_{m\times n}^\sigma$. 
In fact, this equality holds
for any $\sigma \in \cS_n$ whose induced partition~$\cP(\sigma)$ is the finest common coarsening of the partitions $ \cP(\sigma_1),\cP(\sigma_2),\ldots,\cP(\sigma_g) $.
\end{lemma}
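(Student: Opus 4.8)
The plan is to reduce the $G$-invariance condition to column-constancy on a single partition of $[n]$, and then to realize that partition as $\cP(\sigma)$ for a suitable $\sigma$. First I would observe that, for a fixed linear map $f\colon\RR^n\to\RR^m$, the set $H_f\coloneqq\{\tau\in\cS_n \mid f\circ\tau=f\}$ is a subgroup of $\cS_n$: it is closed under composition, since $f\circ\tau=f$ and $f\circ\tau'=f$ give $f\circ(\tau\tau')=(f\circ\tau)\circ\tau'=f$, it contains the identity, and it is closed under inverses, since $f\circ\tau=f$ implies $f=f\circ\tau^{-1}$. Hence if $f$ is invariant under each generator $\sigma_t$, then $G=\langle\sigma_1,\dots,\sigma_g\rangle\subseteq H_f$, so $f$ is invariant under all of $G$; the converse inclusion is trivial. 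Therefore
\[
  \cI^G_{m \times n} \,=\, \bigcap_{t=1}^{g} \cI^{\sigma_t}_{m \times n}\, .
\]

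Next I would apply \Cref{prop:diminvcyclic} to each factor: $M\in\cI^{\sigma_t}_{m\times n}$ if and only if the columns of $M$ are constant on the blocks of $\cP(\sigma_t)$, i.e.\ $M_i=M_j$ whenever $i,j$ lie in a common block of $\cP(\sigma_t)$. Taking the intersection over $t$, a matrix $M$ lies in $\cI^G_{m\times n}$ exactly when $M_i=M_j$ whenever $i$ and $j$ share a block of \emph{some} $\cP(\sigma_t)$. Let $\cQ$ be the partition of $[n]$ whose blocks are the equivalence classes of the transitive closure of this relation. A chain argument shows that "the columns of $M$ are constant on the blocks of every $\cP(\sigma_t)$" is equivalent to "the columns of $M$ are constant on the blocks of $\cQ$": any two indices in the same block of $\cQ$ are linked by a finite sequence of indices in which consecutive pairs share a block of some $\cP(\sigma_t)$, and column-constancy propagates along the sequence. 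By construction, $\cQ$ is the join $\cP(\sigma_1)\vee\cdots\vee\cP(\sigma_g)$ in the lattice of partitions of $[n]$, i.e.\ the finest common coarsening of $\cP(\sigma_1),\dots,\cP(\sigma_g)$.

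It then remains to realize $\cQ$ as a cycle partition. Writing $\cQ=\{B_1,\dots,B_k\}$, I would fix on each block $B_i$ an arbitrary linear order of its elements and let $\pi_i$ be the corresponding $|B_i|$-cycle (a trivial one-element cycle when $|B_i|=1$); then $\sigma\coloneqq\pi_1\circ\cdots\circ\pi_k$ is a decomposition into pairwise disjoint cycles with $\cP(\sigma)=\{B_1,\dots,B_k\}=\cQ$, so \Cref{prop:diminvcyclic} gives $\cI^\sigma_{m\times n}=\{M\mid M\text{ constant on blocks of }\cQ\}=\cI^G_{m\times n}$. Finally, if $\sigma'\in\cS_n$ is any permutation whose induced partition $\cP(\sigma')$ equals the finest common coarsening $\cQ$, the identical application of \Cref{prop:diminvcyclic} yields $\cI^{\sigma'}_{m\times n}=\cI^G_{m\times n}$, which is the second assertion. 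I do not anticipate a genuine obstacle; the one point deserving care is the equivalence between the intersected column-constancy conditions and column-constancy on the join $\cQ$, which is precisely the transitive-closure/chain argument indicated above.
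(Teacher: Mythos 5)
Your proposal is correct and follows essentially the same route as the paper: reduce invariance to column-constancy on the cycle partitions via \Cref{prop:diminvcyclic}, and observe that intersecting these conditions over the generators amounts to constancy on the finest common coarsening, realized as $\cP(\sigma)$ for a suitable $\sigma$. You merely make explicit some steps the paper leaves implicit (that invariance under the generators implies invariance under all of $G$, the transitive-closure argument, and the construction of a permutation realizing the join partition), which is a sound elaboration rather than a different approach.
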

\begin{proof}
Decompose each $\sigma_i$ into pairwise disjoint cycles $\pi_{i,1}\circ \cdots \circ \pi_{i,k_i}.$
Invariance of a matrix $M$ under $\sigma_i$ forces some of the columns of $M$ to coincide and depends only on the partition $\cP(\sigma_i)$ of $[n]$. Any additional $\sigma_j$, $j\neq i$, forces further columns of $M$ to coincide. Invariance of~$M$ under $G$ is hence described by the finest common coarsening of $\cP(\sigma_1),\ldots,\cP(\sigma_g)$.
\end{proof}

\subsection{Characterization of 
$\cI_{r,m\times n}^G$}\label{sec:diminv}
By \Cref{prop:reducetocyclic}, it is sufficient to consider cyclic groups. 
By omitting repeated columns as in \Cref{prop:diminvcyclic}, we see that $\cI_{r, m \times n}^G$ is linearly isomorphic to $\cM_{\min(r,k), m \times k}$.
\begin{proposition}\label{prop:isoinvM}
Let $G=\langle \sigma \rangle \leq \cS_n$ be cyclic, and $\sigma=\pi_1\circ \cdots \circ \pi_k$ a decomposition into pairwise disjoint cycles~$\pi_i$. The variety  $\cI_{r,m\times n}^G$ is isomorphic to the determinantal variety~$\cM_{\min(r,k),m\times k}$ via a linear morphism that deletes repeated columns:
\begin{align}\label{eq:psi}
\psi_{\cP(\sigma)} \colon \ \cI_{r,m\times n}^G \to \cM_{\min(r,k),m\times k} \, .
\end{align}
\end{proposition}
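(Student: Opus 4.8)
The plan is to obtain this proposition as a rank-refinement of \Cref{prop:diminvcyclic}. By definition $\cI_{r,m\times n}^G = \cI_{m\times n}^\sigma \cap \cM_{r,m\times n}$, and \Cref{prop:diminvcyclic} already supplies a linear isomorphism $\psi_{\cP(\sigma)}\colon \cI_{m\times n}^\sigma \to \cM_{m\times k}$ that retains one representative column $M_{A_i}$ for each block $A_i \in \cP(\sigma)$ and forgets the duplicates; its inverse $\psi_{\cP(\sigma)}^{-1}$ duplicates the column indexed by $A_i$ exactly $\operatorname{length}(\pi_i)$ times. Both $\psi_{\cP(\sigma)}$ and its inverse are linear (a coordinate projection, resp.\ a coordinate repetition), hence morphisms of affine varieties, so it suffices to check that $\psi_{\cP(\sigma)}$ carries points of $\cM_{r,m\times n}$ to points of $\cM_{\min(r,k),m\times k}$ and conversely.

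The one elementary fact I would invoke is that deleting or duplicating columns does not change the rank of a matrix: if $M'$ is obtained from $M$ by removing columns, each of which equals a retained column, then $M$ and $M'$ have the same column space, hence $\rank M = \rank M'$; the same holds in reverse for column duplication. Applied to $\psi_{\cP(\sigma)}$ this gives $\rank \psi_{\cP(\sigma)}(M) = \rank M$ for every $M \in \cI_{m\times n}^\sigma$, and $\rank \psi_{\cP(\sigma)}^{-1}(N) = \rank N$ for every $N \in \cM_{m\times k}$.

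From this the claim follows at once. If $M \in \cI_{r,m\times n}^G$, then $\psi_{\cP(\sigma)}(M)$ is an $m\times k$ matrix, so its rank is at most $k$, and it is also at most $\rank M \le r$; hence $\psi_{\cP(\sigma)}(M) \in \cM_{\min(r,k),m\times k}$. Conversely, if $N \in \cM_{\min(r,k),m\times k}$, then $\psi_{\cP(\sigma)}^{-1}(N)$ satisfies, by construction, all the column-equality constraints describing $\cI_{m\times n}^\sigma$ in \Cref{prop:diminvcyclic}, and it has rank $\le \min(r,k) \le r$, so it lies in $\cI_{r,m\times n}^G$. Thus the restriction of $\psi_{\cP(\sigma)}$ is a bijection $\cI_{r,m\times n}^G \leftrightarrow \cM_{\min(r,k),m\times k}$ with linear inverse, i.e.\ an isomorphism of affine varieties. (Here $\cM_{r,m\times k} = \cM_{\min(r,k),m\times k}$, since any matrix with $k$ columns already has rank at most $k$, which is why the target is phrased with $\min(r,k)$.)

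I do not expect a genuine obstacle: the content is carried entirely by \Cref{prop:diminvcyclic} together with the invariance of rank under column duplication. The only point deserving a word of care is that the map is an isomorphism of varieties and not merely of point sets—but this is automatic, since both $\psi_{\cP(\sigma)}$ and $\psi_{\cP(\sigma)}^{-1}$ are given by very simple linear formulas, hence pull polynomial functions back to polynomial functions and match up the defining ideals.
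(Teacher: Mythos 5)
Your proposal is correct and follows essentially the same route as the paper: restrict the column-deletion isomorphism of \Cref{prop:diminvcyclic} and match the rank conditions, using that deleting or duplicating repeated columns leaves the column space (hence the rank) unchanged, with linearity of both the map and its $\cP(\sigma)$-dependent inverse giving the isomorphism of varieties. Your write-up is in fact more explicit than the paper's brief argument, but there is no substantive difference in approach.
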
 
\begin{proof} For each $A_i\in \cP(\sigma),$ $\Psi_{\cP(\sigma)}$ remembers the column $M_{A_i}$ of~$M$. Intersecting with $\cM_{r,m\times n}$ imposes linear dependencies on the columns of~$M$. This mapping rule is linear in the entries of~$M$. To invert~$\psi_{\cP(\sigma)}$, one  needs to remember $\cP(\sigma)$ as datum.
\end{proof}
\begin{example}[$m=2$, $n=5$, $r=1$] 
Let $\sigma=(1 \, 3 \, 4)(2 \, 5)\in \cS_5$ and hence $k=2$. Any invariant matrix $M\in \cM_{2\times 5}(\RR)$ is of the form $\left( \begin{smallmatrix}
        \alpha & \gamma & \alpha & \alpha & \gamma \\
        \beta & \delta & \beta & \beta & \delta
    \end{smallmatrix}\right)$ for some $\alpha,\beta,\gamma,\delta\in \RR$. The rank constraint $r=1$ imposes that $(\gamma,\delta)=\lambda\cdot(\alpha,\beta)^\top$ for some $\lambda \in \RR,$ where we assume w.l.o.g. that $(\alpha,\beta)\neq (0,0)$ in the case of rank one. The morphism~\eqref{eq:psi} hence is
    \begin{align}
        \psi_{\cP(\sigma)}\colon \ \begin{pmatrix}
             \alpha & \lambda \alpha & \alpha & \alpha & \lambda\alpha \\
        \beta & \lambda \beta & \beta & \beta & \lambda \beta
        \end{pmatrix} \, \mapsto \, \begin{pmatrix}
            \alpha & \lambda \alpha \\ \beta & \lambda \beta
        \end{pmatrix} .
    \end{align}
To invert the morphism $\psi_{\cP(\sigma)}$, one reads from $\cP(\sigma)$ how to copy and paste the columns $(\alpha,\beta)^\top$ and $(\lambda \alpha,\lambda \beta)^\top$ to recover the $2\times 5$ matrix one started with.
\end{example}

\begin{corollary}\label{cor:dimdeginv}
In the setup of \Cref{prop:isoinvM}, one has
\begin{align}\begin{split}
\dim \left(  \cI_{r,m\times n}^\sigma \right) &\,=\ \min(r,k)\cdot(m+k-\min(r,k)) \, , \\ \deg \left( \cI_{r,m\times n}^\sigma \right) &\,=\ \prod_{i=0}^{k-\min(r,k)-1} \frac{(m+i)!\cdot i!}{(\min(r,k)+i)!\cdot (m-(\min(r,k)+i)!} \, , \\ \Sing(\cI_{r,m\times n}^\sigma) &\,=\ \psi^{-1}_{\cP(\sigma)}\left( \cM_{r-1,m\times k} \right) 
\text{ if } r < \min(m,k), 
\text{ and empty otherwise}.
\end{split}\end{align}
\end{corollary}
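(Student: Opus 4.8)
The plan is to deduce all three formulas from \Cref{prop:isoinvM} together with the facts about determinantal varieties recorded earlier in the excerpt. By \Cref{prop:isoinvM}, the map $\psi_{\cP(\sigma)}$ is a linear isomorphism from $\cI_{r,m\times n}^\sigma$ onto $\cM_{\min(r,k),m\times k}$; its inverse is the restriction of the linear column-duplication embedding $\cM_{m\times k}\hookrightarrow\cM_{m\times n}$, which sends $\cM_{\min(r,k),m\times k}$ onto $\cI_{r,m\times n}^\sigma$. A linear isomorphism of affine varieties that factors through such a linear embedding of ambient coordinate spaces preserves the Krull dimension and the degree, and carries regular points to regular points (hence the singular locus to the singular locus). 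It therefore suffices to read off the corresponding invariants of $\cM_{\min(r,k),m\times k}$.

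First I would substitute $n\rightsquigarrow k$ and $r\rightsquigarrow\min(r,k)$ into the dimension formula \eqref{eq:dimMrmn}, obtaining $\dim(\cI_{r,m\times n}^\sigma)=\min(r,k)\cdot(m+k-\min(r,k))$. The same substitution into the degree formula \eqref{eq:degMrmn} gives $\deg(\cI_{r,m\times n}^\sigma)=\prod_{i=0}^{k-\min(r,k)-1}\frac{(m+i)!\,i!}{(\min(r,k)+i)!\,(m-\min(r,k)+i)!}$, which is the asserted product; in the range $r\le k$, where $\min(r,k)=r$, it reduces to the displayed form. Here one only has to observe that applying \eqref{eq:degMrmn} is legitimate since $\min(r,k)\le\min(m,k)$ always holds.

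For the singular locus I would invoke \Cref{lem:sing-mat}. If $0<r<\min(m,k)$ then $\min(r,k)=r$, and \Cref{lem:sing-mat} applied to $\cM_{r,m\times k}$ (i.e.\ with $0<r<k$) gives $\Sing(\cM_{r,m\times k})=\cM_{r-1,m\times k}$; transporting this through the isomorphism yields $\Sing(\cI_{r,m\times n}^\sigma)=\psi_{\cP(\sigma)}^{-1}(\cM_{r-1,m\times k})$. In every other case, $\min(r,k)$ is either $0$, so that $\cM_{\min(r,k),m\times k}$ is a single reduced point, or at least $\min(m,k)$, so that $\cM_{\min(r,k),m\times k}$ equals the whole affine space $\cM_{m\times k}$; in both situations that variety is smooth, hence so is its isomorphic copy $\cI_{r,m\times n}^\sigma$, and the singular locus is empty.

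I do not expect a real obstacle, since this is a corollary of \Cref{prop:isoinvM}. The two points that need a little attention are: justifying that the degree is unchanged even though the ambient dimension drops from $mn$ to $mk$ under $\psi_{\cP(\sigma)}$ --- this is the standard fact that a linear embedding into a linear subspace preserves degree, because a generic affine-linear subspace of complementary dimension meets the subspace in a generic affine-linear subspace of the appropriate dimension within it --- and keeping the case distinction between $\min(r,k)$ and $r$ straight, so that the hypothesis ``$r<\min(m,k)$'' in the statement matches the hypothesis ``$0<r<n$'' of \Cref{lem:sing-mat}.
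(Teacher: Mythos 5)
Your proposal is correct and follows exactly the paper's route: the paper's proof is a one-line citation of \Cref{prop:isoinvM} together with \eqref{eq:dimMrmn}, \eqref{eq:degMrmn}, and \Cref{lem:sing-mat}, which is precisely what you use. The extra details you supply (degree invariance under the linear column-duplication embedding and the case analysis matching $r<\min(m,k)$ with the hypothesis of \Cref{lem:sing-mat}) are routine justifications the paper leaves implicit, and they are handled correctly.
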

\begin{proof}
The statements are an immediate consequence of \Cref{prop:isoinvM} combined with Equations \eqref{eq:dimMrmn} and~\eqref{eq:degMrmn}, and \Cref{lem:sing-mat}. 
\end{proof}

We now consider the problem of finding an invariant matrix {of low rank} that is closest (with respect to the Euclidean distance) to a given matrix---or, more generally, squared-error loss minimization
\begin{align}\label{eq:minimizeInv}
    \argmin_{M \in \, \cI^\sigma_{r, m \times n}} \Vert MX - Y \Vert^2_F \, .
\end{align}
We will see in \Cref{sec:networkDesign} that there are linear neural networks whose function space is~$\cI^\sigma_{r, m \times n}$. 
Training such a network with the squared-error loss aims at solving~\eqref{eq:minimizeInv}.

\begin{proposition} \label{prop:EYinv}
For all matrices $X \in \RR^{n \times d}$ and $Y \in \RR^{m \times d}$ with $\rank(XX^\top) = n$,  squared-error loss minimization \eqref{eq:minimizeInv} can be solved using Eckart--Young on $\cM_{\min(r,k), m \times k}(\RR) $, where~$k$ is the number of disjoint cycles in $\sigma$.
In particular, 
\begin{align}\label{eq:SEdegreeInv}
    \squerdeg (\cI^\sigma_{r, m \times n}) \,=\, \deged (\cI^\sigma_{r, m \times n}) \,=\, \deged (\cM_{\min(r,k), m \times k}) = \binom{\min(m,k)}{\min(r,k)}.
\end{align}
\end{proposition}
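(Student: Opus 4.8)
The plan is to transport the optimization problem through the column-deletion isomorphism $\psi_{\cP(\sigma)}$ of \Cref{prop:isoinvM} and reduce it to the case of the full determinantal variety $\cM_{\min(r,k),m\times k}(\RR)$, which is already handled by \Cref{ex:FisEverything} and the Eckart--Young \Cref{thm:EY}. Write $\cP(\sigma)=\{A_1,\dots,A_k\}$ and let $S\in\{0,1\}^{k\times n}$ be the incidence matrix of this partition, i.e., the $i$-th row of $S$ is the indicator vector of the block $A_i$; since the $A_i$ are nonempty and pairwise disjoint, $S$ has full row rank $k$ and $SS^\top = \Diag(|A_1|,\dots,|A_k|)$. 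For any $M\in\cI^\sigma_{m\times n}$, all columns of $M$ indexed by a common block coincide, so $M = \psi_{\cP(\sigma)}(M)\cdot S$, and hence $MX = \tilde M\tilde X$ where $\tilde M := \psi_{\cP(\sigma)}(M)\in\RR^{m\times k}$ and $\tilde X := SX\in\RR^{k\times d}$. Therefore $\Vert MX-Y\Vert_F^2 = \Vert\tilde M\tilde X - Y\Vert_F^2$, and since $\psi_{\cP(\sigma)}$ restricts to a linear isomorphism $\cI^\sigma_{r,m\times n}\xrightarrow{\ \sim\ }\cM_{\min(r,k),m\times k}(\RR)$, the minimization \eqref{eq:minimizeInv} is equivalent to the squared-error loss minimization $\argmin_{\tilde M\in\cM_{\min(r,k),m\times k}(\RR)}\Vert\tilde M\tilde X - Y\Vert_F^2$ with data matrices $\tilde X$ and $Y$.

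Next I would check the full-rank hypothesis for the reduced problem: $\tilde X\tilde X^\top = S(XX^\top)S^\top$ is positive definite, because $XX^\top$ is positive definite (as $\rank(XX^\top)=n$) and $S^\top$ has trivial kernel, so $v^\top S(XX^\top)S^\top v > 0$ for every nonzero $v\in\RR^k$; thus $\rank(\tilde X\tilde X^\top)=k$. Now \Cref{ex:FisEverything} applies to $\cM_{\min(r,k),m\times k}(\RR)$, since it is the function space of a fully-connected linear network and right multiplication by the positive definite matrix $(\tilde X\tilde X^\top)^{1/2}$ is an automorphism of it: by \Cref{lem:squaredErrorED} together with \eqref{eq:FisEverything}, the reduced minimization is equivalent to minimizing the Frobenius distance from $\cM_{\min(r,k),m\times k}(\RR)$ to the matrix $Y\tilde X^\top(\tilde X\tilde X^\top)^{-1/2}$, which the Eckart--Young \Cref{thm:EY} solves explicitly via a singular value decomposition. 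Undoing this automorphism and the isomorphism $\psi_{\cP(\sigma)}$ produces a minimizer of \eqref{eq:minimizeInv}.

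For the degree identities, the same reduction controls the critical points. As $\psi_{\cP(\sigma)}$ is a linear isomorphism of affine varieties, it carries the critical points of the squared-error loss on $\cI^\sigma_{r,m\times n}$ bijectively onto those of $\tilde M\mapsto\Vert\tilde M\tilde X - Y\Vert_F^2$ on $\cM_{\min(r,k),m\times k}(\RR)$. When $X\in\RR^{n\times n}$ and $Y\in\RR^{m\times n}$ are generic, $\tilde X = SX$ is a generic $k\times n$ matrix — the linear map $X\mapsto SX$, $\RR^{n\times n}\to\RR^{k\times n}$, is surjective since $S$ has rank $k$ — and $Y$ is still generic, so the reduced data $(\tilde X,Y)$ is generic; the analogous statement holds when $XX^\top$ is a multiple of $\Id_n$, in which case $\tilde X\tilde X^\top$ is a positive multiple of $\Diag(|A_1|,\dots,|A_k|)$. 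In either case, by the reduction via \Cref{ex:FisEverything} the number of complex critical points equals $\deged(\cM_{\min(r,k),m\times k}(\RR))$, which by \eqref{eq:EDdegMrmn} equals $\binom{\min(m,k)}{\min(r,k)}$. This gives simultaneously $\squerdeg(\cI^\sigma_{r,m\times n}) = \deged(\cI^\sigma_{r,m\times n}) = \binom{\min(m,k)}{\min(r,k)}$.

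The routine but essential point to get right is that the column-deletion reduction preserves \emph{both} the invertibility of the weight matrix $XX^\top$ \emph{and} genericity, so that the classical facts about $\cM_{\min(r,k),m\times k}$ — Eckart--Young and its squared-error / Euclidean distance degree $\binom{\min(m,k)}{\min(r,k)}$ — transfer verbatim. A second minor point is that \Cref{def:squaredErrorDegree} fixes the number of data columns of a subvariety of $\cM_{m\times k}(\RR)$ to $k$, whereas the reduced problem has $d=n\ge k$ columns; this is harmless because \Cref{ex:FisEverything} shows that the number of complex critical points of the squared-error loss on the determinantal variety $\cM_{\min(r,k),m\times k}(\RR)$ is $\binom{\min(m,k)}{\min(r,k)}$ for \emph{any} generic data with $\rank(\tilde X\tilde X^\top)=k$, independently of $d$. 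Beyond this bookkeeping I do not expect a genuine obstacle.
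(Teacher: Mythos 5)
Your proposal is correct and follows essentially the same route as the paper: reduce via the column-deletion isomorphism $\psi_{\cP(\sigma)}$ and the row-summed data matrix $\tilde X = E_{\cP(\sigma)}X$ (your $S$ is exactly $E_{\cP(\sigma)}$), check that $\tilde X\tilde X^\top$ has full rank $k$, and then invoke \Cref{lem:squaredErrorED}, \Cref{ex:FisEverything}, and Eckart--Young, with the degree identities following from genericity of $(\tilde X, Y)$. Your write-up merely spells out the positive-definiteness of $S(XX^\top)S^\top$ and the genericity transfer, which the paper leaves implicit.
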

\begin{proof}
We make use of the linear isomorphism $\psi_{\mathcal{P}(\sigma)}$ from \eqref{eq:psi} that {deletes} columns of a given matrix~$M$.
For each set $A \in \mathcal{P}(\sigma)$, it keeps only one of the columns indexed by $A$.
From the data matrix $X \in \RR^{n \times d}$, we define a new matrix $\tilde X \in \RR^{k \times d}$ by summing all the rows of $X$ that are indexed by the same set $A \in \mathcal{P}(\sigma)$.
That way, we have $M \cdot X = \psi_{\mathcal{P}(\sigma)}(M) \cdot \tilde X$.
Hence, minimizing the squared-error loss on $\cI^\sigma_{r, m \times n}$ as in \eqref{eq:minimizeInv} is equivalent to {solving}
\begin{align} \label{eq:minimizeInvReformulation}
    \argmin_{\tilde{M} \in \, \cM_{\min(r,k), m \times k}} \Vert \tilde M \tilde X - Y \Vert^2_F \, .
\end{align}
Note that $\tilde{X}\tilde{X}^\top$ is of full rank $k$ since $XX^\top$ is a full-rank matrix. 
Therefore, we can apply \Cref{lem:squaredErrorED} and solve \eqref{eq:minimizeInvReformulation}
via Eckart--Young as in
\Cref{ex:FisEverything}. 
By choosing $X \in \RR^{n \times n}$ to be either generic or the identity, we observe that \eqref{eq:SEdegreeInv} holds for the squared-error degree and the Euclidean distance degree of $\cI^\sigma_{r, m \times n}$, respectively. 
\end{proof}

\subsection{Parameterizing invariance and network design} \label{sec:networkDesign}
In this section, we investigate which implications imposing invariance has on the individual layers of a linear autoencoder. 
Recall that a  matrix $M \in \cM_{m\times n}(\KK)$ has rank $r\le \min(m,n)$ if and only if there exist rank-$r$ matrices $A \in \cM_{m\times r}(\KK)$ and $B \in \cM_{r\times n}(\KK)$ such that~$M = AB$.
The factors of an invariant matrix do not need to be invariant, as the following example demonstrates. To be more precise, this question makes sense to be asked a priori only for the first layer, $B$, since the group~$G$ acts on $\RR^n$ only.

\begin{example}
Let $\sigma=(1 \, 2)\in \cS_3$ and consider 
\begin{align}
 A \,=\, 
\begin{pmatrix}
    1 & 1 \\
    1& 1\\
    0 & 0
\end{pmatrix} 
\quad \text{and} \quad  B \,=\, 
\begin{pmatrix}
     1 & 2 & 1 \\
     2 & 1 & 1
\end{pmatrix}  .
\end{align}
Observe that $B$ is not invariant under $\sigma$, but the product $AB$ is.
\end{example}

\begin{lemma}[{\cite[Proposition~22]{purespurcrit}}]\label{prop:fiberAB}
Let $r\leq \min(m,n)$. Denote by \begin{align}
\mu\colon\, \cM_{m\times r}\times \cM_{r\times n}, \quad  (A,B) \mapsto A\cdot B,
\end{align}
the multiplication map. 
If $\rank(M)=r$ and $M=\mu(A,B)$, then its fiber is
\begin{align}
    \mu^{-1}(M) \,=\, \left\{  \left( AT^{-1},TB \right) \, | \, T\in\GL_r(\KK) \right\} \, \subset \,  \cM_{m\times r}\times \cM_{r\times n}\, .
\end{align}
\end{lemma}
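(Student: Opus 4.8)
The plan is to split into the two inclusions, dispatch the easy one by a one-line computation, and reduce the hard one to a single structural fact about column spaces. First I would observe that the rank-$r$ hypothesis makes the factorization ``tight'': from $r = \rank(M) = \rank(AB) \le \min(\rank A, \rank B) \le r$ we get $\rank A = \rank B = r$, so $A \in \cM_{m \times r}$ has full column rank and $B \in \cM_{r \times n}$ has full row rank. The fact I will use is that $\operatorname{col}(M) = \operatorname{col}(A)$: the inclusion $\operatorname{col}(AB) \subseteq \operatorname{col}(A)$ holds for any product, and since both subspaces of $\KK^m$ have dimension $r$ here, they coincide.

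The inclusion ``$\supseteq$'' is immediate: for any $T \in \GL_r(\KK)$ we have $(AT^{-1})(TB) = A(T^{-1}T)B = AB = M$, so each such pair lies in $\mu^{-1}(M)$.

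For ``$\subseteq$'', take an arbitrary pair $(A', B')$ with $A'B' = M$. Applying the opening observation to this factorization gives $\rank(A') = r$ and $\operatorname{col}(A') = \operatorname{col}(M) = \operatorname{col}(A)$. Hence the columns of $A$ and the columns of $A'$ are two bases of one and the same $r$-dimensional subspace of $\KK^m$, so there is a unique $S \in \GL_r(\KK)$ with $A' = AS$; putting $T \coloneqq S^{-1}$ gives $A' = AT^{-1}$. Substituting into $A'B' = M = AB$ yields $A\,(B - T^{-1}B') = 0$, and since $A$ has full column rank the map $X \mapsto AX$ on $\cM_{r \times n}(\KK)$ is injective, forcing $B' = TB$. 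Thus $(A',B') = (AT^{-1}, TB)$, as claimed.

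I do not expect a genuine obstacle: this is \cite[Proposition~22]{purespurcrit} and the argument is elementary linear algebra. The only thing that warrants care is the bookkeeping of the base change (which of the two factors carries $T$ and which carries $T^{-1}$) and spelling out why, once the ambiguity in $A$ is pinned down, the ambiguity in $B$ disappears --- precisely because left multiplication by a matrix of full column rank is injective.
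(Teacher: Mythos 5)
Your proof is correct; the paper itself gives no argument for this lemma, citing \cite[Proposition~22]{purespurcrit} instead, and your argument is the standard one that the citation stands in for: rank tightness forces both factors to have full rank, equality of column spaces pins down $A'$ up to a unique invertible $S$, and injectivity of left multiplication by a full-column-rank matrix then determines $B'$.
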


For $\sigma=\pi_1\circ \cdots \circ \pi_k \in \cS_n$ with pairwise disjoint cycles $\pi_i$, we denote by $E_{\cP(\sigma)}$ the $k\times n$ matrix that has the $i$-th standard basis vector $e_i\in \RR^k$ in column $j$ for all $j\in A_i$, $i=1,\ldots,k$. In particular, $e_i$ occurs $\text{length}(\pi_i)$ many times as a column of $E_{\cP(\sigma)}$, and $\operatorname{rank}(E_{\cP(\sigma)})=k$.

\begin{lemma}\label{prop:invMfators}
Right-multiplication with $E_{\cP(\sigma)}$ is precisely the inverse of the linear isomorphism $\psi_{\cP(\sigma)}$ from \eqref{eq:psi}. In other words,
any matrix $M \in \cI^\sigma_{m \times n}$ 
factorizes as
\begin{align}
    M \ =\  \psi_{\cP(\sigma)}(M) \cdot E_{\cP(\sigma)} \, .
\end{align}
\end{lemma}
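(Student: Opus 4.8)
The plan is a direct, column-by-column verification. First I would unwind the definitions. Write $\cP(\sigma) = \{A_1,\dots,A_k\}$, and recall from \Cref{prop:diminvcyclic} that a matrix $M \in \cI^\sigma_{m\times n}$ is precisely one whose columns are constant on each block: $M_i = M_j$ whenever $i$ and $j$ belong to the same $A_\ell$. Denote this common column by $M_{A_\ell}$. By construction, $\psi_{\cP(\sigma)}(M)$ is then the $m\times k$ matrix whose $\ell$-th column is $M_{A_\ell}$. Now I would compute the product $\psi_{\cP(\sigma)}(M)\cdot E_{\cP(\sigma)}$ one column at a time: by definition of $E_{\cP(\sigma)}$, its $j$-th column is the standard basis vector $e_\ell \in \RR^k$ whenever $j \in A_\ell$, so the $j$-th column of $\psi_{\cP(\sigma)}(M)\cdot E_{\cP(\sigma)}$ equals $\psi_{\cP(\sigma)}(M)\cdot e_\ell$, which is the $\ell$-th column of $\psi_{\cP(\sigma)}(M)$, namely $M_{A_\ell} = M_j$. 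As this holds for every $j \in [n]$, we get $\psi_{\cP(\sigma)}(M)\cdot E_{\cP(\sigma)} = M$; since $\psi_{\cP(\sigma)}$ is a linear isomorphism by \Cref{prop:isoinvM}, right-multiplication by $E_{\cP(\sigma)}$ must be its (two-sided) inverse.

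For completeness I would also verify the reverse composition directly: for an arbitrary $N \in \cM_{m\times k}$, the same column computation shows that $N\cdot E_{\cP(\sigma)}$ has columns constant on each block of $\cP(\sigma)$, hence lies in $\cI^\sigma_{m\times n}$, and that $\psi_{\cP(\sigma)}(N\cdot E_{\cP(\sigma)}) = N$ since $\psi_{\cP(\sigma)}$ merely records one column per block. Because $E_{\cP(\sigma)}$ has full row rank $k$, one has $\rank(N\cdot E_{\cP(\sigma)}) = \rank(N)$, so this identification restricts correctly to the rank-bounded varieties, matching the statement of \Cref{prop:isoinvM} for every $r$.

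I do not expect a genuine obstacle here; the argument is elementary linear algebra. The only point requiring minor care is index bookkeeping — ensuring that ``the column retained by $\psi_{\cP(\sigma)}$ for the block $A_\ell$'' and ``the positions into which $E_{\cP(\sigma)}$ places $e_\ell$'' refer to the same index $\ell$. Once the labelling conventions from \Cref{prop:isoinvM} and from the paragraph defining $E_{\cP(\sigma)}$ are fixed consistently, the asserted factorization $M = \psi_{\cP(\sigma)}(M)\cdot E_{\cP(\sigma)}$ is immediate.
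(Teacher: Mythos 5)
Your proof is correct and follows essentially the same route as the paper: the paper's proof is a verbal statement of the same column-by-column observation, namely that $\psi_{\cP(\sigma)}(M)$ retains one column per block and $E_{\cP(\sigma)}$ encodes the repetition pattern that reproduces the remaining columns. Your additional verification of the reverse composition and the rank compatibility is a harmless elaboration of what the paper leaves implicit.
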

\begin{proof} 
Let $M$ be invariant under $\sigma\in \cS_n$.
For each $A_i\in \cP(\sigma)$, the map $\psi_{\cP(\sigma)}$ remembers one of the columns indexed by $A_i$, and collects them as the columns of the $m\times k$ matrix $\psi_{\cP(\sigma)}(M)$.
The remaining columns of $M$ are copies of the columns of $\psi_{\cP(\sigma)}(M)$. The repetition pattern is encoded in~$E_{\cP(\sigma)}$.
\end{proof}

\begin{example}[$n=4$]
Let $\sigma=(1 \, 3)(2)(4)\in \cS_4$, so that $k=3$. The matrix 
\begin{align}
M \,=\, \begin{pmatrix}
    1 & 0 & 1 & 1\\
    2 & 4 & 2 & 1\\
    3 & 4 & 3 & 1\\
    0 & 2 & 0 & 1
\end{pmatrix} \,=\, \begin{pmatrix}
    1 & 0 & 1 \\
    2 & 4 & 1\\
    3 & 4 & 1\\
    0 & 2 & 1
\end{pmatrix}\cdot \underbrace{\begin{pmatrix}
    1 & 0 & 1 & 0\\
    0 & 1 & 0 & 0\\
    0 & 0 & 0 & 1
\end{pmatrix}}_{=\, E_{\{\{1,3\},\{2\},\{4\}\}}}
\end{align}
is invariant under $\sigma$ and factorizes as claimed in \Cref{prop:invMfators}.
\end{example}

\begin{proposition}\label{cor:autoencoderFact}
Let $\sigma \in \cS_n$ be decomposable into $k$ pairwise disjoint cycles with~$k \leq m$.
Then the two-layer factorizations of the matrices in $\cI^\sigma_{m \times n}$ that have the maximal possible rank $k$ are precisely those factorizations whose first layer is invariant under $\sigma$. In symbols, 
\begin{align}\begin{split} \label{eq:arbitratyFactorizationInv}
    \left\{ (A,B) \in \cM_{m \times k} \times \cM_{k \times n} \mid AB \in \cI^\sigma_{m \times n}, \,\rank(AB)=k\right\} \qquad \qquad \\
 = \, \left\{ A \in \cM_{m \times k} \mid \rank(A) = k \} \times \{ T \cdot E_{\cP(\sigma)} \mid  T\in \GL_k \right\} .
\end{split}
\end{align}
\end{proposition}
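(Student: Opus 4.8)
The plan is to prove the two inclusions separately, leaning heavily on \Cref{prop:invMfators} and the fiber description \Cref{prop:fiberAB}. For the inclusion $\supseteq$, I would start with a pair $(A, T\cdot E_{\cP(\sigma)})$ where $A\in\cM_{m\times k}$ has rank $k$ and $T\in\GL_k$. Since $T$ is invertible and $E_{\cP(\sigma)}$ has rank $k$, the product $B\coloneqq T\cdot E_{\cP(\sigma)}$ has rank $k$, and hence $AB$ has rank exactly $k$ (both factors have rank $k = \min(m,k,n)$). It remains to check $AB\in\cI^\sigma_{m\times n}$. The key observation is that $E_{\cP(\sigma)}$ is invariant under $\sigma$ by construction: its columns indexed by the same block $A_i\in\cP(\sigma)$ are all equal to $e_i$, so right-multiplication by the permutation matrix $P_\sigma$ permutes columns within blocks and leaves $E_{\cP(\sigma)}$ fixed. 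Therefore $B = T\cdot E_{\cP(\sigma)}$ is also invariant (invariance is a linear condition preserved under left multiplication, since $BP_\sigma = T E_{\cP(\sigma)}P_\sigma = T E_{\cP(\sigma)} = B$), and consequently so is $AB$. This gives $AB\in\cI^\sigma_{m\times n}$ with rank $k$.

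For the inclusion $\subseteq$, take $(A,B)\in\cM_{m\times k}\times\cM_{k\times n}$ with $M\coloneqq AB\in\cI^\sigma_{m\times n}$ and $\rank(M)=k$. By \Cref{prop:invMfators}, $M$ factorizes as $M = \psi_{\cP(\sigma)}(M)\cdot E_{\cP(\sigma)}$. Note that $\psi_{\cP(\sigma)}(M)\in\cM_{m\times k}$ has rank $k$: indeed $\rank(M)=k$ and $M$ is obtained from $\psi_{\cP(\sigma)}(M)$ by repeating columns, so $\rank(\psi_{\cP(\sigma)}(M)) \geq \rank(M) = k$, and it is an $m\times k$ matrix so its rank is exactly $k$ (here we use $k\leq m$). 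Thus $(\psi_{\cP(\sigma)}(M),\, E_{\cP(\sigma)})$ is a valid rank-$k$ factorization of $M$. Now apply \Cref{prop:fiberAB}: since $\rank(M)=k$, the fiber $\mu^{-1}(M)$ consists precisely of the pairs $(\psi_{\cP(\sigma)}(M)\cdot T^{-1},\ T\cdot E_{\cP(\sigma)})$ for $T\in\GL_k$. In particular, our given pair $(A,B)$ is of this form for some $T\in\GL_k$, so $B = T\cdot E_{\cP(\sigma)}$ and $A = \psi_{\cP(\sigma)}(M)\cdot T^{-1}$, which is an $m\times k$ matrix of rank $k$. This places $(A,B)$ in the right-hand side of \eqref{eq:arbitratyFactorizationInv}.

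I expect the main (minor) obstacle to be the bookkeeping around ranks: one must be careful that $k = \rank(AB)$ is simultaneously $\leq m$ (given), $\leq n$ (automatic since there are $k$ blocks partitioning $[n]$, so $k\leq n$), and that column-repetition does not decrease rank, so that $\psi_{\cP(\sigma)}(M)$ and $E_{\cP(\sigma)}$ genuinely have full rank $k$ and the fiber lemma applies. A secondary point worth stating explicitly is that invariance of a matrix $N$ under $\sigma$ is equivalent to $N P_\sigma = N$, and that this condition is stable under left multiplication by any matrix, which is what lets us pass from invariance of $E_{\cP(\sigma)}$ (equivalently of $B$) to invariance of $AB$; this is essentially \Cref{prop:diminvcyclic} applied at the level of the factor $B$. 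Everything else is a direct assembly of \Cref{prop:invMfators} and \Cref{prop:fiberAB}.
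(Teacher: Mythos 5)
Your argument is correct and follows essentially the same route as the paper: the forward inclusion uses the invariance of $T\cdot E_{\cP(\sigma)}$ (which the paper attributes to \Cref{prop:invMfators} and you verify directly from $E_{\cP(\sigma)}P_\sigma = E_{\cP(\sigma)}$), and the converse combines the factorization $AB = \psi_{\cP(\sigma)}(AB)\cdot E_{\cP(\sigma)}$ from \Cref{prop:invMfators} with the fiber description of \Cref{prop:fiberAB}. Your extra bookkeeping on ranks only makes explicit what the paper states briefly.
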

\begin{proof}
For $A \in \cM_{m \times k}$ of full rank $k$ and $T \in \GL_k$, the product $AT E_{\cP(\sigma)} $ has rank $k$ and is invariant under $\sigma$ by \Cref{prop:invMfators}.
For the converse direction, consider $(A,B)$ as in the first row of \eqref{eq:arbitratyFactorizationInv}.
Since the product $AB$ has rank $k$, both factors $A$ and $B$ must be of full rank.
By \Cref{prop:invMfators}, we have that
$AB = \psi_{\cP(\sigma)}(AB) \cdot E_{\cP(\sigma)} $.
Moreover, \Cref{prop:fiberAB} implies that
$(A,B) = (\psi_{\cP(\sigma)}(AB) T^{-1}, T E_{\cP(\sigma)}) $ for some $T \in \GL_k$.
\end{proof}

This 
tells us that linear autoencoders are well-suited for expressing invariance when one imposes appropriate weight-sharing on the encoder.
More precisely, the decoder factor $A$ in \eqref{eq:arbitratyFactorizationInv} is an arbitrary full-rank matrix, but the encoder factor $T\cdot E_{\cP(\sigma)}$ 
has repeated columns.
We impose this repetition pattern via weight-sharing on the encoder.
\Cref{cor:autoencoderFact} states that invariant matrices naturally lie in the function space of such an autoencoder.

\begin{definition}
Given any
permutation $\sigma \in \cS_n$, we say that an encoder $\RR^n \to \RR^r$ has \mbox{\em$\sigma$-weight-sharing} if its representing matrices satisfy the following: for every set $S \in \cP(\sigma)$, the columns indexed by the elements in $S$ coincide, and no additional weight-sharing is~imposed. 
\end{definition}

\begin{example}\label{ex:weightpic}
We revisit \Cref{ex:proc}. The invariance of a matrix $M=A\cdot B\in \cI_{2,5\times 5}^\sigma$ forces the encoder factor $B$ to fulfill the  weight-sharing property depicted in \Cref{fig:weightshar}. Which weights have to coincide is to be read from the color labeling in the figure.
\begin{figure}[h]
\begin{tikzpicture}[scale=0.24]
	\node[shape=circle,draw=black] (A) at (-5,6) {};
	\node[shape=circle,draw=black] (B) at (-5,3) {};
	\node[shape=circle,draw=black] (C) at (-5,0) {};
	\node[shape=circle,draw=black] (D) at (-5,-3) {};
    \node[shape=circle,draw=black] (E) at (-5,-6) {};
    \node[shape=circle,draw=black] (F) at (15,2) {};
	\node[shape=circle,draw=black] (G) at (15,-2) {};
	\path[very thick, Blue] (A) edge node[left] {}  (F); 
    \path[very thick, Aquamarine]  (A) edge node[left] {} (G);
    \path[very thick, OrangeRed] (B) edge node[left] {}  (F); 
    \path[very thick, BurntOrange]  (B) edge node[left] {} (G);
    \path[very thick, Blue] (C) edge node[left] {}  (F); 
    \path[very thick, Aquamarine]  (C) edge node[left] {} (G);
    \path[very thick, Blue] (D) edge node[left] {}  (F); 
    \path[very thick, Aquamarine]  (D) edge node[left] {} (G);
    \path[very thick, OrangeRed] (E) edge node[left] {}  (F); 
    \path[very thick, BurntOrange]  (E) edge node[left] {} (G);
	\node[] at (5,-6) {$B$};     
\end{tikzpicture}
\caption{The $\sigma$-weight-sharing property imposed on the encoder by $\sigma=(1 \, 3 \,4)(2 \, 5)$.}
\label{fig:weightshar}
\end{figure}
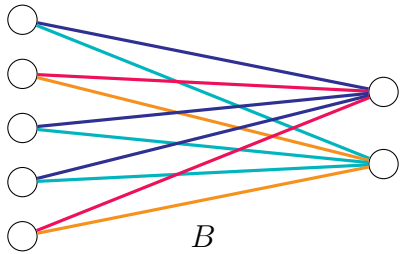
\end{example}

\begin{proposition}\label{prop:fctinvweight}
Let $\sigma \in \cS_n$ be a permutation consisting of $k$ disjoint cycles with $k\leq m$, and let $r \leq k$.
Consider the linear network $\RR^n \to \RR^r \to \RR^m$ with two fully-connected layers and $\sigma$-weight-sharing imposed on the first layer $\RR^n \to \RR^r$. Its function space is~$\cI^\sigma_{r, m \times n}(\RR)$.
\end{proposition}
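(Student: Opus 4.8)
The plan is to unwind the definition of the network's function space until it becomes literally the set described in \Cref{prop:invMfators}, at which point \Cref{prop:isoinvM} finishes the job. The network is the parameterized family $(A,B)\mapsto AB$ with $A\in\cM_{m\times r}(\RR)$ arbitrary and $B\in\cM_{r\times n}(\RR)$ ranging over all matrices with $\sigma$-weight-sharing, i.e. those whose columns are constant on each block of $\cP(\sigma)$ and otherwise unconstrained. First I would record the elementary observation that $B$ has $\sigma$-weight-sharing if and only if $B=C\,E_{\cP(\sigma)}$ for a unique $C\in\cM_{r\times k}(\RR)$: by definition of $E_{\cP(\sigma)}$, the $j$-th column of $C\,E_{\cP(\sigma)}$ is the $i$-th column of $C$ whenever $j\in A_i$, which is exactly the prescribed repetition pattern, and $C$ is recovered from $B$ by deleting the repeated columns. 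Hence the function space equals
\begin{align*}
	\left\{\,(AC)\,E_{\cP(\sigma)} \mid A\in\cM_{m\times r}(\RR),\ C\in\cM_{r\times k}(\RR)\,\right\}.
\end{align*}

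Next I would use that a matrix has rank at most $r$ if and only if it factors as a product of an $m\times r$ and an $r\times k$ matrix (recalled at the start of this section), which applies since $r\le\min(m,k)$; indeed $r\le k\le m$ by hypothesis, so $\min(m,k)=k\ge r$. Consequently $\{AC\mid A\in\cM_{m\times r}(\RR),\ C\in\cM_{r\times k}(\RR)\}=\cM_{r,m\times k}(\RR)$, and therefore the function space is the image of $\cM_{r,m\times k}(\RR)$ under right-multiplication by $E_{\cP(\sigma)}$.

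Finally, \Cref{prop:invMfators} identifies right-multiplication by $E_{\cP(\sigma)}$ with the inverse of the isomorphism $\psi_{\cP(\sigma)}$ of \Cref{prop:isoinvM}; and since $r\le k$ we have $\min(r,k)=r$, so $\psi_{\cP(\sigma)}$ restricts to a linear isomorphism $\cI^\sigma_{r,m\times n}\cong\cM_{r,m\times k}$. Applying its inverse to the real points $\cM_{r,m\times k}(\RR)$ yields exactly $\cI^\sigma_{r,m\times n}(\RR)$, which is therefore the function space. If one prefers not to cite \Cref{prop:isoinvM} here, the two inclusions are immediate by hand: $D\,E_{\cP(\sigma)}$ has rank $\le\rank D\le r$ and its columns repeat according to $\cP(\sigma)$, so it lies in $\cI^\sigma_{r,m\times n}$; conversely any $M\in\cI^\sigma_{r,m\times n}$ equals $\psi_{\cP(\sigma)}(M)\,E_{\cP(\sigma)}$ with $\rank\psi_{\cP(\sigma)}(M)\le\rank M\le r$.

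I do not anticipate a real obstacle, since the statement is essentially a repackaging of \Cref{prop:isoinvM} and \Cref{prop:invMfators}. The only points that need care are the rank bookkeeping---checking that the hypotheses $r\le k\le m$ are precisely what makes $\{AC\}$ fill out all of $\cM_{r,m\times k}(\RR)$ and makes $\min(r,k)=r$---and the routine passage between the complex determinantal varieties appearing in the cited results and their sets of real points, handled exactly as in \Cref{rem:dimreal}.
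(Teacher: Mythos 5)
Your proposal is correct and follows essentially the same route as the paper's proof: both directions rest on the factorization $M=\psi_{\cP(\sigma)}(M)\cdot E_{\cP(\sigma)}$ from \Cref{prop:invMfators} together with a rank-$r$ factorization of the $m\times k$ matrix (the paper writes $\psi_{\cP(\sigma)}(M)=A'B'$ where you write $AC$), with the weight-shared first layer identified with right-multiples of $E_{\cP(\sigma)}$. Your packaging of the function space as the image of $\cM_{r,m\times k}(\RR)$ under $\cdot\,E_{\cP(\sigma)}$ is just a mild reorganization of the same argument, and the rank bookkeeping ($r\le k\le m$, $\min(r,k)=r$) is handled correctly.
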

\begin{proof}
	Every matrix in the function space of the network is of the form $AB$ such that the first-layer factor $B \in \cM_{r \times n}$ has repeated columns according to $\cP(\sigma).$
Therefore, the product $AB$ has the same repetition in its columns, i.e., $AB \in \cI^\sigma_{m \times n}$. Since $\rank(AB) \leq \rank(B) \leq r$, we conclude that $AB \in \cI^\sigma_{r, m \times n}(\RR)$.
For the converse direction, consider $M \in \cI^\sigma_{r, m \times n}(\RR)$. By \Cref{prop:invMfators}, that matrix can be factorized as $M  =  \psi_{\cP(\sigma)}(M) \cdot E_{\cP(\sigma)}$. If $r=k$, that factorization is compatible with the network and we are done. 
Thus, it is left to consider the case $r<k$. Note that $E_{\cP(\sigma)}$ has full rank $k$ by construction since the standard basis vectors in the columns of that matrix correspond to the $k$ cycles of $\sigma$.
Hence, we have that $\rank(\psi_{\cP(\sigma)}(M)) = \rank(M) \leq r$. Therefore, we can factorize $\psi_{\cP(\sigma)}(M) = A' B'$, where $A' \in \cM_{m \times r}$ and $B' \in \cM_{r \times k}\, .$
The factorization $M = A' \cdot \left( B' \cdot E_{\cP(\sigma)}\right)$ is compatible with the network. 
\end{proof}

\subsection{Induced filtration of \texorpdfstring{$\cM_{r,m\times n}$}{M}}
If an $m\times n$ matrix $M$ is invariant under $\sigma\in \cS_n$, then it is also invariant under every permutation $\eta\in \cS_n$ whose associated partition $\cP(\eta)$ of $[n]$ is a refinement $\cP(\eta)\prec \cP(\sigma)$ of $\cP(\sigma)$. This induces a filtration $\cI_{m\times n}^\bullet$ of the variety $\cM_{m\times n}$, which is indexed by partitions of~$[n]$, and by intersecting with $\cM_{r,m\times n}$, we obtain a filtration $\cI_{r,m\times n}^\bullet$ of the variety $\cM_{r,m\times n}$. The set $\cP([n])$ of partitions of $[n]$ equals $\cS_n/\sim$, where we identify $\sigma_1\sim \sigma_2$ if and only if $\cP(\sigma_1)=\cP(\sigma_2)$. Here, $\cI_{r,m\times n}^\cP$ denotes any $\cI_{r,m\times n}^\sigma$ for which $\cP(\sigma)=\cP$. As we saw earlier on, the variety $\cI_{r,m\times n}^\sigma$ depends only on $\cP(\sigma)$, but not on $\sigma$ itself, hence this notion is well-defined. 
Together with refinements of partitions, the set of partitions of $[n]$ is a partially ordered set. Define the category $\underline{\Part}_{[n]}^\prec$ whose set of objects is $\cP([n])$, and a morphism from~$\cP_1$ to~$\cP_2$ whenever $\cP_1\prec \cP_2$. By $\underline{\Subv}_{\cM_{r,m\times n}}^\subset$, we denote the category whose objects are subvarieties of $\cM_{r,m\times n}$, and the inclusion as morphism between $U_1,U_2\in \underline{\Subv}_{\cM_{r,m\times n}}^\subset $ whenever $U_1\subset U_2$. This formulation gives rise to the functor
\begin{align}
\underline{\Part}_{[n]}^\prec\longrightarrow \underline{\Subv}_{\cM_{r,m\times n}}^\subset \, , \quad \cP \,\mapsto\, \cI_{r,m\times n}^\cP \, .
\end{align}

\begin{remark}
The opposite category $\underline{\Part}_{[n]}^{\prec,\text{op}}$ of $\underline{\Part}_{[n]}^\prec$ is $\underline{\Part}_{[n]}^\succ$, i.e., partitions of $[n]$ with coarsenings $\succ$ of partitions as morphisms. In this formulation, the finest common coarsening of partitions $\cP_1,\ldots,\cP_k$ then is their inverse limit.
\end{remark}

\section[Equivariance under cyclic subgroups of the symmetric group]{Equivariance under cyclic subgroups of~$\cS_n$}\label{sec:equiv}
In this section, we address equivariance under cyclic subgroups $G=\langle \sigma \rangle\leq \cS_n$. For the cyclic group $G=\langle \sigma \rangle$, a linear map $f:\mathbb{R}^n \to \mathbb{R}^n$ is equivariant under $G$ if
\begin{align}
f\circ \sigma \, = \, \sigma \circ f. 
\end{align}
We explore the irreducible components {of the algebraic variety} $\cE_{r,n\times n}^\sigma\subset \cM_{r,n\times n}$, both over $\RR$ and over~$\CC$, and {quantities} such as their dimensions, degrees, and singular loci.
We parameterize the real components via autoencoders and discuss minimizing both the squared-error loss and the standard Euclidean distance on such a component.

\subsection{Characterizing equivariance}
We consider a permutation $\sigma = \pi_1 \circ \cdots \circ \pi_k \in \cS_n$ with a decomposition into pairwise disjoint cycles of lengths $\ell_1, \ldots, \ell_k$. 
We reorder the entries in $\RR^n$ as in Step~1 of \Cref{proc:basechange} such that the permutation matrix $P_\sigma$ becomes block diagonal with cyclic blocks as in \eqref{eq:circulant}.
Those square blocks have sizes $\ell_1, \ldots, \ell_k$.

We are interested in matrices $M \in \cM_{n \times n}$ that are equivariant under $\sigma$. For that, we divide $M \in \cM_{n \times n}$ into blocks following the same pattern as in $P_\sigma$:
$M$ has square diagonal blocks $M^{(ii)}$ of size $\ell_i \times \ell_i$ and rectangular off-diagonal blocks $M^{(ij)}$ of size $\ell_i \times \ell_j$.
We now show that the equivariance of $M$ under $\sigma$ means that its blocks have to be {\em (rectangular) circulant matrices}. 
We can observe this property in our previous example~\eqref{eq:matrotequi}.
We call a (possibly non-square) matrix {\em circulant} if each row is a copy of the previous row, cyclically shifted one step to the right, and also each column is a copy of the previous column, cyclically shifted one step downwards.
Some examples of such matrices are shown in \Cref{eq:specialcirc}. 
A circulant matrix of size $\ell_i \times \ell_j$ has at most $\gcd(\ell_i,\ell_j)$ different entries.
\begin{align}\begin{split}\label{eq:specialcirc}
\begin{pmatrix}
    \alpha^{(ij)} & \beta^{(ij)} & \gamma^{(ij)} \\
    \gamma^{(ij)} & \alpha^{(ij)} & \beta^{(ij)} \\
    \beta^{(ij)} & \gamma^{(ij)} & \alpha^{(ij)}
\end{pmatrix}, \quad \begin{pmatrix}
    \alpha^{(ij)} & \alpha^{(ij)}&\alpha^{(ij)}\\
    \alpha^{(ij)} & \alpha^{(ij)} & \alpha^{(ij)}\\
    \alpha^{(ij)} & \alpha^{(ij)} & \alpha^{(ij)}\\
    \alpha^{(ij)} & \alpha^{(ij)} & \alpha^{(ij)}
\end{pmatrix}, \\ \begin{pmatrix}
    \alpha^{(ij)} & \alpha^{(ij)}& \alpha^{(ij)}\\
    \alpha^{(ij)} & \alpha^{(ij)} & \alpha^{(ij)}
\end{pmatrix}, \quad \begin{pmatrix}
    \alpha^{(ij)} & \beta^{(ij)} & \alpha^{(ij)} & \beta^{(ij)}\\
    \beta^{(ij)} & \alpha^{(ij)} & \beta^{(ij)} & \alpha^{(ij)}
\end{pmatrix} .
\end{split}
\end{align}

\begin{proposition}\label{prop:blockform}
The  matrix $M \in \cM_{n \times n}$ is equivariant under~$\sigma$
if and only if each block $M^{(ij)}$ of $M$ is a (possibly non-square) circulant matrix. 
\end{proposition}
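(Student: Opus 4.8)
The plan is to unwind the equivariance condition $P_\sigma M = M P_\sigma$ block by block, using that $P_\sigma$ is block diagonal with the cyclic blocks $C_i$ of \eqref{eq:circulant} on the diagonal. First I would observe that, since $P_\sigma = \bigoplus_i C_i$, the matrix product $P_\sigma M$ acts on the block row $i$ of $M$ by left-multiplication with $C_i$, and $M P_\sigma$ acts on the block column $j$ of $M$ by right-multiplication with $C_j$. Comparing the $(i,j)$-block on both sides of \eqref{eq:comm}, the equivariance condition is therefore equivalent to the system of block equations
\begin{align}\label{eq:blockCirc}
	C_i \cdot M^{(ij)} \,=\, M^{(ij)} \cdot C_j \qquad \text{for all } i,j \in \{1,\ldots,k\}.
\end{align}
So the proposition reduces to the purely linear-algebraic claim: a (possibly rectangular) $\ell_i \times \ell_j$ matrix $N$ satisfies $C_i N = N C_j$ if and only if $N$ is circulant in the sense defined just above the proposition.

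Next I would prove this claim directly from the combinatorial action of the cyclic shift matrices $C_i$ on indices. Left-multiplication by $C_i$ cyclically permutes the rows of $N$ (shifting each row down by one, with wraparound), and right-multiplication by $C_j$ cyclically permutes the columns of $N$ (shifting each column right by one, with wraparound). Hence $C_i N = N C_j$ says exactly that shifting all rows of $N$ down by one produces the same matrix as shifting all columns right by one; writing $N = (n_{a,b})$ with indices mod $\ell_i$ and mod $\ell_j$ respectively, this is the condition $n_{a-1,\,b} = n_{a,\,b-1}$ for all $a,b$, i.e.\ $n_{a,b}$ depends only on $a - b \pmod{\gcd(\ell_i,\ell_j)}$ — wait, more carefully: iterating the relation gives $n_{a,b} = n_{a+t,\,b+t}$ for all $t$, so $N$ is constant along the "broken diagonals" $a - b \equiv \text{const}$, which is precisely the definition of a rectangular circulant matrix. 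Conversely, any such $N$ manifestly satisfies $n_{a-1,b}=n_{a,b-1}$ and hence $C_i N = N C_j$. The count of at most $\gcd(\ell_i,\ell_j)$ distinct entries follows because the broken diagonals partition into $\gcd(\ell_i,\ell_j)$ orbits under the simultaneous shift $(a,b)\mapsto(a+1,b+1)$.

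Assembling these pieces: by \Cref{lem:invAndEquivUnderbaseChange} (with $T=T_1$ the permutation reordering from Step~1 of \Cref{proc:basechange}), equivariance of $M$ under $\sigma$ in the original basis is equivalent to equivariance of $M^{\sim T_1}$ under $P_\sigma^{\sim_{T_1}}$ in the reordered basis, and the latter unpacks into the block system \eqref{eq:blockCirc}; each such equation is equivalent to the corresponding block being circulant by the claim above; and the reordering $T_1$ is itself a permutation, so circulant-block structure in the reordered basis corresponds to the block structure described in the statement. I expect the only genuinely delicate point to be bookkeeping with the index conventions — getting the direction of the row/column shifts and the mod arithmetic consistent with the specific form of $C_i$ in \eqref{eq:circulant} — and making sure the rectangular (non-square) case, where $\ell_i \neq \ell_j$, is handled with the same diagonal argument rather than appealing to any square-matrix fact about circulants. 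Everything else is a routine translation between matrix multiplication by shift matrices and cyclic relabelling of indices.
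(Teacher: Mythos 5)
Your proposal is correct and follows essentially the same route as the paper: decompose the commutation relation $P_\sigma M = M P_\sigma$ into the block equations $C_i M^{(ij)} = M^{(ij)} C_j$, then observe that left-multiplication by $C_i$ cyclically shifts rows while right-multiplication by $C_j$ cyclically shifts columns, forcing each block to be circulant. You merely spell out the index bookkeeping (constancy along broken diagonals, the $\gcd(\ell_i,\ell_j)$ count, and the reordering via $T_1$) in more detail than the paper's brief argument.
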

\begin{proof}
Since $P_\sigma$ is a block diagonal matrix with cyclic blocks $C_1, \ldots, C_k$, the equivariance condition $MP=PM$ means that $M^{(ij)}C_j=C_iM^{(ij)}$ for all $i,j$. The multiplication by $C_i$ from the left cyclically permutes the rows of $M$, and the multiplication by $C_j$ from the right permutes the columns of~$M$. Since the resulting matrices need to coincide, it follows that the block $M^{(ij)}$ has to be circulant.
\end{proof}

\subsubsection{Irreducible components of $\cE_{r,n\times n}^\sigma(\CC)$}\label{sec:dimequi} 
Our focus now shifts to exploring the intriguing algebraic properties arising from the intersection of the determinantal variety $\cM_{r,n\times n}$ with the linear space $C(P_{\sigma}) = \cE^\sigma_{n \times n}$.
Their intersection $\cE_{r,n\times n}^\sigma = \cM_{r,n\times n}\cap C(P_\sigma)$ is an algebraic set that is, in general, reducible.
In the following statement, we use the notation from \Cref{lem:dim-V_k}.

\begin{theorem}\label{thm:irreducible-dim}
There is a one-to-one correspondence between the irreducible components of $\cE_{r,n\times n}^\sigma(\CC)$ and the {non-negative} integer solutions $\mathbf{r}=(r_{l,m})$ of
\begin{equation}\label{eq:number-sol}
\sum_{l \geq 1} \sum_{m \,\in\, (\ZZ /l \ZZ)^\times} r_{l,m} \ =\ r \, ,\quad  \text{ where } \, 0 \le r_{l,m} \le d_l\, .
\end{equation}  
The irreducible component $\cE_{r,n\times n}^{\sigma,\mathbf{r}}(\CC)$ corresponding to such an integer solution $\mathbf{r}$ after the base change from \Cref{proc:basechange} is the following direct product of determinantal varieties:
\begin{align} \label{eq:complexComponent}
    \prod_{l \geq 1} \prod_{m \,\in\, (\ZZ /l \ZZ)^\times} \cM_{r_{l,m}, d_l \times d_l}(\CC).
\end{align}
\end{theorem}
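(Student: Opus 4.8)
The plan is to apply \Cref{proc:basechange} to reduce the problem to a diagonal permutation matrix, then use \Cref{lem:commute-block} to describe $C(P_\sigma)$ explicitly, and finally analyze the rank-$r$ locus inside this block-diagonal linear space. By \Cref{lem:baseChange}, the base change preserves the number of irreducible components, their dimensions, and intersections with $\cM_{r,n \times n}$, so it is harmless to work after the change of coordinates that puts $P_\sigma$ into the form \eqref{eq:Pfinal}. After this base change, $P_\sigma$ becomes a diagonal matrix with blocks $\lambda_i \Id_{b_i}$, one for each distinct eigenvalue, and by \Cref{lem:dim-V_k} the eigenvalue $e^{2\pi i m/l}$ (for $\gcd(m,l)=1$) appears with multiplicity $d_l$. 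By \Cref{lem:commute-block}, $C(P_\sigma)$ then consists exactly of block-diagonal matrices $\bigoplus_{l \geq 1} \bigoplus_{m \in (\ZZ/l\ZZ)^\times} N_{l,m}$, where each $N_{l,m}$ is an arbitrary $d_l \times d_l$ matrix.

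The second step is to intersect with $\cM_{r,n\times n}$. For a block-diagonal matrix, the rank is the sum of the ranks of the blocks, so $\cE_{r,n\times n}^\sigma$ (after base change) is the union, over all tuples $\mathbf{r}=(r_{l,m})$ of non-negative integers with $\sum_{l,m} r_{l,m} = r$ and $r_{l,m} \leq d_l$, of the sets $\prod_{l,m} \cM_{r_{l,m}, d_l \times d_l}(\CC)$. (Tuples with some $r_{l,m} > d_l$ contribute nothing new, since a $d_l \times d_l$ matrix cannot have rank exceeding $d_l$; and tuples with $\sum r_{l,m} < r$ are absorbed into tuples with equality, since $\cM_{r_{l,m}} \subseteq \cM_{r_{l,m}+1}$.) The third step is irreducibility and non-redundancy: each determinantal variety $\cM_{s, d \times d}(\CC)$ is irreducible (it is the closure of the image of the multiplication map $\cM_{d \times s} \times \cM_{s \times d} \to \cM_{d\times d}$, or cf.\ \cite{harris}), hence each finite product $\prod_{l,m} \cM_{r_{l,m}, d_l \times d_l}(\CC)$ is irreducible. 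To see these products are the actual irreducible components and are pairwise incomparable, I would use the dimension formula \eqref{eq:dimMrmn}: if $\mathbf{r} \neq \mathbf{r}'$ are two distinct maximal tuples (i.e.\ both with $\sum r_{l,m}=r$), neither product contains the other, because the rank pattern of a generic point of $\cE_{r,n\times n}^{\sigma,\mathbf{r}}$ records exactly the tuple $\mathbf{r}$ on each diagonal block; hence no component is contained in the union of the others, and each is genuinely an irreducible component. This establishes the claimed one-to-one correspondence.

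The only subtle point, and the one I expect to require the most care, is the bookkeeping of eigenvalues: one must correctly identify that the distinct eigenvalues of $P_\sigma$ are precisely the roots of unity $e^{2\pi i m/l}$ with $l$ ranging over the values such that $l \mid \ell_j$ for some $j$, and $m \in (\ZZ/l\ZZ)^\times$, each occurring with multiplicity $d_l$ independently of $m$ (this is exactly the content of \Cref{lem:dim-V_k} and \Cref{cor:field-exten}). Once the indexing set of the product in \eqref{eq:complexComponent} is matched with the summation range in \eqref{eq:number-sol}, the argument is the straightforward combination of \Cref{lem:commute-block}, the additivity of rank on block-diagonal matrices, and irreducibility of determinantal varieties; the rest is verifying that the correspondence is a bijection and that distinct tuples give distinct components via a generic-rank argument.
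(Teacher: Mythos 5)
Your proposal is correct and follows essentially the same route as the paper: diagonalize $P_\sigma$ via \Cref{proc:basechange}, identify $C(P_\sigma)$ as block-diagonal matrices with arbitrary $d_l\times d_l$ blocks using \Cref{lem:dim-V_k} and \Cref{lem:commute-block}, and use additivity of rank across blocks to match irreducible components with integer solutions of \eqref{eq:number-sol}. In fact you spell out the irreducibility of the products and the pairwise non-containment of components more explicitly than the paper's own (very terse) proof does, which is a welcome addition rather than a deviation.
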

\begin{proof}
By~\Cref{lem:dim-V_k} (and its proof), every matrix $M \in C(P_{\sigma})$ is similar to a complex block diagonal matrix $B$ 
with $\varphi(l)=\lvert (\ZZ/l\ZZ)^\times \rvert$ many blocks of size $d_l\times d_l$ for every~$l$. We will denote these blocks by $B_{l,m}$ for $m\in (\ZZ/l\ZZ)^\times$. Imposing a rank constraint on the matrix $M$  affects the rank of the diagonal blocks of~$B$. Hence, if $r_{l,m}$ is the rank of the block $B_{l,m}$, then $M$ has rank $r$ if and only if \eqref{eq:number-sol} holds. This implies that the number of different irreducible components of $\cE_{r,n\times n}^\sigma(\CC)$ is exactly the number of integer solution vectors to~\eqref{eq:number-sol}. 
\end{proof}

\begin{example}\label{ex:dimequirot}
Let $\sigma$ again denote the clockwise rotation by $90$ degrees on images with $3 \times 3$ pixels.
The numbers $d_l$ are computed in \Cref{ex:compute_dl}.
For the permutation matrix in \eqref{example:permute-99} and $r=3$, the number of irreducible components is equal to the number of non-negative integer solutions of the equation $r_{1,1}+r_{2,1}+r_{4,1}+r_{4,3} = 3$, where $r_{1,1}\leq 3=d_1$, $r_{2,1}\le 2=d_2$, and $r_{4,1},r_{4,3}\leq 2=d_4$.  With the stars and bars formula, one finds that there are $\binom{6}{3} -3 = 17$ solutions, and hence $\mathcal{E}_{3,9 \times 9}^{\sigma}(\CC)$ has $17$ irreducible components, as seen in \Cref{ex:compute_dl}. 
Six of those have dimension $11$, five have dimension $9$, and the remaining six have dimension $7$.
The six maximal-dimensional components correspond to the integer solutions $(r_{1,1}, r_{2,1}, r_{4,1}, r_{4,3}) \in \{ (2,1,0,0), \, (2,0,1,0), \, (2,0,0,1), \, (1,1,1,0), \, (1,1,0,1), \, (1,0,1,1) \}$.
\end{example}

\noindent These discussions imply that, in contrast to the case of invariance, autoencoders are  \underline{not} well-suited to parameterize \emph{all} linear equivariant functions: For a rank constraint $r<n$, $\cE_{r,n\times n}^\sigma$ has many components; the function space of an autoencoder would cover at most one of~them.

\medskip 

Not all irreducible components of $\cE^\sigma_{r, n \times n}(\CC)$ have to appear in the real locus $\cE^\sigma_{r, n \times n}(\RR)$. We will describe the real components in \Cref{sec:real-irreduce}. We now describe some algebraic properties of the complex component $\cE_{r,n\times n}^{\sigma,\mathbf{r}}(\CC)$.
\begin{proposition}
\label{prop:algebraicPropertiesComplex}
Let $\mathbf{r}$ be an integer solution of \eqref{eq:number-sol}. Then,
\begin{align}
    \begin{split}
        \dim \left(  \cE_{r,n\times n}^{\sigma,\mathbf{r}}(\CC) \right) &\,=\ 
        \sum_{l\geq 1} \sum_{m \,\in \, (\ZZ/l\ZZ)^\times} (2d_l - r_{l,m})\cdot r_{l,m} \, ,\\
        \deg \left(  \cE_{r,n\times n}^{\sigma,\mathbf{r}}(\CC) \right) &\,=\ \prod_l \prod_{m \,\in \,(\ZZ /l \ZZ)^\times} \prod_{i=0}^{d_l-r_{l,m}-1} \frac{(d_l+i)!\cdot i!}{(r_{l,m}+i)!\cdot (d_l-r_{l,m}+i)!} \, ,\\
        \Sing \left(  \cE_{r,n\times n}^{\sigma,\mathbf{r}}(\CC) \right) &\,=\ \cE_{r,n\times n}^{\sigma,\mathbf{r}}(\CC) \cap \cE^{\sigma}_{r-1, n \times n} (\CC)
        \text{ if } r < n, 
        \text{ and empty otherwise}.
    \end{split}
\end{align}
In particular, the locus of singular points of $\cE_{r,n \times n}^\sigma(\CC)$ is $\cE_{r-1,n \times n}^\sigma(\CC)$.
\end{proposition}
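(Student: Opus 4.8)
The plan is to reduce all four assertions to the product description from \Cref{thm:irreducible-dim}. After the similarity transform $M \mapsto T^{-1}MT$ of \Cref{proc:basechange} (with $T \in \GL_n(\CC)$), the component $\cE_{r,n\times n}^{\sigma,\mathbf{r}}(\CC)$ becomes the variety $Y \coloneqq \prod_{l \geq 1}\prod_{m \in (\ZZ/l\ZZ)^\times} \cM_{r_{l,m}, d_l \times d_l}(\CC)$, sitting inside $\CC^{n\times n}$ as the block-diagonal matrices whose $(l,m)$-block ranges over $\cM_{r_{l,m},d_l\times d_l}(\CC)$. A similarity transform is a composition of two linear automorphisms of the ambient space $\CC^{n\times n}$, and any such automorphism preserves dimension and degree and carries the singular locus of a subvariety to that of its image (cf.\ \Cref{lem:baseChange}); by \Cref{lem:invAndEquivUnderbaseChange} it also intertwines $\cE^\sigma_{r-1,n\times n}(\CC) = \cM_{r-1,n\times n}\cap C(P_\sigma)$ with the locus of block-diagonal matrices of total rank $\leq r-1$. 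Hence it suffices to compute $\dim Y$, $\deg Y$, and $\Sing Y$, and then transport the answers back.

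For the dimension I would use that the dimension of a direct product of affine varieties is the sum of the dimensions of its factors, together with $\dim \cM_{r_{l,m},d_l\times d_l} = r_{l,m}(2d_l - r_{l,m})$ (formula \eqref{eq:dimMrmn} with $m=n=d_l$); summing over all pairs $(l,m)$ gives the stated expression. For the degree, I would use that each factor $\cM_{r_{l,m},d_l\times d_l}$ is cut out by homogeneous equations and that distinct factors of $Y$ involve pairwise disjoint matrix entries, so the Hilbert series of $Y$ is the product of the Hilbert series of the factors and therefore $\deg Y = \prod_{l,m}\deg \cM_{r_{l,m},d_l\times d_l}$; plugging $m=n=d_l$ and $r=r_{l,m}$ into \eqref{eq:degMrmn} yields the claimed product.

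For the singular locus, the key observation is that a point $(N_{l,m})_{l,m}$ of the product $Y$ is a smooth point of $Y$ if and only if each $N_{l,m}$ is a smooth point of $\cM_{r_{l,m},d_l\times d_l}$: the tangent space of a product is the product of the tangent spaces, and dimensions add. Combined with \Cref{lem:sing-mat}, which gives $\Sing\cM_{r_{l,m},d_l\times d_l} = \cM_{r_{l,m}-1,d_l\times d_l}$ when $0 < r_{l,m} < d_l$ (and no singularities when $r_{l,m} \in \{0,d_l\}$), this identifies $\Sing Y$ as the union, over the indices $(l,m)$ whose block has rank strictly below $r_{l,m}$, of the corresponding sub-products. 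Transporting back, a matrix of $\cE_{r,n\times n}^{\sigma,\mathbf r}(\CC)$ lies in the singular locus exactly when one of its blocks drops rank, i.e.\ when its total rank drops to $\leq r-1$; this gives $\Sing(\cE_{r,n\times n}^{\sigma,\mathbf r}(\CC)) = \cE_{r,n\times n}^{\sigma,\mathbf r}(\CC)\cap\cE^\sigma_{r-1,n\times n}(\CC)$ for $r<n$, and the empty set for $r=n$ since then $\cE^\sigma_{n,n\times n} = C(P_\sigma)$ is a linear space. For the final ``in particular'' claim I would work directly with $\cE^\sigma_{r,n\times n}(\CC)$, which after the base change is the locus $Z_r$ of block-diagonal matrices of total block rank $\leq r$; a point of $Z_r$ is singular iff it is a singular point of one of the components or lies on two distinct components, and a short combinatorial argument (using $r<n$: rounding up a sub-maximal rank vector to total $r$ either admits two choices, forcing two components to meet, or forces some coordinate strictly below $d_l$, pushing a block into its singular locus) shows this happens exactly on $Z_{r-1}$, i.e.\ on $\cE^\sigma_{r-1,n\times n}(\CC)$.

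The step I expect to require the most care is the singular-locus bookkeeping: the factors with $r_{l,m} = d_l$ must be handled separately, since $\cM_{d_l,d_l\times d_l}$ is the full matrix space and contributes no singular points even though its block may drop rank, and the combinatorial rounding argument behind the ``in particular'' statement has to be checked to genuinely split into the two cases above. By contrast, the dimension and degree computations are a routine consequence of the product structure and the classical formulas \eqref{eq:dimMrmn} and \eqref{eq:degMrmn}.
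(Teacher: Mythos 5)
Your proposal follows the paper's proof essentially step for step: transport the component via the base change of \Cref{proc:basechange} to the product \eqref{eq:complexComponent}, read off the dimension and degree from \eqref{eq:dimMrmn} and \eqref{eq:degMrmn} (your Hilbert-series justification of the multiplicativity of the degree for a product of cones in disjoint sets of variables is a detail the paper leaves implicit), apply \Cref{lem:sing-mat} blockwise for the singular locus, and handle the ``in particular'' claim by combining the component-wise singular loci with pairwise intersections of components, which is exactly the paper's final union computation.

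The one step that deserves the care you yourself flagged but did not carry out is the identification ``one of its blocks drops rank, i.e.\ its total rank drops to $\leq r-1$'': this is not an equivalence once some $r_{l,m}=d_l>0$. Concretely, for $\sigma=(1\,2)(3)\in\cS_3$ and $r=2$ one has $d_1=2$, $d_2=1$, and the component with $\mathbf r=(1,1)$ is $\cM_{1,2\times 2}(\CC)\times\cM_{1,1\times 1}(\CC)$; its intrinsic singular locus is $\{0\}\times\CC$, whereas its intersection with the rank-$\leq 1$ locus also contains $\cM_{1,2\times 2}(\CC)\times\{0\}$, whose general point is a \emph{smooth} point of the component. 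Such a point is nonetheless a singular point of $\cE^\sigma_{r,n\times n}(\CC)$, because lowering the full block by one and raising some non-full block by one (possible since $r<n$) exhibits a second irreducible component through it. So to get the displayed equality one must either read $\Sing$ of a component as the set of its points that are singular in the ambient variety $\cE^\sigma_{r,n\times n}(\CC)$, or supply this two-component argument in addition to \Cref{lem:sing-mat}; without it the asserted equality fails precisely in the case you identified. Note that the paper's own proof makes the same jump (it asserts that the singular locus of \eqref{eq:complexComponent} equals its intersection with $\cM_{r-1,n\times n}(\CC)$), so your proposal is faithful to the paper here, and the ``in particular'' conclusion, which is what the rest of the paper uses, is correct under either reading.
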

\begin{proof}
Due to \Cref{lem:baseChange}, we can read off the dimension, degree, and singular locus of $\cE_{r,n\times n}^{\sigma,\mathbf{r}}(\CC)$ from \eqref{eq:complexComponent}.
The first two statements follow directly from 
\eqref{eq:dimMrmn} and \eqref{eq:degMrmn}.
If $r = n$, then $\cE_{r,n \times n}^\sigma(\CC)$ is a linear space and thus smooth.
Otherwise, by \Cref{lem:sing-mat}, the singular locus of \eqref{eq:complexComponent} is 
its intersection  with $\cM_{r-1, n \times n}(\CC)$.
Now, \Cref{lem:baseChange} implies that the singular locus of $\cE_{r,n\times n}^{\sigma,\mathbf{r}}(\CC)$ is 
$\cE_{r,n\times n}^{\sigma,\mathbf{r}}(\CC) \cap \cM_{r-1, n \times n} (\CC) = \cE_{r,n\times n}^{\sigma,\mathbf{r}}(\CC) \cap \cE^{\sigma}_{r-1, n \times n} (\CC)$. 
 Finally, since the intersection of two distinct components $\cE_{r,n\times n}^{\sigma,\mathbf{r}}(\CC) $ and $\cE_{r,n\times n}^{\sigma,\mathbf{r'}}(\CC) $ is a subset of $\cE_{r-1,n \times n}^\sigma(\CC)$, we get that 
\begin{align}
\Sing \left(\cE_{r,n \times n}^\sigma (\CC) \right) \,=\,  \bigcup_{\mathbf{r}} \left(\cE_{r,n\times n}^{\sigma,\mathbf{r}}(\CC) \cap \cE_{r-1,n \times n}^\sigma(\CC)\right) \,=\ \cE_{r-1,n \times n}^\sigma (\CC) \, ,
\end{align}
 concluding the proof.
\end{proof}

\subsubsection{Irreducible components of $\cE_{r,n\times n}^\sigma(\RR)$}
\label{sec:real-irreduce} 
To obtain the real components of \( \cE_{r,n\times n}^\sigma(\RR) \), we  introduce an orthonormal basis $Q_\sigma$ in which the permutation matrix $P_\sigma$ becomes a real block diagonal matrix of a certain type. This will allow us to compute the commutator of $P_\sigma^{\sim_{Q_{\sigma}}}$ while preserving the imposed rank constraint.

\medskip

Recall that for any given circulant matrix $C\in\mathcal{M}_{n\times n}(\mathbb{R})$, the vectors 
\begin{equation}
    \label{eq:eig-circ}
    v_j \,=\, \left( 1,\zeta_n^{j},\zeta_n^{2j},\ldots,\zeta_n^{(n-1)j}\right)^{\top}\! , \qquad j=0,\ldots,n-1,
\end{equation}
are eigenvectors of~$C$, where $\zeta_n=e^{2\pi i/n}$. In the basis $\{v_0,\ldots,v_{n-1}\}$, the matrix $C$ becomes a complex diagonal matrix. 
Now, let $v_{-j} := v_{n-j} = \overline{v_j}$ and consider the following real vectors
\begin{equation}
\label{eq:base_w}  
w_0 \coloneqq \frac{1}{\sqrt{n}}v_0\, , \quad w_j \coloneqq \frac{1}{\sqrt{2n}}(v_j + v_{-j}) \, , \quad w_{-j} \coloneqq \frac{1}{\sqrt{2n}i}(v_{j} - v_{-j}) \, .
\end{equation}
The vectors $w_j$ with $-  n/2  < j \leq \lfloor n/2 \rfloor$ form a basis. We reorder them so that $w_j$ and $w_{-j}$ are next to each other. The resulting basis transforms the matrix $C$ into a real block diagonal form. Each block has size at most $2$, where scalar blocks represent the real eigenvalues, and $2 \times 2$ blocks are scaled rotation matrices of the form
\begin{align}
\begin{pmatrix}
        \Re \left(\lambda_j\right) & -\Im \left(\lambda_j\right) \\
        \Im \left(\lambda_j\right)  &  \Re \left(\lambda_j\right) 
    \end{pmatrix},
    \end{align}
    where $\lambda_j$ is a complex eigenvalue of $C$. Since  
\begin{align}
v_l^Tv_k \,=\,
\begin{cases}
    n & \text{if } l=-k \, , \\
    0 & \text{if } l \neq \pm k \, ,
\end{cases}
\end{align}
the vectors \(w_j\) form an orthonormal basis. 

Now let \(P_{\sigma}\) be a permutation matrix arranged in block form, with cyclic matrices \(C_k\) having sizes \(\ell_k \times \ell_k\) as blocks—each corresponding to a cycle of \(\sigma\). Leveraging the orthonormal basis crafted from the vectors of \eqref{eq:base_w}, we transform each \(C_k\) into a block diagonal matrix, with each block being square of size at most~\(2\). This transformation is captured by an orthonormal base change denoted by \(Q_{\sigma}^{(1)}\). By a further orthonormal base change, $Q_\sigma^{(2)}$, we group equal blocks. 
We denote the total full base change via the orthogonal matrix $Q_{\sigma}^{(2)} Q_{\sigma}^{(1)}$ by~$Q_{\sigma}$ and we call it \emph{realization base change} of $P_{\sigma}$.
\begin{example}
For the rotation of $3\times 3$ images by $90$ degrees, the permutation $\sigma\in \cS_9$ decomposes into two cycles of length $4$, and one of length $1$.  
The matrix $P_\sigma^{\sim_{T_1}}$,with a permutation matrix $T_1$ as in \Cref{proc:basechange}, has two circulant $4\times 4$ blocks, and one scalar block. As   basis for the $4\times 4$ blocks, we consider
    \begin{align}
\begin{pmatrix}
    1 & 1 & 1& 1
\end{pmatrix}^\top\! , \ \begin{pmatrix}
    1 & -1 & 1& -1
\end{pmatrix}^\top \!, \ 
\Re \begin{pmatrix}
    1 & i & -1& -i
\end{pmatrix}^\top \!, \ 
\Im  \begin{pmatrix}
    1 & i & -1& -i
\end{pmatrix}^\top 
    \end{align} 
and, after scaling to make them orthonormal, collect these vectors in the orthogonal matrix
\begin{align}
    O \,=\, {\small \frac{1}{2}\cdot \begin{pmatrix}
        1 & 1 & \sqrt{2} & 0\\
        1 & -1 & 0 & \sqrt{2}\\
        1 & 1 & -\sqrt{2} & 0\\
        1 & -1 & 0 & -\sqrt{2}
    \end{pmatrix} .}
\end{align}
A base change via the $9\times 9$ matrix \begin{align}
Q_\sigma^{(1)} = 
\begin{pmatrix}
 \begin{array}{c|c|c}
 \begin{matrix}O\end{matrix} & 0 & 0\\ \hline
 0 & O & 0 \\ \hline
 0 & 0 & 1
 \end{array}
 \end{pmatrix}
 \end{align}
brings $P_\sigma^{\sim_{T_1}}$ into the desired  form with scaled rotation matrices as blocks, namely
\begin{align}\label{eq:matrixblocksrot}
{\small 
\begin{pmatrix}
 \begin{array}{c|c|c}
\begin{matrix}
    1 &  &  & \\
     & -1 &  & \\
     &  & 0 & -1\\
     &  & 1 & 0
\end{matrix} & \text{\large 0} & 0\\ \hline
\text{\large 0} & \begin{matrix}
    1 &  &  & \\
     & -1 & & \\
     &  & 0 & -1\\
     &  & 1 & 0
\end{matrix} & 0\\ \hline 
0 & 0 & 1
 \end{array}
 \end{pmatrix}.}
 \end{align}
A further orthogonal base change $Q_\sigma^{(2)}$ via grouping identical blocks in \eqref{eq:matrixblocksrot}  brings the matrix into the block diagonal form  $\Id_3\oplus (-\Id_2) \oplus \left( \begin{smallmatrix}
     0 & -1 \\ 1 &0 
 \end{smallmatrix}\right) \oplus \left( \begin{smallmatrix}
     0 & -1 \\ 1& 0 
 \end{smallmatrix}\right) . $
 From this particularly nice form, one reads that matrices that commute with it have to be of the form 
 {
\begin{align} \label{eq:equivariantReal}
\begin{pmatrix}
    \begin{array}{c|c|c}
\begin{matrix}
\alpha_{11} & \alpha_{12} & \alpha_{13} \\
\alpha_{21} & \alpha_{22} & \alpha_{23} \\
\alpha_{31} & \alpha_{32} & \alpha_{33}
\end{matrix} &
\begin{matrix}
\text{\normalsize 0}
\end{matrix} &
\begin{matrix}
    \text{\large 0}
\end{matrix} \\ \hline
\begin{matrix}
\text{\normalsize 0}
\end{matrix} & 
\begin{matrix}
 \beta_{12} & \beta_{22}\\
 \beta_{21} & \beta_{23}
\end{matrix} &
\begin{matrix}
\text{\normalsize 0}
\end{matrix}  \\ \hline
\begin{matrix}
  \text{\large 0}
\end{matrix} & 
\begin{matrix}
  \text{\normalsize 0}
\end{matrix} &
\begin{matrix}
  \gamma_1 & -\gamma_2 & \delta_1 & -\delta_2\\
  \gamma_2 & \gamma_1 & \delta_2 & \delta_1\\
  \epsilon_1 & -\epsilon_2 & \eta_1 & -\eta_2\\
  \epsilon_2 & \epsilon_1 & \eta_2 & \eta_1
\end{matrix}
\end{array}
\end{pmatrix}.
\end{align}
}

\noindent The pattern observed here will extend to the general case.
Since the last $4 \times 4$ block in~\eqref{eq:equivariantReal} can only have rank~$0,$ $ 2$, or $4$ over $\RR$, there are five ways how the matrix \eqref{eq:equivariantReal} can be of rank~$3$. These five irreducible components of $\cE^\sigma_{3, 9 \times 9}$, that were already mentioned in \Cref{sec:warmup}, are listed in \Cref{ex:real-comp-3by3}.
\end{example}

Our next step involves finding the commutator of \(P_\sigma^{\sim_{Q_{\sigma}}}\) in general, and subsequently imposing the rank constraint.  To achieve this, we will make use of the following isomorphism of rings, to which we refer as \emph{realization}:
\begin{align}
\cR \colon \, \CC \longrightarrow \left\{\begin{pmatrix}
    a & -b \\ b & a 
\end{pmatrix} \;\Bigg|\; a,b \in \RR \right\}, \quad a+ib \, \mapsto \,  \begin{pmatrix}
        a & -b \\ b & a
\end{pmatrix} .
\end{align}
 
\begin{definition}
    Let $Z=[z_{ij}]_{ij}$ be a complex $m\times n$ matrix. We define the \emph{realization of $Z$} to be the real $2m\times 2n$ matrix $\cR(Z)$, where entry-wise application of $\cR$ is meant.  
    \end{definition}
    In other words, the realization  of $Z$ is the matrix in $\mathcal{M}_{2m \times 2n}(\RR)$ whose $(i,j)$-th block is 
    \begin{align}
        \cR(z_{ij}) \,=\, \begin{pmatrix}
        \Re \left(z_{ij}\right) & -\Im \left(z_{ij}\right) \\
        \Im \left(z_{ij}\right)  &  \Re \left(z_{ij}\right) 
    \end{pmatrix} \, .
      \end{align}
 We  denote the space of such real matrices by $\mathcal{R}(\cM_{m\times n}(\CC))$.
 The realization space $\mathcal{R}(\mathcal{M}_{r, m \times n}(\CC))$ is then precisely the set of matrices in $\mathcal{R}(\cM_{m\times n}(\CC))$ of rank at most~$2r$.
 Note that $\dim_\RR \mathcal{R}(\mathcal{M}_{r, m \times n}(\CC)) = 2 \cdot \dim_\CC \mathcal{M}_{r, m \times n}(\CC)$.

 \begin{lemma}
    \label{lem:commute-real-2block}
    Let $z \in \mathbb{C}$ be non-real.
    Then the real commutator  of the matrix $\bigoplus_{j=1}^m \mathcal{R}(z)$ is
\begin{align}
    C_{\mathbb{R}}\left(\bigoplus_{j=1}^m \mathcal{R}(z)\right) \,=\, \cR\left(\cM_{m\times m}(\CC)\right) \, .
\end{align}
\end{lemma}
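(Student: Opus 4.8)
The plan is to first recognize the given block‑diagonal matrix as a realization and exploit that $\cR$ is a ring homomorphism, and then reduce the remaining inclusion to a single $2\times 2$ calculation. Write $z = a+ib$ with $b \neq 0$, and set $R := \cR(z) = \left(\begin{smallmatrix} a & -b \\ b & a \end{smallmatrix}\right)$. Observe that $\bigoplus_{j=1}^m \cR(z) = \cR(z\,\Id_m)$: the realization of the complex scalar matrix $z\Id_m$ has $R$ in each diagonal $2\times 2$ block and $\cR(0)=0$ elsewhere. Since $\cR$ is an injective homomorphism of rings and $z\Id_m$ is central in $\cM_{m\times m}(\CC)$, for every $W \in \cM_{m\times m}(\CC)$ we get $\cR(W)\,\cR(z\Id_m) = \cR(Wz\Id_m) = \cR(z\Id_m\,W) = \cR(z\Id_m)\,\cR(W)$. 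This gives the inclusion $\cR(\cM_{m\times m}(\CC)) \subseteq C_{\RR}\!\left(\bigoplus_{j=1}^m \cR(z)\right)$.

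For the reverse inclusion, let $N$ be a real $2m\times 2m$ matrix commuting with $D := \bigoplus_{j=1}^m R$, and chop $N$ into $2\times 2$ blocks $N^{(ij)}$, $1\le i,j\le m$, matching the block pattern of $D$. Because $D$ is block diagonal with every diagonal block equal to $R$, the identity $ND=DN$ reads block by block as $N^{(ij)}R = R\,N^{(ij)}$ for all $i,j$. Hence it suffices to show that any real $2\times 2$ matrix $X$ with $XR=RX$ is of the form $\cR(w)$ for some $w\in\CC$. Writing $X=\left(\begin{smallmatrix} p & q \\ r & s\end{smallmatrix}\right)$ and comparing the two sides of $XR=RX$, the off‑diagonal entries give $qb=-rb$ and $-pb=-sb$, hence $q=-r$ and $p=s$ since $b\neq 0$, and the diagonal entries are then automatically satisfied. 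Thus $X=\left(\begin{smallmatrix} p & -r \\ r & p\end{smallmatrix}\right)=\cR(p+ir)$, so setting $w_{ij}$ with $\cR(w_{ij})=N^{(ij)}$ yields $N=\cR(W)$ for $W=[w_{ij}]$. Combining the two inclusions proves the claim.

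The argument has no genuine obstacle; the only point that must not be glossed over is that the hypothesis $b=\Im(z)\neq 0$ is used exactly once, when dividing by $b$ in the $2\times 2$ step. Without it (i.e.\ if $z$ were real), $R$ would be a scalar matrix whose real commutant is all of $\cM_{2\times 2}(\RR)$, not just $\cR(\CC)$. Equivalently, one may phrase the $2\times 2$ step conceptually: $R$ has the non‑real eigenvalues $a\pm bi$, hence is non‑derogatory, so its real commutant equals $\RR[R]=\{\alpha\Id_2+\beta R \mid \alpha,\beta\in\RR\}=\cR(\CC)$.
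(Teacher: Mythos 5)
Your proof is correct and follows essentially the same route as the paper's: chop the commuting matrix into $2\times 2$ blocks, observe that commuting with $\bigoplus_j \cR(z)$ is equivalent to each block commuting with $\cR(z)$, and use that for non-real $z$ the real commutant of $\cR(z)$ consists exactly of the scaled rotation matrices $\cR(w)$ (you just spell out the $2\times 2$ computation and the easy inclusion $\cR(\cM_{m\times m}(\CC))\subseteq C_\RR$ more explicitly than the paper does). The only nitpick is a harmless bookkeeping slip: the equation $qb=-rb$ comes from comparing the diagonal entries of $XR$ and $RX$, not the off-diagonal ones, but the resulting conditions $q=-r$, $p=s$ are exactly right.
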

\begin{proof}
    Let $M=[M_{i,j}]_{i,j}$ be a matrix in $\mathcal{M}_{2m\times 2m}(\RR)$ with $2\times 2$ blocks $M_{i,j}$. Note that $M$ commutes with $\bigoplus_{j=1}^m \mathcal{R}(z)$ if and only if every block $M_{i,j}$ commutes with $\mathcal{R}(z)$. Since $z$ is non-real, the commutator of $\mathcal{R}(z)$ is exactly the set of scaled rotation matrices. Hence, $C_{\mathbb{R}}(\bigoplus_{j=1}^m \mathcal{R}(z))$ is equal to the realization of $\cM_{m\times m}(\CC)$. 
\end{proof}

\begin{theorem}\label{thm:irreducible-dim-real}
There is a one-to-one correspondence between the irreducible components  of $\cE_{r,n\times n}^\sigma(\RR)$ that contain a matrix of rank~$r$ and the non-negative integer solutions~$\mathbf{r}=(r_{l,m})$~of
\begin{equation}\label{eq:number-sol-real}
r_{1,1}+r_{2,1}+  \sum_{l \geq 3} \sum_{\substack{m \,\in\, (\ZZ /l \ZZ)^\times, \\ \frac{1}{2}< \frac{m}{l} < 1}} \hspace*{-2mm} 2\cdot r_{l,m} \ =\ r \, ,\quad  \text{ where } \ 0 \le r_{l,m} \le d_l\, .
\end{equation}  
The irreducible component $\cE_{r,n\times n}^{\sigma,\mathbf{r}}(\RR)$ corresponding to such an integer solution $\mathbf{r}$ after the base change $Q_\sigma$ is
\begin{align}\label{eq:irrcompequi}
 \mathcal{M}_{r_{1,1}, d_1 \times d_1}(\RR)
 \, \times \,
  \mathcal{M}_{r_{2,1}, d_2 \times d_2}(\RR)
 \, \times \,
\prod_{l \geq 3} \prod_{\substack{m \,\in\, (\ZZ /l \ZZ)^\times, \\ \frac{1}{2}< \frac{m}{l} < 1}} \hspace*{-3mm} \mathcal{R}(\mathcal{M}_{r_{l,m},d_l \times d_l}(\CC)) \,.
\end{align}
\end{theorem}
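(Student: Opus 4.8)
The plan is to run the proof of \Cref{thm:irreducible-dim} over~$\RR$, replacing the complex diagonalization by the realization base change $Q_\sigma$. First I would record that, in the basis given by $Q_\sigma$, the matrix $P_\sigma^{\sim_{Q_\sigma}}$ is the real block diagonal matrix
\[
\Id_{d_1}\ \oplus\, (-\Id_{d_2})\, \oplus\ \bigoplus_{l\geq 3}\ \bigoplus_{\substack{m\in(\ZZ/l\ZZ)^\times\\ 1/2<m/l<1}}\ \Big(\bigoplus_{j=1}^{d_l}\cR\!\big(e^{2\pi i m/l}\big)\Big),
\]
in which the $d_1\times d_1$ identity block collects all eigenvalues~$1$, the block $-\Id_{d_2}$ all eigenvalues~$-1$, and each $2d_l\times 2d_l$ block collects a conjugate pair $e^{\pm 2\pi i m/l}$ of non-real eigenvalues, each of multiplicity~$d_l$ (this is the pattern illustrated for the $3\times3$ rotation). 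Over~$\CC$ these blocks have pairwise disjoint eigenvalue sets $\{1\}$, $\{-1\}$, and $\{e^{\pm 2\pi i m/l}\}$, so \Cref{lem:commute-block} identifies $\cE^\sigma_{n\times n}(\RR)^{\sim_{Q_\sigma}} = C_\RR(P_\sigma^{\sim_{Q_\sigma}})$ with the direct product of the real commutators of the individual blocks. The commutator of $\pm\Id_{d_i}$ is all of $\cM_{d_i\times d_i}(\RR)$, and by \Cref{lem:commute-real-2block} the commutator of $\bigoplus_{j=1}^{d_l}\cR(e^{2\pi i m/l})$ is $\cR(\cM_{d_l\times d_l}(\CC))$; hence $\cE^\sigma_{n\times n}(\RR)^{\sim_{Q_\sigma}}$ equals $\cM_{d_1\times d_1}(\RR)\times\cM_{d_2\times d_2}(\RR)\times\prod_{l\geq 3}\prod_{1/2<m/l<1}\cR(\cM_{d_l\times d_l}(\CC))$.

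Next I would impose the rank bound. Since similarity transforms preserve rank, $\cE^\sigma_{r,n\times n}(\RR)^{\sim_{Q_\sigma}}$ is the intersection of the above product with $\cM_{r,n\times n}(\RR)$. A block diagonal matrix has rank equal to the sum of the ranks of its blocks, and $\rank\cR(Z)=2\rank Z$ for every complex matrix~$Z$ because $\cR(Z)$ is similar over~$\CC$ to $Z\oplus\overline Z$. Writing $r_{1,1},r_{2,1}$ for the ranks of the two identity-type blocks and $r_{l,m}$ for the complex rank of the $\cR$-block, a matrix of the product form has rank at most~$r$ exactly when $r_{1,1}+r_{2,1}+\sum_{l\geq3}\sum_{1/2<m/l<1}2r_{l,m}\leq r$ with $0\leq r_{l,m}\leq d_l$. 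Therefore $\cE^\sigma_{r,n\times n}(\RR)^{\sim_{Q_\sigma}}$ is the union over all such tuples $\mathbf{r}$ of the sets
\[
P_{\mathbf{r}}\ :=\ \cM_{r_{1,1},d_1\times d_1}(\RR)\times\cM_{r_{2,1},d_2\times d_2}(\RR)\times\prod_{l\geq3}\prod_{1/2<m/l<1}\cR\!\big(\cM_{r_{l,m},d_l\times d_l}(\CC)\big),
\]
which are precisely the varieties appearing in \eqref{eq:irrcompequi}.

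Finally I would extract the components. Each $P_{\mathbf{r}}$ is irreducible: the determinantal factors are irreducible, and $\cR(\cM_{r_{l,m},d_l\times d_l}(\CC))$ is the image of the affine space $\CC^{d_l\times r_{l,m}}\times\CC^{r_{l,m}\times d_l}$, viewed as a real affine space, under the real-polynomial map $(A,B)\mapsto\cR(AB)$, hence irreducible; so the product is irreducible. Consequently the irreducible components of $\cE^\sigma_{r,n\times n}(\RR)$ are the maximal members of the family $\{P_{\mathbf{r}}\}$. Projecting onto factors shows $P_{\mathbf{r}}\subseteq P_{\mathbf{r}'}$ if and only if $r_{l,m}\leq r'_{l,m}$ for all indices; since the weighted sum $r_{1,1}+r_{2,1}+\sum2r_{l,m}$ is then strictly smaller for $\mathbf{r}$ unless $\mathbf{r}=\mathbf{r}'$, no $P_{\mathbf{r}}$ whose weighted sum equals~$r$ is contained in another $P_{\mathbf{r}'}$, so all of these are components, and the same projection argument shows $\mathbf{r}\mapsto P_{\mathbf{r}}$ is injective. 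On the other hand the maximal rank attained on $P_{\mathbf{r}}$ equals that weighted sum, so $P_{\mathbf{r}}$ contains a rank-$r$ matrix precisely when $\mathbf{r}$ solves \eqref{eq:number-sol-real}. Combining these statements gives the claimed bijection together with the identification $\cE^{\sigma,\mathbf{r}}_{r,n\times n}(\RR)^{\sim_{Q_\sigma}}=P_{\mathbf{r}}$.

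The step I expect to be the main obstacle is this last paragraph, not the block decomposition. The subtle point is that, in contrast to the complex case of \Cref{thm:irreducible-dim}, not every irreducible component of $\cE^\sigma_{r,n\times n}(\RR)$ contains a matrix of rank~$r$: those $P_{\mathbf{r}}$ with weighted sum strictly below~$r$ that still cannot be enlarged---which occurs precisely because each non-real conjugate pair of eigenvalues forces the rank to grow in steps of two---are components of maximal rank $<r$, and this is exactly why the statement is phrased only for components containing a rank-$r$ matrix; getting the monotonicity bookkeeping right is what isolates the correct $P_{\mathbf{r}}$. A secondary point needing care is that the realization factors $\cR(\cM_{r_{l,m},d_l\times d_l}(\CC))$ are not determinantal varieties in the literal sense, so their irreducibility must be argued separately (and, if one wishes to match \Cref{thm:irreducible-dim} on the nose, their geometric irreducibility, via the fact that their complexification is a product of two copies of a complex determinantal variety).
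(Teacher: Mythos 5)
Your proposal is correct and follows essentially the same route as the paper's proof: transform by the realization base change $Q_\sigma$, apply \Cref{lem:commute-block} and \Cref{lem:commute-real-2block} to identify $(\cE^\sigma_{n\times n}(\RR))^{\sim Q_\sigma}$ as the product of full matrix spaces and realization spaces, and impose the rank bound via $\rank_\RR(\cR(Z))=2\rank_\CC(Z)$. The extra care you take at the end---irreducibility of the realization factors and the maximality bookkeeping isolating exactly the components that contain a rank-$r$ matrix---only spells out details that the paper's terse final sentence leaves implicit, and your reasoning there is sound.
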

\begin{proof}

By \Cref{lem:invAndEquivUnderbaseChange}, 
we have that $C_{\RR}(P_{\sigma}^{\sim_{Q_{\sigma}}})=(C_{\RR}(P_{\sigma}))^{\sim_{Q_{\sigma}}}  = (\cE^\sigma_{n \times n}(\RR))^{\sim_{Q_{\sigma}}}$.
Also, \Cref{lem:commute-block} implies that the commutator of $P_{\sigma}^{\sim_{Q_{\sigma}}}$ is the set of block diagonal matrices $R = \bigoplus_{m, \, l} R_{l,m}$, where the direct sum is running over $ m \,\in\, (\ZZ /l \ZZ)^\times$ and {$\{l \,|\, \frac{1}{2}\le \frac{m}{l} \le 1 \}$},
 such that $R_{1,1}$ and $R_{2,1}$ are arbitrary matrices of size $d_1\times d_1$ and $d_2\times d_2$, respectively. For the other pairs $(l,m)$ with $\frac{1}{2}<\frac{m}{l}<1$ and $\gcd(l,m)=1$, \Cref{lem:commute-real-2block} implies that the matrix  $R_{l,m}$ is an arbitrary matrix in $\mathcal{R}(\cM_{d_l \times d_l}(\CC))$. Now, note that for a complex matrix $Z$, we have $\rank_\RR (\cR(Z))={2 \cdot }\rank_{\CC}(Z)$. Hence, imposing the rank constraint leads to \eqref{eq:number-sol-real}, and the corresponding irreducible component can be seen in \eqref{eq:irrcompequi}.
\end{proof}

\begin{example}
\label{ex:real-comp-3by3}
For rotating $3\times 3$ images by $90$ degrees, the permutation $\sigma\in \cS_9$ decomposes into two cycles of length $4$, and one of length $1$. Then $P_\sigma^{\sim_{Q_\sigma}}$ is the block diagonal matrix 
$\Id_3 \oplus (-\Id_2) \oplus \left [\mathcal{R}(i) \oplus \mathcal{R}(i)\right]$. Thus, the commutator of $P_\sigma^{\sim_{Q_\sigma}}$ 
is equal to $\cM_{3\times 3}(\RR) \oplus \cM_{2\times 2}(\RR) \oplus \mathcal{R}(\cM_{2\times2}(\CC))$. The matrix in~\eqref{eq:equivariantReal} is an example of an element of that space. For  $r=3$, the number of real irreducible components matches the non-negative integer solutions of the equation $r_{1,1}+r_{2,1}+2\cdot r_{4,3} = 3$. Here, $r_{1,1}\leq 3=d_1$, $r_{2,1}\le 2=d_2$, and $r_{4,3}\leq 2=d_4$. Solving the equation yields five solutions: $(3,0,0),(2,1,0),(1,2,0),(1,0,1),$ and $(0,1,1)$. This is a notable 
decrease of the number of complex irreducible components from~$17$ in \Cref{ex:dimequirot} to~$5$. The dimensions of the $5$ real components are $9,$ $11,$ 
 $9$, $11$, and~$9$.
\end{example}

We can compute the dimensions of the real components of $\cE^\sigma_{r, n \times n}$ in a similar fashion as the complex components; see \Cref{prop:algebraicPropertiesReal}.
For the singular loci of the real components, we make use of the following observation.
Recall that for a given holomorphic function $f:\CC^n \to \CC$, we can write
\begin{equation}
    f\left(z_1,\ldots,z_n\right)  \,=\, u\left(x_1,y_1,\ldots,x_n,y_n\right) \,+\, i\, v\left(x_1,y_1,\ldots,x_n,y_n\right),
\end{equation}
where $u$ and $v$ are smooth real-valued functions and $z_j = x_j+iy_j$ for every $j$. The Cauchy--Riemann equations and Wirtinger derivatives provide  the following relations:
\begin{align}\label{eq:cauchy-riem}
    \frac{\partial u}{\partial x_{j}} = \frac{\partial v}{\partial y_{j}} \quad \text{and} \quad \frac{\partial u}{\partial y_{j}} = -\frac{\partial v}{\partial x_{j}} \, , 
    \end{align}
    and
\begin{align}\label{eq:Wirtinger}
    \frac{\partial f}{\partial z_j} \,=\, \frac{1}{2}\left(\frac{\partial f}{\partial x_j} - i \frac{\partial f}{\partial y_j} \right) \,=\, \frac{\partial u}{\partial x_j} + i \frac{\partial v}{\partial x_j} 
\end{align}
for all $j=1,\ldots,n$. Therefore, by~\eqref{eq:cauchy-riem} and~\eqref{eq:Wirtinger}, the Jacobian $\operatorname{Jac}(u,v)=\left[\begin{smallmatrix}
\nabla u \\
  \nabla v
\end{smallmatrix}\right]$ can be simplified as follows:
\begin{align}
\label{eq:cauchy-riemann-realziation}
\begin{aligned}
\operatorname{Jac}(u,v)&\,=\,\begin{bmatrix}
    \frac{\partial u}{\partial x_1} & \frac{-\partial v}{\partial x_1}& \cdots & \frac{\partial u}{\partial x_n} & \frac{-\partial v}{\partial x_n}\\
    \frac{\partial v}{\partial x_1} & \frac{\partial u}{\partial x_1}& \cdots & \frac{\partial v}{\partial x_n} & \frac{\partial u}{\partial x_n}
\end{bmatrix}\,=\,\cR \left[\frac{\partial u}{\partial x_1}+i \frac{\partial v}{\partial x_1},\ldots,\frac{\partial u}{\partial x_n}+i \frac{\partial v}{\partial x_n} \right]\\
&\,=\, \cR \left( \left [ 
\frac{\partial f}{\partial x_1},\ldots,\frac{\partial f}{\partial x_n}  \right]\right)=\cR \left( \left [ 
\frac{\partial f}{\partial z_1},\ldots,\frac{\partial f}{\partial z_n}  \right] \right)\,=\, \cR \left(\nabla f\right).\end{aligned}\end{align}
\begin{lemma}
    \label{lem:rank-jac-realization}
    Let $I \subset \CC[z_1,\ldots,z_n]$ be a prime ideal generated by polynomials $f_1,\ldots,f_k$. {We split $f_j=u_j + iv_j$ into their real and imaginary parts and denote by $J \subset \RR[x_1,y_1,\ldots,x_n,y_n]$ the ideal generated by the $u_j$'s and $v_j$'s.}  Then a point $\mathbf{z}_{0}\in \CC^n$ is singular for the complex variety $V(I)$ if and only if {$(\mathbf{x}_0,\mathbf{y}_0)=(\Re(\mathbf{z}_0),\Im(\mathbf{z}_0))$} in $\RR^{2n}$ is singular for~$V(J)$.
\end{lemma}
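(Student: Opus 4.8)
The plan is to relate the two Jacobians through the realization map $\cR$ and the computation in \eqref{eq:cauchy-riemann-realziation}. Recall that a point is singular for a complex variety $V(I)$ (with $I$ prime) precisely when the Jacobian of a generating set of $I$ drops rank at that point below the codimension $c = n - \dim V(I)$; similarly $(\mathbf{x}_0, \mathbf{y}_0)$ is singular for $V(J)$ when the real Jacobian of the $u_j, v_j$ drops rank below the real codimension of $V(J)$. So the first step is to record that $\dim_\RR V(J) = 2 \dim_\CC V(I)$, hence the real codimension of $V(J)$ in $\RR^{2n}$ is $2c$. This follows because $V(J)$, as a real algebraic set, is exactly the set $\{(\Re(\mathbf z),\Im(\mathbf z)) : \mathbf z \in V(I)\}$ — a real form of the complex variety under the standard identification $\CC^n \cong \RR^{2n}$ — and this identification doubles real dimension (cf.\ \Cref{rem:dimreal}, and the fact that $I$ prime implies $V(I)$ irreducible so that $V(J)$ is irreducible of the expected dimension).

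The heart of the argument is the rank comparison. Stack the gradients $\nabla f_1, \ldots, \nabla f_k$ as the rows of a complex $k \times n$ matrix $\mathrm{Jac}_\CC(\mathbf f)(\mathbf z_0)$, and stack the gradients of $u_1, v_1, \ldots, u_k, v_k$ as the rows of the real $2k \times 2n$ matrix $\mathrm{Jac}_\RR(\mathbf u, \mathbf v)(\mathbf x_0, \mathbf y_0)$. The computation \eqref{eq:cauchy-riemann-realziation}, applied row by row to each $f_j$, shows that after reordering rows the latter matrix is exactly $\cR\big(\mathrm{Jac}_\CC(\mathbf f)(\mathbf z_0)\big)$ — the block-wise realization of the complex Jacobian. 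Now I invoke the rank property of the realization used already in the proof of \Cref{thm:irreducible-dim-real}: for any complex matrix $Z$ one has $\rank_\RR(\cR(Z)) = 2\cdot\rank_\CC(Z)$. Therefore
\begin{align*}
	\rank_\RR \mathrm{Jac}_\RR(\mathbf u, \mathbf v)(\mathbf x_0, \mathbf y_0) \,=\, 2\cdot \rank_\CC \mathrm{Jac}_\CC(\mathbf f)(\mathbf z_0)\,.
\end{align*}

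The conclusion then follows by combining the two steps: $\mathbf z_0$ is singular for $V(I)$ iff $\rank_\CC \mathrm{Jac}_\CC(\mathbf f)(\mathbf z_0) < c$, which by the displayed equality is equivalent to $\rank_\RR \mathrm{Jac}_\RR(\mathbf u,\mathbf v)(\mathbf x_0,\mathbf y_0) < 2c$, i.e.\ to $(\mathbf x_0,\mathbf y_0)$ being singular for $V(J)$ (whose codimension is $2c$). The main obstacle I anticipate is the bookkeeping around the Jacobian criterion: one must be careful that $J$ is the ideal generated by the $u_j, v_j$ (not its radical), and that the Jacobian criterion in the form "rank drop below codimension $=$ singular" applies — this is clean for $V(I)$ since $I$ is prime, and for $V(J)$ one should note that $V(J)$ is irreducible as the image of the irreducible $V(I)$ under the real-analytic isomorphism $\CC^n \cong \RR^{2n}$, and that $\sqrt J$ and $J$ define the same variety with the same singular locus (the generators $u_j, v_j$ suffice to compute the tangent space at every point, by the identification with the complex tangent space via $\cR$). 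A secondary point to handle with a sentence is that reordering rows and the $2\times 2$ block structure do not affect the rank, which is immediate.
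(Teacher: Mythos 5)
Your proposal is correct and follows essentially the same route as the paper: identify the real Jacobian of the $u_j,v_j$ with the realization $\cR$ of the complex Jacobian via the Cauchy--Riemann computation \eqref{eq:cauchy-riemann-realziation}, use $\rank_\RR(\cR(Z)) = 2\rank_\CC(Z)$, and conclude with the Jacobian criterion. Your added bookkeeping (that $V(J)$ is the image of $V(I)$ under $\CC^n\cong\RR^{2n}$ so its codimension doubles, and the remark on $J$ versus $\sqrt{J}$) only makes explicit what the paper leaves implicit in the phrase ``the rank of the Jacobian drops.''
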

\begin{proof}
    By \eqref{eq:cauchy-riemann-realziation}, for any $\mathbf{z}_0 \in \CC^n$ and its counterpart $(\mathbf{x}_0,\mathbf{y}_0)\in \RR^{2n}$, we have
\begin{align}\begin{split}
\label{eq:rank-comp-real}
\rank_{\RR}\operatorname{Jac}_{(\mathbf{x}_0,\mathbf{y}_0)}\left(u_1,v_1,\ldots,u_k,v_k\right) &\,=\, \rank_{\RR}\cR\left(\operatorname{Jac}_{\mathbf{z}_0}\left(f_1,\ldots,f_k\right)\right) \\ &\,=\, 2\cdot \rank_{\CC}\operatorname{Jac}_{\mathbf{z}_0}\left(f_1,\ldots,f_k\right).
\end{split}\end{align}
Note that a point $\mathbf{z}_0 \in \CC^n$ (resp., $(\mathbf{x}_0,\mathbf{y}_0) \in \RR^{2n}$) is singular for $V(I)$ (resp., $V(J)$) if and only if the rank of the Jacobian $\operatorname{Jac}_{\mathbf{z}_0}(f_1,\ldots,f_k)$ (resp., $\operatorname{Jac}_{(\mathbf{x}_0,\mathbf{y}_0)}(u_1,v_1,\ldots,u_k,v_k)$) drops.
Hence, due to \eqref{eq:rank-comp-real}, $\mathbf{z}_0$ is singular for $V(I)$ if and only if $(\mathbf{x}_0,\mathbf{y}_0)$ is singular for~$V(J)$.
\end{proof}
\begin{corollary}\label{cor:singularOfRealization}
Let $0 < r < \min(m,n)$.
A matrix $M \in \cR(\cM_{r, m \times n}(\CC))$ is a singular point of $\cR(\cM_{r, m \times n}(\CC))$ if and only if $M \in \cR(\cM_{r-1, m \times n}(\CC)).$
\end{corollary}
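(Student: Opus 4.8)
The plan is to identify $\cR(\cM_{r, m\times n}(\CC))$ with the zero set of a ``realified'' determinantal ideal and then to invoke \Cref{lem:rank-jac-realization} together with \Cref{lem:sing-mat}. Write the generic matrix $Z = [z_{ij}]$ with $z_{ij} = x_{ij} + i\, y_{ij}$, and let $f_1, \ldots, f_k$ be the $(r+1)\times (r+1)$ minors of $Z$. These generate the determinantal ideal $I = I_{r+1}(Z) \subset \CC[\{z_{ij}\}]$, which is prime because $\cM_{r, m\times n}(\CC) = V(I)$ is irreducible. The realization map $\cR$ restricts to an $\RR$-linear isomorphism from $\CC^{m\times n}$ (with real coordinates $x_{ij}, y_{ij}$) onto the linear subspace $\cR(\cM_{m\times n}(\CC)) \subset \RR^{2m\times 2n}$, and under this identification $\cR(\cM_{r, m\times n}(\CC))$ is exactly $V(J)$, where $J$ is generated by the real and imaginary parts of the $f_j$: indeed $(\mathbf{x}, \mathbf{y}) \in V(J)$ if and only if $f_j(\mathbf{x} + i\mathbf{y}) = 0$ for all $j$, i.e.\ if and only if $\rank(Z) \le r$.

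First I would apply \Cref{lem:rank-jac-realization} with this $I$ and $J$. It yields that $M = \cR(Z)$ is a singular point of $\cR(\cM_{r, m\times n}(\CC)) = V(J)$ if and only if $Z$ is a singular point of $\cM_{r, m\times n}(\CC) = V(I)$. Here one should note that singularity of a variety is an intrinsic notion, and since $\cR(\cM_{m\times n}(\CC))$ is a linear---hence smooth---subspace of $\RR^{2m\times 2n}$, there is no discrepancy between the intrinsic singular locus and the singular locus of $\cR(\cM_{r, m\times n}(\CC))$ viewed inside $\RR^{2m\times 2n}$.

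Next, since $0 < r < \min(m,n)$, \Cref{lem:sing-mat} gives $\Sing(\cM_{r, m\times n}(\CC)) = \cM_{r-1, m\times n}(\CC)$, so $Z$ is singular for $\cM_{r, m\times n}(\CC)$ precisely when $\rank_\CC(Z) \le r-1$. Using $\rank_\RR(\cR(Z)) = 2\,\rank_\CC(Z)$ and the fact that $\cR(\cM_{r-1, m\times n}(\CC))$ is by definition the set of matrices in $\cR(\cM_{m\times n}(\CC))$ of rank at most $2(r-1)$, this condition translates to $M \in \cR(\cM_{r-1, m\times n}(\CC))$. Chaining the two equivalences proves the corollary.

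The only genuinely delicate point is the bookkeeping needed to match the present situation to the hypotheses of \Cref{lem:rank-jac-realization}: one must check that the ideal $J$ there is literally generated by the real and imaginary parts of a generating set of the \emph{prime} ideal $I_{r+1}(Z)$, and that the relevant real ambient affine space is $\RR^{2mn}$ with coordinates $x_{ij}, y_{ij}$, identified with $\cR(\cM_{m\times n}(\CC))$ through $\cR$. Once this is in place, no further computation is required; everything follows from the two cited results.
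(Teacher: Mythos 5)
Your proposal is correct and follows essentially the same route as the paper: express $\cR(\cM_{r,m\times n}(\CC))$ as the common zero locus of the real and imaginary parts of the $(r+1)\times(r+1)$ minors, then combine \Cref{lem:rank-jac-realization} with \Cref{lem:sing-mat} (and the rank identity $\rank_\RR \cR(Z) = 2\rank_\CC Z$) to conclude. Your extra care about the primality of the determinantal ideal and the identification of the ambient space $\RR^{2mn}$ with $\cR(\cM_{m\times n}(\CC))$ is a welcome, but not essentially different, refinement of the paper's argument.
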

\begin{proof}
    Let $I = \langle f_1,\ldots, f_k \rangle$ be the ideal of $\cM_{r,m\times n}(\CC)$. Then the realization $\cR({\cM_{r,m\times n}(\CC)})$ is the {common} zero locus of $u_1,v_1,\ldots,u_k,v_k$, where $f_j = u_j+iv_j$. By \Cref{lem:sing-mat}, we know that $\Sing(\cM_{r,m\times n}(\CC))= \cM_{r-1,m\times n}(\CC)$, and thus \Cref{lem:rank-jac-realization} implies that $\Sing(\cR(\cM_{r,m\times n}(\CC)))= \cR(\cM_{r-1,m\times n}(\CC))$.
\end{proof}
\begin{proposition}\label{prop:algebraicPropertiesReal}
Let $\mathbf{r}$ be an integer solution of \eqref{eq:number-sol-real}. The dimension of $\cE_{r,n\times n}^{\sigma,\mathbf{r}}(\RR) $ is
\begin{align}
          (2d_1 - r_{1,1})\cdot r_{1,1} \,+\,(2d_2 - r_{2,1})\cdot r_{2,1} 
          \,+\, {{2\cdot}} \sum_{l\geq 3} \sum_{\substack{m \,\in \, (\ZZ/l\ZZ)^\times, \\ \frac{1}{2}< \frac{m}{l} < 1}} (2d_l - r_{l,m})\cdot r_{l,m}  \, ,
          \end{align}
          and its singular locus is
          \begin{align}
        \Sing \left(  \cE_{r,n\times n}^{\sigma,\mathbf{r}}(\RR) \right) &\,=\ \cE_{r,n\times n}^{\sigma,\mathbf{r}}(\RR) \cap \cE^{\sigma}_{r-1, n \times n} (\RR)
        \text{ if } r < n, \text{ and empty otherwise}.
\end{align}
\end{proposition}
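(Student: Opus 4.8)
The plan is to deduce both assertions from the product description \eqref{eq:irrcompequi} of the component furnished by \Cref{thm:irreducible-dim-real}. First I would record that the realization base change $Q_\sigma$ is orthogonal, so $M \mapsto M^{\sim_{Q_\sigma}}$ is a linear automorphism of $\cM_{n\times n}(\RR)$; arguing exactly as in \Cref{lem:baseChange}, it preserves Krull dimension, maps regular points to regular points, and, since $\rank(Q_\sigma^{-1} M Q_\sigma) = \rank M$, it respects the rank filtration and hence carries $\cE^\sigma_{r-1,n\times n}(\RR) = \cM_{r-1,n\times n}(\RR) \cap \cE^\sigma_{n\times n}(\RR)$ onto $\cM_{r-1,n\times n}(\RR) \cap C_\RR(P_\sigma^{\sim_{Q_\sigma}})$ via \Cref{lem:invAndEquivUnderbaseChange}. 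It therefore suffices to compute the dimension and the singular locus of the product variety \eqref{eq:irrcompequi}, and to compare the singular locus with the intersection of that product with $\cM_{r-1,n\times n}(\RR)$.

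For the dimension, I would use that the Krull dimension of a product of affine varieties is the sum of the dimensions of the factors, and then add up the contributions. By \eqref{eq:dimMrmn} together with \Cref{rem:dimreal}, the factor $\mathcal{M}_{r_{1,1}, d_1 \times d_1}(\RR)$ contributes $(2d_1 - r_{1,1})\cdot r_{1,1}$ and the factor $\mathcal{M}_{r_{2,1}, d_2 \times d_2}(\RR)$ contributes $(2d_2 - r_{2,1})\cdot r_{2,1}$; each realization factor contributes $\dim_\RR \mathcal{R}(\mathcal{M}_{r_{l,m}, d_l \times d_l}(\CC)) = 2 \dim_\CC \mathcal{M}_{r_{l,m}, d_l \times d_l}(\CC) = 2\,(2d_l - r_{l,m})\cdot r_{l,m}$, as recorded just before the statement. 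Summing over all factors gives exactly the asserted formula.

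For the singular locus, the key inputs are: (i) a point of a product $X_1 \times \cdots \times X_t$ of affine varieties is singular if and only if at least one of its components is a singular point of the corresponding $X_i$; (ii) $\Sing \mathcal{M}_{\rho, d\times d}(\RR) = \mathcal{M}_{\rho-1, d\times d}(\RR)$ when $0 < \rho < d$, and $\Sing$ is empty otherwise (\Cref{lem:sing-mat}); and (iii) $\Sing \mathcal{R}(\mathcal{M}_{\rho, d\times d}(\CC)) = \mathcal{R}(\mathcal{M}_{\rho-1, d\times d}(\CC))$ when $0 < \rho < d$, and $\Sing$ is empty otherwise (\Cref{cor:singularOfRealization}). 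Combining these, a point of \eqref{eq:irrcompequi} is singular precisely when one of its blocks has rank strictly below the bound $r_{l,m}$ (rank in the complex sense for a realization block). On the other hand, a point of the component has block ranks $\rho_{1,1} \le r_{1,1}$, $\rho_{2,1} \le r_{2,1}$, and $\rho_{l,m} \le r_{l,m}$; since $\rank_\RR \mathcal{R}(Z) = 2\rank_\CC Z$, its total rank equals $\rho_{1,1} + \rho_{2,1} + 2\sum_{l\ge 3, m} \rho_{l,m}$, which by \eqref{eq:number-sol-real} is at most $r$, and is $\le r - 1$ exactly when it is strictly less than $r$, i.e.\ exactly when some block rank is non-maximal. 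Hence the singular locus of \eqref{eq:irrcompequi} coincides with its intersection with $\cM_{r-1, n\times n}(\RR)$, and transporting this back through $Q_\sigma$ identifies $\Sing(\cE_{r,n\times n}^{\sigma,\mathbf{r}}(\RR))$ with $\cE_{r,n\times n}^{\sigma,\mathbf{r}}(\RR) \cap \cE^\sigma_{r-1,n\times n}(\RR)$. When $r = n$ the unique solution of \eqref{eq:number-sol-real} has all $r_{l,m}$ maximal, so the component is all of $\cE^\sigma_{n\times n}(\RR)$, a linear space, hence smooth, matching the ``empty otherwise'' clause.

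The step I expect to require the most care is the bookkeeping around the realization factors: one must consistently track that the real rank of $\mathcal{R}(Z)$ is twice the complex rank of $Z$, so that \eqref{eq:number-sol-real} and the singular-locus count line up, and one must invoke \Cref{cor:singularOfRealization} rather than applying \Cref{lem:sing-mat} directly to $\mathcal{R}(\mathcal{M}_{r_{l,m}, d_l \times d_l}(\CC))$, whose ideal is generated by the real and imaginary parts of the minors and not by minors themselves. A secondary point to keep in mind is the degenerate case in which the component happens to be a linear space --- one block at full rank and all others zero --- in which case the component is smooth; I would dispatch this case on its own.
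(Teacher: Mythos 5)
Your proposal is correct and follows essentially the same route as the paper: transport everything through the orthogonal base change $Q_\sigma$ (arguing as in \Cref{lem:baseChange}), read off the dimension from the product \eqref{eq:irrcompequi} using \eqref{eq:dimMrmn} and $\dim_\RR \cR(\cM_{r,m\times n}(\CC)) = 2\cdot\dim_\CC \cM_{r,m\times n}(\CC)$, and determine the singular locus blockwise via \Cref{lem:sing-mat} and \Cref{cor:singularOfRealization}, matching it with the intersection with $\cM_{r-1,n\times n}$. The one delicate point---a factor with $r_{l,m}=d_l$ is smooth even where its rank drops, so your ``singular precisely when some block rank is below its bound'' step needs the same implicit exclusion of that case that the paper's own proof makes---is glossed over identically in the published argument, and you partially acknowledge it with your closing remark on the linear-space components.
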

\begin{proof}
As in the proof of \Cref{prop:algebraicPropertiesComplex}, we make use of \Cref{lem:baseChange}.
    Then, the first statement follows from 
    \eqref{eq:irrcompequi},
    the fact that $\dim_\RR \mathcal{R}(\mathcal{M}_{r, m \times n}(\CC)) = 2 \cdot \dim_\CC \mathcal{M}_{r, m \times n}(\CC)$, and~\eqref{eq:dimMrmn}.
If $r = n$, then $\cE_{r,n\times n}^{\sigma,\mathbf{r}}(\RR)$ is a linear space and thus smooth.
Otherwise, by \Cref{lem:baseChange} and \Cref{cor:singularOfRealization}, the singular locus of \eqref{eq:irrcompequi} is its intersection with $\cM_{r-1,n \times n}$.
Now, the second assertion follows from \Cref{lem:baseChange}.
\end{proof}
In light of  \Cref{sec:parameterizingEquiv}, where we will show that each real irreducible component $\cE_{r,n\times n}^{\sigma,\mathbf{r}}(\RR) $ is the function space of a $\sigma$-equivariant autoencoder, the second assertion of \Cref{prop:algebraicPropertiesReal} means that the singular locus of the function space is the finite union of function spaces of {networks with} smaller architectures.

\medskip

To a give a degree formula for the real components of $\cE^\sigma_{r, n \times n}$ as we did in the complex case in \Cref{prop:algebraicPropertiesComplex}, we would need to have a formula for the degree of the realization space $\mathcal{R}(\mathcal{M}_{r, m \times n}(\CC)) \subset \RR^{2m \times 2n}$.
More precisely, for a well-defined notion of degree, we need to consider the degree of the Zariski closure of the real variety $\mathcal{R}(\mathcal{M}_{r, m \times n}(\CC))$ inside $\CC^{2m \times 2n}$.
For that degree, we conjecture the following:
\begin{conjecture}\label{conj:degsq}
$\deg \cR(\cM_{r, m \times n}) = (\deg \cM_{r, m \times n})^2$.
\end{conjecture}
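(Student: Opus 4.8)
The plan is to exhibit a fixed linear change of coordinates on $\CC^{2m\times 2n}$ that identifies the complex Zariski closure of $\cR(\cM_{r,m\times n}(\CC))$ with two copies of $\cM_{r,m\times n}(\CC)$ sitting block-diagonally, and then to invoke multiplicativity of the degree under products of affine cones.

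First I would bring the realization into ``big-block'' form. Reordering the $2m$ rows of $\cR(Z)$ so that the odd-indexed rows come first, and likewise the $2n$ columns, amounts to conjugating $\cR(Z)$ by fixed permutation matrices and turns it into $\left(\begin{smallmatrix}\Re Z & -\Im Z \\ \Im Z & \Re Z\end{smallmatrix}\right)$. Complexifying the real linear space $\cR(\cM_{m\times n}(\CC))$ just replaces $\Re Z,\Im Z$ by two \emph{independent} complex matrices $A,B\in\CC^{m\times n}$. Now apply the fixed invertible complex matrices $U=\left(\begin{smallmatrix}\Id_m & i\Id_m \\ \Id_m & -i\Id_m\end{smallmatrix}\right)$ on the left and $V=\tfrac12\left(\begin{smallmatrix}\Id_n & \Id_n \\ -i\Id_n & i\Id_n\end{smallmatrix}\right)$ on the right: a direct computation gives
\begin{align*}
	U\begin{pmatrix}A & -B\\ B & A\end{pmatrix}V \,=\, \begin{pmatrix}A+iB & 0\\ 0 & A-iB\end{pmatrix}.
\end{align*}
Since $(A,B)\mapsto(A+iB,A-iB)$ is a linear automorphism of $\CC^{m\times n}\times\CC^{m\times n}$, the composite linear automorphism $M\mapsto \tilde U M\tilde V$ of $\CC^{2m\times 2n}$ (with $\tilde U,\tilde V$ absorbing the permutation matrices) carries the complexified realization space isomorphically onto the linear space of block-diagonal matrices $\operatorname{diag}(C,D)$ with $C,D\in\CC^{m\times n}$, and it carries $\cR(\cM_{r,m\times n}(\CC))=\{\cR(Z)\mid \rank_\CC Z\le r\}$ onto $S\coloneqq\{\operatorname{diag}(Z,\bar Z)\mid \rank_\CC Z\le r\}$ (using $\Re Z+i\Im Z=Z$). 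Because $\rank_\RR\cR(Z)=2\rank_\CC Z$, we have $S\subseteq \cM_{r,m\times n}(\CC)\times\cM_{r,m\times n}(\CC)$ (block-diagonal embedding).

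Second, I would prove that the Zariski closure of $S$ is \emph{all} of $\cM_{r,m\times n}(\CC)\times\cM_{r,m\times n}(\CC)$. This product is irreducible, being a product of irreducible varieties over an algebraically closed field. Consider the anti-holomorphic involution $\tau\big(\operatorname{diag}(A,B)\big)\coloneqq\operatorname{diag}(\bar B,\bar A)$; it maps $\cM_r\times\cM_r$ to itself since $\cM_{r,m\times n}$ is defined over $\QQ$, and $\operatorname{Fix}(\tau)\cap(\cM_r\times\cM_r)$ is exactly $S$. Choosing $Z$ of rank \emph{exactly} $r$, the point $\operatorname{diag}(Z,\bar Z)\in S$ is a smooth point of $\cM_r\times\cM_r$ by \Cref{lem:sing-mat}. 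Thus $S$ is the real locus of an irreducible real variety and contains a smooth point of that variety; near such a point $S$ is a real-analytic manifold of real dimension $\dim_\CC(\cM_r\times\cM_r)$, which forces $\overline S^{\,\mathrm{Zar}}=\cM_r\times\cM_r$. Consequently, up to the linear automorphism $M\mapsto\tilde U M\tilde V$, the complex Zariski closure of $\cR(\cM_{r,m\times n}(\CC))$ inside $\CC^{2m\times 2n}$ equals $\cM_{r,m\times n}(\CC)\times\cM_{r,m\times n}(\CC)$.

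Finally, I would conclude by degree bookkeeping. The degree of a projective (equivalently, of a conical affine) variety is invariant under a linear automorphism of the ambient space, and is unchanged when the ambient space is enlarged along a linear subspace, so $\deg\cR(\cM_{r,m\times n})=\deg\big(\cM_{r,m\times n}\times\cM_{r,m\times n}\big)$ computed inside the linear space of block-diagonal matrices. For cones $X\subset\CC^a$, $Y\subset\CC^b$, one has $\deg(X\times Y)=\deg X\cdot\deg Y$ — e.g.\ because the Hilbert series of the coordinate ring multiplies, so the Krull dimensions add and the leading Hilbert coefficients multiply. Hence $\deg\cR(\cM_{r,m\times n})=(\deg\cM_{r,m\times n})^2$, as conjectured; note this also matches the numerical checks $(m,n,r)=(2,2,1),(2,3,1),(3,3,2),(3,3,1)$, giving $4,9,9,36$.

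\textbf{Main obstacle.} The delicate point is the middle step: controlling the \emph{complex} Zariski closure of the a priori merely real-algebraic set $\cR(\cM_{r,m\times n}(\CC))$. Overshooting is ruled out for free since $S$ literally lies in $\cM_r\times\cM_r$; the substance is the density claim, i.e.\ that the real locus of the irreducible variety $\cM_r\times\cM_r$ (for the twisted real structure $\tau$, not the standard one) is Zariski dense. This rests on the general principle that a real variety possessing a smooth real point has Zariski-dense real locus, but one must set up the twisted involution $\tau$ correctly, verify that its fixed locus coincides with $S$, and confirm that a smooth real point genuinely exists — the latter via rank-exactly-$r$ matrices together with \Cref{lem:sing-mat}.
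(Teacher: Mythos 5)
The statement you are addressing is left \emph{open} in the paper: \Cref{conj:degsq} is supported only by \texttt{Macaulay2} computations for a list of parameters, so there is no proof in the paper to compare against. Your argument is, as far as I can check, correct and would upgrade the conjecture to a theorem. The block identity $U\left(\begin{smallmatrix}A & -B\\ B& A\end{smallmatrix}\right)V=\operatorname{diag}(A+iB,\,A-iB)$ is verified by direct computation, and the resulting ambient linear automorphism sends the real realization locus to $S=\{\operatorname{diag}(Z,\bar Z)\mid \rank_\CC Z\le r\}$; note that in these coordinates the ideal generated by the real and imaginary parts of the $(r{+}1)$-minors (which is exactly what the paper's \texttt{Macaulay2} script computes) cuts out $\cM_{r,m\times n}\times\cM_{r,m\times n}$, so your change of variables also explains the numerical evidence. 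Your two substantive steps are both sound: (i) the Zariski density of $S$ in the irreducible variety $\cM_{r,m\times n}\times\cM_{r,m\times n}$ follows from the standard principle that an irreducible variety defined over $\RR$ with a smooth real point has Zariski-dense real locus; rather than working with the twisted involution $\tau$, it is slightly cleaner to untwist and argue directly with the variety $V(J)$ cut out by the real and imaginary parts of the minors, which is defined over $\QQ$, is irreducible (being linearly isomorphic over $\CC$ to $\cM_{r,m\times n}\times\cM_{r,m\times n}$), and whose real points are precisely the realization matrices, a smooth one being $\cR(Z)$ with $\rank Z=r$ by \Cref{lem:sing-mat}; (ii) multiplicativity of the degree for the product is valid here because both factors are affine cones, so the Hilbert-series argument applies (the coordinate ring of the product is the tensor product of the graded coordinate rings, a domain, so dimension adds and the degree multiplies), and the degree is indeed unchanged under ambient linear automorphisms and under enlarging the ambient space along a linear subspace.

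Two small points to nail down in a careful write-up: treat the boundary cases $r=0$ and $r=\min(m,n)$ separately (there \Cref{lem:sing-mat} as stated does not apply, but both sides of the conjectured equality are immediate), and make explicit that the conjecture, as formulated in the paper, concerns the reduced Zariski closure of the real set $\cR(\cM_{r,m\times n}(\CC))$ inside $\CC^{2m\times 2n}$, which is what your argument computes; whether the ideal $J$ used in the computational check is radical is a separate question that your proof does not need to settle.
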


We validated the conjecture for all triples $(m,n,r)$ with $1 \leq m \leq 3, 1 \leq n \leq 9, 1 \leq r \leq 8$, or $m=4, 4 \leq n \leq 5, 2 \leq r \leq 5$, or $(m,n,r) \in \{ (5,4,2), \, (5,5,1), \, (6,6,1), \, (6,7,1), \, (7,6,1) \}$.
To test \Cref{conj:degsq} for a specific choice of parameters $(m,n,r)$, one can run the following lines in~{\tt Macaulay2}, 
here displayed for $(m,n,r)=(3,2,1)$:

{\footnotesize
\begin{verbatim}
m = 3; n = 2; r = 1;
R1 = QQ[c_(1,1)..c_(m,n)]; 
M = matrix apply(toList(1..m), i -> apply(toList(1..n), j -> c_(i,j)));
I = minors(r+1,M); 
R2 = QQ[a_(1,1)..a_(m,n),b_(1,1)..b_(m,n), x] / ideal(x^2+1);
f = map(R2, R1, flatten apply(toList(1..m), 
    i -> apply(toList(1..n), j -> a_(i,j) + x*b_(i,j))));
getRealAndImaginaryPart = eq -> (
    eqReal = sub(eq, x=>0);
    eqImag = sub((eq - eqReal)/x,R2);
    {eqReal,eqImag}
    );
J = ideal flatten apply(flatten entries gens I, 
    eq -> getRealAndImaginaryPart f eq); 
R3 = QQ[a_(1,1)..a_(m,n),b_(1,1)..b_(m,n)];
degree sub(J,R3) == (degree I)^2
\end{verbatim}
}

\subsection{Squared-error loss minimization on $\cE_{r,n\times n}^\sigma(\RR)$}\label{sec:EDopt}
The explicit structure of the space of equivariant linear maps in \Cref{thm:irreducible-dim-real}, with a bound on their rank imposed, provides an efficient algorithm to find the point $M$ in $\cE_{r,n\times n}^\sigma(\RR)$ that minimizes the squared-error loss.
Given $X \in \RR^{n \times d}$ with $\rank(XX^\top)=n$ and $Y \in \RR^{m \times d}$,
our task is to find a point in $\cE_{r,n\times n}^\sigma(\RR)$ that minimizes $\Vert MX-Y \Vert^2_{F}$. {For autoencoders, the input data equals the output data, i.e., $Y=X$, but we here allow arbitrary output data~$Y$.} 
The following algorithm reduces this task to many instances of minimizing the standard Euclidean distance to rank-bounded matrices via Eckart--Young, which in particular  shows $\squerdeg(\cE_{r,n\times n}^\sigma(\RR)) = \deged(\cE_{r,n\times n}^\sigma(\RR))$.
We proceed with the following three steps, for each of  which we give further details right after.

\begin{enumerate}[Step~1.]
    \item {\em Transform {the  task to finding a block diagonal matrix $B \in (\cE_{r,n\times n}^\sigma(\RR))^{\sim Q_\sigma}$ that minimizes $\Vert B - U \Vert^2_{\tilde{X}\tilde{X}^\top}$, where $\tilde X \coloneqq Q_\sigma^\top X$ and $U \coloneqq Q_\sigma^\top Y \tilde{X}^\top (\tilde{X}\tilde{X}^\top)^{-1}$}.}
    \end{enumerate}
Due to the orthogonality of $Q_\sigma$, we see as in~\eqref{eq:EDproblemOrthTransform} that 
$\Vert MX - Y \Vert^2_F = \Vert Q_\sigma^\top M X - Q_\sigma^\top Y \Vert_F^2 = \Vert M^{\sim Q_\sigma} \tilde{X} - \tilde{Y} \Vert_F^2$, where $\tilde{Y} \coloneqq  Q_\sigma^\top Y$. 
 Hence,  $M$  in $\cE_{r,n\times n}^\sigma(\RR)$ minimizes the squared-error loss with data matrices $X,Y$ if and only if the block diagonal matrix $B := M^{\sim Q_\sigma}$ in $(\cE_{r,n\times n}^\sigma(\RR))^{\sim Q_\sigma}$ minimizes the squared-error loss with data matrices $\tilde{X},\tilde{Y}$. Equivalently, by \Cref{lem:squaredErrorED}, $B$ minimizes $\Vert B - U \Vert^2_{\tilde{X}\tilde{X}^\top}$.
\begin{enumerate}[Step~2.]
    \item {\em With respect to the inner product $\langle \cdot, \cdot \rangle_{\tilde{X}\tilde{X}^\top}$, compute the orthogonal projection~$\tilde U$ of~$U$ onto the linear space $(\cE_{n\times n}^\sigma(\RR))^{\sim Q_\sigma}$.}
    \end{enumerate}
    \noindent 
    {Since $\Vert B - U \Vert^2_{\tilde{X}\tilde{X}^\top} = \Vert B - \tilde{U} \Vert^2_{\tilde{X}\tilde{X}^\top} + \Vert \tilde{U} - U \Vert^2_{\tilde{X}\tilde{X}^\top}$,}
    the point on the variety $(\cE_{r,n\times n}^\sigma(\RR))^{\sim Q_\sigma}$ closest {(w.r.t. $\Vert \cdot \Vert_{\tilde{X}\tilde{X}^\top}$)} to either $\tilde U$ or $U$ is the same.
    The matrices in the linear space $(\cE_{n\times n}^\sigma(\RR))^{\sim Q_\sigma}$, including $\tilde U$, are block diagonal matrices, whose blocks are either in $\cM_{d_l \times d_l}(\RR)$ or  $\mathcal{R}(\cM_{d_l \times d_l}(\CC))$.
    {Using \Cref{lem:directProduct}, we can solve the $\Vert \cdot \Vert_{\tilde{X}\tilde{X}^\top}$-distance} problem from $\tilde U$ on each block separately. 
    Since the variety $(\cE_{r,n\times n}^\sigma(\RR))^{\sim Q_\sigma}$ has several irreducible components, we can find its point closest to~$\tilde U$ by solving the minimization problem on each component individually. Hence:
    \begin{enumerate}[Step~3.]
    \item {\em On each irreducible component  $(\cE_{r,n\times n}^{\sigma,\textbf{r}}(\RR))^{\sim Q_\sigma}$ (described in \Cref{thm:irreducible-dim-real}) and on each diagonal matrix block (i.e., on each factor of the direct product~\eqref{eq:irrcompequi}), resp., solve
\begin{align}\label{eq:subproblems}
        \argmin_{B_l \,\in\, \mathcal{M}_{r_{l,m},d_l \times d_l}(\RR)} \Vert B_l - \tilde{U}_l \Vert^2_{{\tilde{X}_l\tilde{X}_l^\top}}
        \quad \ \text{and} \  \quad
        \argmin_{B_l \,\in\, \mathcal{R}(\mathcal{M}_{r_{l,m},d_l \times d_l}(\CC))} \Vert B_l - \tilde{U}_l \Vert^2_{{\tilde{X}_l\tilde{X}_l^\top}}, 
     \end{align}
     respectively,
     using Eckart--Young (see \Cref{lem:EYrealization}), 
     where $\tilde{U}_l$ denote the blocks of $\tilde U$, {and $\tilde{X}_l$ consists of the corresponding rows of $\tilde{X}$.}}
     \end{enumerate}

\noindent Writing $B_l^{\textbf{r}}$ for the solutions of these subproblems, 
     we consider the block diagonal matrices $\bigoplus_{l} B_l^{\textbf{r}}$, one for each irreducible component indexed by $\textbf{r}$.
     Out of these finitely many matrices, the one that is $\Vert \cdot \Vert_{\tilde{X}\tilde{X}^\top}$-closest to $\tilde U$ is the matrix $B$ from Step~1.
     Hence, $Q_\sigma B Q_\sigma^\top $ is a  $\sigma$-equivariant matrix of rank at most $r$ that minimizes the squared-error loss with data matrices~$X,Y$.

    \medskip
     The only ingredient in this algorithm that is missing an explanation, is how to minimize the squared-error loss in \eqref{eq:subproblems} on spaces of realization matrices. The following lemma shows that we can reduce this problem to spaces of matrices without any special structure {imposed}.
\begin{lemma}
\label{lem:SELonRealization}
    Let $A \in \cR(\cM_{m \times n}(\CC))$ and let $T \in \RR^{n \times n}$ be a positive definite matrix. Then
    \begin{align}
        \Vert A \Vert_T^2 \ =\, \Vert A_{\mathrm{odd}} \Vert^2_{T+PTP^\top} \, ,
    \end{align}
    where $A_{\mathrm{odd}} \in \cM_{m \times 2n}(\RR)$ consists of all odd rows of $A$ and $P \coloneqq \left( \begin{smallmatrix}
        0 & 1 \\ -1 & 0
    \end{smallmatrix} \right) \oplus \cdots \oplus \left( \begin{smallmatrix}
        0 & 1 \\ -1 & 0
    \end{smallmatrix} \right) \in \RR^{2n \times 2n}$.
\end{lemma}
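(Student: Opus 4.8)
The plan is to compute $\Vert A\Vert_T^2 = \operatorname{tr}(A T A^\top)$ by splitting the trace into contributions from the odd rows and the even rows of $A$, and then to exploit the block structure $\cR(z) = \left(\begin{smallmatrix} a & -b \\ b & a\end{smallmatrix}\right)$ to relate the even rows to the odd ones. First I would write $A \in \cR(\cM_{m\times n}(\CC))$ as an $m\times n$ array of $2\times 2$ blocks $\cR(z_{ij})$. The $(2k-1)$-st (odd) row of $A$ is then $\big(\Re(z_{k1}),-\Im(z_{k1}),\ldots,\Re(z_{kn}),-\Im(z_{kn})\big)$, while the $2k$-th (even) row is $\big(\Im(z_{k1}),\Re(z_{k1}),\ldots,\Im(z_{kn}),\Re(z_{kn})\big)$. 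The key elementary observation is that the even row is obtained from the odd row by the linear map $P^\top$ (equivalently, right-multiplication of the row vector by $P$), since $P$ is the block-diagonal matrix of $2\times 2$ rotations-by-$90^\circ$: applying $\left(\begin{smallmatrix} 0 & 1\\ -1 & 0\end{smallmatrix}\right)$ on the right to $(\Re,-\Im)$ gives $(\Im,\Re)$. Let me denote by $a_k$ the $k$-th odd row (a row vector in $\RR^{2n}$); then the $k$-th even row is $a_k P$.

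Next I would carry out the trace computation. Writing $\operatorname{tr}(ATA^\top) = \sum_{\text{all rows }\rho \text{ of }A} \rho\, T\, \rho^\top$, and grouping the rows in consecutive pairs (odd, even), this becomes
\begin{align*}
	\Vert A\Vert_T^2 \;=\; \sum_{k=1}^m \Big( a_k\, T\, a_k^\top \;+\; (a_k P)\, T\, (a_k P)^\top \Big) \;=\; \sum_{k=1}^m a_k\, \big(T + P T P^\top\big)\, a_k^\top \;=\; \operatorname{tr}\!\big(A_{\mathrm{odd}}\,(T+PTP^\top)\,A_{\mathrm{odd}}^\top\big),
\end{align*}
which is exactly $\Vert A_{\mathrm{odd}}\Vert^2_{T+PTP^\top}$. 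The only remaining point to check is that $T + PTP^\top$ is positive definite so that the right-hand norm is well-defined: this is immediate since $T$ is positive definite and $PTP^\top$ is positive semidefinite (in fact positive definite, as $P$ is invertible), so their sum is positive definite.

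The main obstacle — really the only place where care is needed — is bookkeeping: verifying cleanly that the even row equals $a_k P$ under the stated convention for $P$ and the chosen ordering of real/imaginary parts within each $2\times 2$ block, and making sure the grouping of rows into odd/even pairs is consistent with the block decomposition of $A$. Once the identity "even row $=$ odd row times $P$" is nailed down, the rest is the one-line trace manipulation above. I would therefore devote most of the written proof to stating the row convention explicitly and checking that single $2\times 2$ identity, then invoke linearity of the trace.
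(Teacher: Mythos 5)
Your proposal is correct and follows essentially the same route as the paper: the key identity $A_{\mathrm{even}} = A_{\mathrm{odd}}\cdot P$ (which you verify blockwise), followed by splitting $\Vert A\Vert_T^2$ over odd and even rows and combining, which is exactly the paper's computation written as a row-by-row trace instead of via $\Vert A T^{1/2}\Vert_F^2 = \Vert A_{\mathrm{odd}}T^{1/2}\Vert_F^2 + \Vert A_{\mathrm{even}}T^{1/2}\Vert_F^2$. Your added remark that $T+PTP^\top$ is positive definite is a harmless bonus not needed for the identity itself.
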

\begin{proof}
    Let $A_{\mathrm{even}}$ be the $m \times 2n$ matrix that consists of all even rows of $A$.
    Since $A$ is a realization matrix, we have that $A_{\mathrm{even}} = A_{\mathrm{odd}} \cdot P$.
    Therefore, 
\begin{align}
    \begin{split}
            \Vert A \Vert_T^2 \,&=\, \Vert A T^{1/2} \Vert^2_F
   \,=\, \Vert A_{\mathrm{odd}} T^{1/2} \Vert^2_F \,+\, \Vert A_{\mathrm{even}} T^{1/2} \Vert^2_F
      \,=\, \Vert A_{\mathrm{odd}} T^{1/2} \Vert^2_F \,+\, \Vert A_{\mathrm{odd}} P T^{1/2} \Vert^2_F \\
      &=\, \mathrm{tr}\left(A_{\mathrm{odd}}T A_{\mathrm{odd}}^\top\right) \,+\, \mathrm{tr}\left(A_{\mathrm{odd}}PTP^\top A_{\mathrm{odd}}^\top\right)
      \,=\, \mathrm{tr}\left(A_{\mathrm{odd}} \left(T + PTP^\top\right) A_{\mathrm{odd}}^\top\right) \\
      &=\, \Vert A_{\mathrm{odd}} \Vert^2_{T+PTP^\top},
    \end{split}
\end{align}
concluding the proof.
\end{proof}
     
\begin{proposition}
\label{lem:EYrealization}
    Both minimization problems in \eqref{eq:subproblems} can be solved with Eckart--Young and have $\squerdeg = \deged = \binom{d_l}{r_{l,m}}$.
\end{proposition}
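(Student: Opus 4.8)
The plan is to treat the two minimization problems separately. For the first one, over $\cM_{r_{l,m},d_l\times d_l}(\RR)$, I would simply invoke the mechanism of \Cref{ex:FisEverything}: since left-multiplication by the positive definite matrix $(\tilde X_l\tilde X_l^\top)^{1/2}$ is an automorphism of the determinantal variety $\cM_{r_{l,m},d_l\times d_l}(\RR)$, the minimizer of $\Vert B_l-\tilde U_l\Vert^2_{\tilde X_l\tilde X_l^\top}$ is recovered from the minimizer of $\Vert B_l'-\tilde U_l(\tilde X_l\tilde X_l^\top)^{1/2}\Vert_F^2$ by right-multiplication with $(\tilde X_l\tilde X_l^\top)^{-1/2}$, and the latter is solved by Eckart--Young. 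As a fixed weight does not change the number of complex critical points, \eqref{eq:EDdegMrmn} gives $\squerdeg=\deged=\binom{\min(d_l,d_l)}{r_{l,m}}=\binom{d_l}{r_{l,m}}$.

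For the second problem, over $\cR(\cM_{r_{l,m},d_l\times d_l}(\CC))$, the plan is to descend to a complex low-rank approximation problem of size $d_l\times d_l$. Writing $T\coloneqq \tilde X_l\tilde X_l^\top$ and $P\coloneqq\left(\begin{smallmatrix}0&1\\-1&0\end{smallmatrix}\right)^{\oplus d_l}$, \Cref{lem:SELonRealization} turns the objective into $\Vert(B_l-\tilde U_l)_{\mathrm{odd}}\Vert^2_S$ with $S\coloneqq T+PTP^\top$. The key observation is that $S$ is again a realization matrix: from $P^\top=-P$ and $P^2=-\Id$ one gets $PSP^\top=S$, hence $S$ commutes with $P$, so \Cref{lem:commute-real-2block} (with $z=i$) yields $S=\cR(S_\CC)$ for a complex $d_l\times d_l$ matrix $S_\CC$, which is Hermitian positive definite because $S$ is symmetric positive definite and $\cR(W)^\top=\cR(W^*)$. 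Now $B_l=\cR(Z)$ with $Z\in\cM_{r_{l,m},d_l\times d_l}(\CC)$, and $\tilde U_l=\cR(U_\CC)$ is itself a realization matrix by the construction in Step~2. Using that $\cR$ is multiplicative and intertwines transposition with conjugate transposition, and that $\mathrm{tr}(E\,\cR(N)\,E^\top)=\Re\,\mathrm{tr}(N)$ for every complex $d_l\times d_l$ matrix $N$, where $E\in\RR^{d_l\times 2d_l}$ selects the odd rows, a short computation gives
\begin{align*}
\Vert(B_l-\tilde U_l)_{\mathrm{odd}}\Vert^2_S \ =\ \mathrm{tr}\big((Z-U_\CC)\,S_\CC\,(Z-U_\CC)^*\big) \ =\ \big\Vert(Z-U_\CC)\,S_\CC^{1/2}\big\Vert_F^2\,.
\end{align*}
Thus the second problem is the $S_\CC$-weighted rank-$r_{l,m}$ approximation of $U_\CC$ over~$\CC$.

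This complex problem is again solved exactly as in \Cref{ex:FisEverything}: right-multiplication by $S_\CC^{1/2}$ is an automorphism of $\cM_{r_{l,m},d_l\times d_l}(\CC)$, so one applies complex Eckart--Young to $U_\CC S_\CC^{1/2}$ (truncating its singular value decomposition to rank $r_{l,m}$) and right-multiplies the output by $S_\CC^{-1/2}$; applying $\cR$ returns a solution $B_l=\cR(Z)$ of the original subproblem. For the degree count, the chain of identifications---\Cref{lem:SELonRealization}, the real-linear isomorphism $\cR$ between $\cM_{r_{l,m},d_l\times d_l}(\CC)$ and $\cR(\cM_{r_{l,m},d_l\times d_l}(\CC))$, and multiplication by the fixed invertible matrices $S_\CC^{\pm1/2}$---are isomorphisms that carry the minimization to plain rank-$r_{l,m}$ approximation on $\cM_{r_{l,m},d_l\times d_l}$, which has $\binom{d_l}{r_{l,m}}$ critical points (one per choice of which $r_{l,m}$ of the $d_l$ singular directions to keep), cf.~\eqref{eq:EDdegMrmn}; since a fixed weight never alters the complex critical-point count, $\squerdeg=\deged=\binom{d_l}{r_{l,m}}$. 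I expect the main obstacle to be exactly this last bookkeeping: one must check that passing to the Zariski closure over $\CC$ of $\cR(\cM_{r_{l,m},d_l\times d_l}(\CC))$ inside $\CC^{2d_l\times 2d_l}$---with the complexified bilinear Frobenius form, which is what enters the definitions of $\squerdeg$ and $\deged$---is compatible with the above isomorphisms, so that the count on $\cM_{r_{l,m},d_l\times d_l}$ genuinely transfers. The remainder is the routine linear algebra of the realization map.
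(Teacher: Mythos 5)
Your handling of the first problem in \eqref{eq:subproblems} coincides with the paper's (both reduce to \Cref{ex:FisEverything} and \eqref{eq:EDdegMrmn}). For the realization blocks you take a genuinely different route. The paper deletes the even rows via \Cref{lem:SELonRealization} and then treats the odd-row halves $B_l^{\mathrm{odd}}$ as the real determinantal variety $\cM_{r_{l,m},d_l\times 2d_l}(\RR)$, so that weighted real Eckart--Young and \eqref{eq:EDdegMrmn} apply verbatim. You instead keep the complex structure: you note that $S=T+PTP^\top$ commutes with $P$, hence $S=\cR(S_\CC)$ with $S_\CC$ Hermitian positive definite by \Cref{lem:commute-real-2block}, and rewrite the subproblem as $\min_{\rank_\CC Z\le r_{l,m}}\Vert (Z-U_\CC)S_\CC^{1/2}\Vert_F^2$, solved by complex Eckart--Young after right-multiplication by $S_\CC^{1/2}$. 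Your identities ($\cR(W)^\top=\cR(W^*)$ and $\mathrm{tr}$ of the odd-odd entries of $\cR(N)$ being $\Re\,\mathrm{tr}(N)$) are correct, and this version of the reduction is in one respect safer than the paper's: the odd-row halves of matrices in $\cR(\cM_{r_{l,m},d_l\times d_l}(\CC))$ form a set strictly larger than $\cM_{r_{l,m},d_l\times 2d_l}(\RR)$ (for $Z=\left(\begin{smallmatrix}1&0\\ i&0\end{smallmatrix}\right)$ the complex rank is $1$, yet the odd-row half of $\cR(Z)$ has real rank $2$), so keeping the rank constraint in its complex form, as you do, is what actually guarantees that the returned SVD truncation is optimal over the whole block.

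The one genuine gap is the degree claim, which you flag yourself. Both $\squerdeg$ and $\deged$ of $\cR(\cM_{r_{l,m},d_l\times d_l}(\CC))$ count complex critical points on the Zariski closure of this real variety inside $\CC^{2d_l\times 2d_l}$; after the complex-linear change of coordinates sending the complexified real and imaginary parts $A,B$ to $Z_1=A+iB$ and $Z_2=A-iB$, that closure is the product $\cM_{r_{l,m},d_l\times d_l}(\CC)\times\cM_{r_{l,m},d_l\times d_l}(\CC)$ and the complexified Frobenius form becomes the coupling pairing $\sum_{i,j}(Z_1-U_1)_{ij}(Z_2-U_2)_{ij}$ for data $(U_1,U_2)$. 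Your chain of identifications is only $\RR$-linear at the step where $\cR$ enters, and Hermitian Eckart--Young only exhibits $\binom{d_l}{r_{l,m}}$ \emph{real} critical points; the genericity-of-the-weight remark does not address whether the coupled complex problem has additional non-real critical points, which is exactly what the degree counts. Closing this requires its own argument, e.g.\ in the spirit of \cite[Example~2.3]{EDdeg}: at a smooth critical pair $(Z_1,Z_2)$ the stationarity conditions force the row and column data of both factors onto eigenvectors of $U_1U_2^\top$ and $U_2U_1^\top$ for a common choice of $r_{l,m}$ of the $d_l$ eigenvalues, yielding exactly $\binom{d_l}{r_{l,m}}$ critical points. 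The paper instead reads the count off from \eqref{eq:EDdegMrmn} for $\cM_{r_{l,m},d_l\times 2d_l}(\RR)$, at the price of the delicate set identification noted above; so some version of the eigenvalue-selection count is what your write-up still owes.
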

\begin{proof}
    We use \Cref{ex:FisEverything} for blocks of the form $\mathcal{M}_{r_{l,m},d_l \times d_l}(\RR)$. For blocks of the form $\mathcal{R}(\mathcal{M}_{r_{l,m},d_l \times d_l}(\CC))$, 
    we consider the orthogonal projection $\tilde{U}^\circ_l$ from $\tilde{U}_l$ {(with respect to the inner product $\langle \cdot,\cdot \rangle_{\tilde{X}\tilde{X}^\top}$)} onto the linear space $\mathcal{R}(\mathcal{M}_{d_l \times d_l}(\CC))$.
    The same point on the variety $\mathcal{R}(\mathcal{M}_{r_{l,m},d_l \times d_l}(\CC))$ minimizes the {$\Vert \cdot \Vert_{\tilde{X}\tilde{X}^\top}$-distance} to  $\tilde{U}_l$ and $\tilde{U}^\circ_l$.
    Now, we delete {all the even rows} in the orthogonal projection $\tilde{U}^\circ_l$ and denote the result by $\tilde{U}_l^{\mathrm{odd}} \in \cM_{d_l \times 2d_l}(\RR)$.
    Deleting the same rows in $B_l \in \mathcal{R}(\mathcal{M}_{r_{l,m},d_l \times d_l}(\CC))$ gives arbitrary $d_l \times 2d_l$ matrices $B_l^{\mathrm{odd}}$ of rank at most $r_{l,m}$.
 {By \Cref{lem:SELonRealization}, 
    we have $$\Vert B_l  - \tilde{U}^\circ_l \Vert^2_{\tilde{X}\tilde{X}^\top} \,=\,  \Vert B_l^{\mathrm{odd}} - \tilde{U}_l^{\mathrm{odd}} \Vert^2_{S}\, ,$$ where $S \coloneqq \tilde{X}\tilde{X}^\top + P \tilde{X}\tilde{X}^\top P^\top$.}
    Hence, the desired minimizer is the matrix $B_l \in \mathcal{R}(\mathcal{M}_{d_l \times d_l}(\CC))$ such that $B_l^{\mathrm{odd}}$ is the matrix in $ \cM_{r_{l,m}, d_l \times 2d_l}(\RR)$ minimizing $\Vert B_l^{\mathrm{odd}} - \tilde{U}_l^{\mathrm{odd}} \Vert^2_{S}$.
    The latter can be solved with Eckart--Young~\eqref{eq:EYminimum} as in \Cref{ex:FisEverything}.
    We also see from \eqref{eq:EDdegMrmn} that both the squared-error degree and the ED degree of this problem are $\binom{d_l}{r_{l,m}}$.
\end{proof}

\pagebreak
\subsection{Parameterizing equivariance and network design} \label{sec:parameterizingEquiv}
One observes even in simple examples that the factors of an equivariant linear map themselves do \emph{not} need to be equivariant. 

\begin{example}
Let $\sigma=(1 \, 2)\in \cS_3$ and $M$ be the invertible matrix
\begin{align*}
M \,=\, 
\begin{pmatrix}
1 & 2 & 0\\
2 & 1 & 0\\
3 & 3 & 4
\end{pmatrix} .
\end{align*}
Indeed, $MP_\sigma=P_\sigma M$, hence $M$ is equivariant under $\sigma$. Let $M=QR$ denote the QR decomposition of~$M$; uniqueness of the decomposition is obtained by imposing that~$R$ has positive diagonal entries. One can check that neither~$Q$ nor~$R$ is equivariant under~$\sigma$.
\end{example}

\begin{remark} 
The question whether the individual layers of an equivariant autoencoder are equivariant, is not well-posed in its na\"{i}ve form. A priori, the group~$G$ acts only on the in- and output space of $f_\theta\colon \, \RR^n\to \RR^r \to \RR^n$. To address questions about equivariance of the two individual layers, one would first need to define an action of $G$ on~$\RR^r$.
\end{remark}

In \Cref{sec:networkDesign}, we described how linear autoencoders are well-suited to parameterize permutation-invariant maps. 
For equivariance, auto-encoders can only parameterize the individual irreducible components of $\cE_{r,n\times n}^\sigma$. Also in this case, the decoder and encoder inherit a weight-sharing property from the cycle decomposition of~$\sigma$. 
To develop an intuition, we start with an example for rotation-equivariant maps of rank at most~$1$.

\begin{example}[Parameterization of $\cE^\sigma_{1,9\times 9}$]\label{ex:paramE199}
Let $\sigma\in \cS_9$ again denote the rotation of a $3\times 3$ picture by $90$ degrees. Denote by $P$ the matrix obtained by applying Step $1$ of \Cref{proc:basechange} to $P_\sigma$, i.e., $P$ is the block diagonal matrix $\operatorname{diag}(C_4,C_4,C_1)$. Its eigenvalues are $\{1,1,1,-1,-1,i,i,-i,-i\}$, here denoted as multiset together with their multiplicities. 
We chop $M$ into blocks of sizes determined by the blocks of~$P$, i.e. into blocks of size pattern
\begin{align*}
\begin{pmatrix}
    \begin{array}{c|c|c}
\begin{matrix}
4 \times 4
\end{matrix} & \begin{matrix}
4 \times 4
\end{matrix} & \begin{matrix}
4 \times 1 
\end{matrix}\\ \hline
\begin{matrix}
4 \times 4
\end{matrix} & \begin{matrix}
4 \times 4
\end{matrix} & \begin{matrix}
4 \times 1 
\end{matrix}\\ \hline
 \begin{matrix}
1 \times 4
\end{matrix} & \begin{matrix}
1 \times 4
\end{matrix} & \begin{matrix}
1 \times 1 
\end{matrix}
\end{array}
\end{pmatrix}.
\end{align*}
Each of the blocks $M^{(i,j)}$ is circulant, as spelled out in \eqref{eq:matrotequi}. We are now going to describe the set of matrices $M$ which commutate with $P$ and are of rank at most~$1$.
For that, we will need circulant matrices. For a vector $v=(v_1,\ldots,v_n)^\top\in \CC^n$, we denote the associated $n\times n$ circulant matrix by
\begin{align}\label{eq:Toep}
    C_n(v_1,\ldots,v_n) \, \coloneqq \, \begin{pmatrix}
        v_1 & v_2 & \cdots &v_{n-1} & v_n \\
        v_n & v_1 & v_2 & \cdots & v_{n-1} \\
         & & \ddots  & \ddots &\\
         v_3 &\cdots & v_n & v_1 & v_2\\
        v_2 & v_3 &\cdots & v_n & v_1
    \end{pmatrix} \, \in \, \cM_{n\times n}(\CC) \, .
\end{align}
In this notation, the circulant matrices $C_n$~in~\eqref{eq:circulant} are $C_n=C_n(0,\ldots,0,1).$
Imposing $r=1$ gives rise to three irreducible components of $\cE^\sigma_{1,9 \times 9}(\CC)$ of dimension $3$, and one of dimension~$5$, according to \Cref{thm:irreducible-dim}. By $\mathbbm{1}$, we will denote the all-one matrix; its size is determined implicitly by the rest of the matrix.
An explicit analysis reveals that the general matrices in the four components of  $\cE_{1,9\times 9}^\sigma(\CC)$ take the following forms for 
scalars $\alpha_1, \alpha_2, \alpha_3, \beta_1, \beta_2, \beta_3\in \CC$: 

{\small 
\begin{align}\begin{split}
\begin{pmatrix}
    \begin{array}{c|c|c}
\alpha_1 \beta_1 C_4(1,1,1,1)
 &  \alpha_1\beta_2 C_4(1,1,1,1) &
\begin{matrix}
 \alpha_1 \beta_3 \cdot \mathbbm{1}
\end{matrix} \\ \hline
 \alpha_2 \beta_1 C_4(1,1,1,1)
 &  \alpha_2 \beta_2 C_4(1,1,1,1) &
\begin{matrix}
 \alpha_2\beta_3 \cdot \mathbbm{1}
\end{matrix} \\ \hline
\alpha_3 \beta_1 \cdot  \mathbbm{1}
& 
 \alpha_3\beta_2  \cdot \mathbbm{1}
&
 \alpha_3\beta_3
\end{array} 
\end{pmatrix}, \\ \begin{pmatrix}
    \begin{array}{c|c|c}
\alpha_1 \beta_1 C_4(1,-1,1,-1)
 &  \alpha_1 \beta_2 C_4(1,-1,1,-1) &
\begin{matrix}
0
\end{matrix} \\ \hline
 \alpha_2 \beta_1  C_4(1,-1,1,-1)
 &  \alpha_2  \beta_2 C_4(1,-1,1,-1) &
\begin{matrix}
0
\end{matrix} \\ \hline
0 & 
0 &
0
\end{array}
\end{pmatrix},\\ 
\begin{pmatrix}
    \begin{array}{c|c|c}
\alpha_1 \beta_1 C_4(1,-i,-1,i)
 &  \alpha_1 \beta_2 C_4(1,-i,-1,i) &
\begin{matrix}
0
\end{matrix} \\ \hline
 \alpha_2 \beta_1  C_4(1,-i,-1,i)
 &  \alpha_2 \beta_2 C_4(1,-i,-1,i) &
\begin{matrix}
0
\end{matrix} \\ \hline
0 & 
0 &
0
\end{array}
\end{pmatrix}, \\ 
\begin{pmatrix}
    \begin{array}{c|c|c}
\alpha_1  \beta_1 C_4(1,i,-1,-i)
 &  \alpha_1 \beta_2 C_4(1,i,-1,-i) &
\begin{matrix}
0
\end{matrix} \\ \hline
 \alpha_2  \beta_1 C_4(1,i,-1,-i)
 &  \alpha_2 \beta_2 C_4(1,i,-1,-i) &
\begin{matrix}
0
\end{matrix} \\ \hline
0 & 
0 &
0
\end{array}
\end{pmatrix}.
\end{split}
\end{align}
}

\noindent The first component is isomorphic to the affine cone over the Segre variety $\PP_\CC^2 \times \PP_\CC^2$, the remaining three are isomorphic to the affine cone over the Segre variety $\PP_\CC^1 \times \PP_\CC^1$ each. Only the first two of them appear in the real locus~$\cE_{1,9\times 9}^\sigma(\RR)$; cf. \Cref{thm:irreducible-dim-real}. The matrices in those two components can be factorized as follows:
\begin{align}
    \begin{split}
(\alpha_1,\alpha_1,\alpha_1,\alpha_1,\alpha_2,\alpha_2,\alpha_2,\alpha_2,\alpha_3)^\top \cdot (\beta_1,\beta_1,\beta_1,\beta_1,\beta_2,\beta_2,\beta_2,\beta_2,\beta_3) \, , \hspace*{11.2mm} \\
        (\alpha_1,-\alpha_1,\alpha_1,-\alpha_1,\alpha_2,-\alpha_2,\alpha_2,-\alpha_2,0)^\top \cdot (\beta_1,-\beta_1,\beta_1,-\beta_1,\beta_2,-\beta_2,\beta_2,-\beta_2,0) \, .
    \end{split}
\end{align}
These factorizations are linear autoencoders $\RR^9 \to \RR^1 \to \RR^9$ with the same weight-sharing on the en- and decoder.
\end{example}

In \Cref{sec:real-irreduce}, we characterized the real irreducible components in general. We here present a parameterization for each of the irreducible components described in \Cref{thm:irreducible-dim-real}. 
For that, we consider the following parameterization of $\cM_{r,n\times n}(\CC)$:
\begin{align}
\mu_{r,n}^{\CC}\colon \ \mathcal{M}_{n \times r}(\CC) \times \mathcal{M}_{r \times n}(\CC)\longrightarrow \mathcal{M}_{r,n \times n}(\CC),\quad (A,B)\, \mapsto \,A\cdot B \, .
\end{align}
Hence, we will write every element of $\cM_{r,n\times n}(\CC)$ as a product of two matrices~$A$ and~$B$; they will play the role of a decoder and encoder, respectively.
Since $\cR$ is an  isomorphism of rings, the complex matrix-multiplication map extends to realization spaces:
\begin{align}
\begin{split}\label{eq:para-realization}
\mathcal{R}\mu_{r,n}^\CC\colon \  \mathcal{R}(\cM_{n\times r}(\CC)) \times \mathcal{R}(\cM_{r\times n}(\CC)) & \,\to\, \mathcal{R}(\cM_{r,n \times n}(\CC)), \\ (\cR(A),\cR(B)) & \,\mapsto\,  \cR(A) \cdot \cR(B) \,=  \ \cR(A\cdot B) \, . 
\end{split}
\end{align}

\begin{proposition}
\label{prop:real-param-equi}
Let $\mu_{r_{1,1},d_1}$ and $\mu_{r_{2,1},d_2}$ denote the real parameterization maps from \Cref{prop:fiberAB} for $\mathcal{M}_{r_{1,1},d_1\times d_1}(\RR)$ and $\mathcal{M}_{r_{2,1},d_2\times d_2}(\RR)$, respectively. Let $\mu_{r_{l,m},d_l}^{\CC}$ be the complex parameterization of $\mathcal{M}_{r_{l,m},d_l \times d_l}(\CC)$, for $l\ge  3$. Then the real irreducible component of $(\cE_{r,n\times n}^{\sigma})^{\sim Q_{\sigma}}$ corresponding to $\mathbf{r} = (r_{l,m})$ as in \Cref{thm:irreducible-dim-real} is parameterized by the map
\begin{align}\label{eq:paramrealequi}
    \mu_{\mathbf{r},n} \, \coloneqq \, \mu_{r_{1,1},d_1} \times \mu_{r_{2,1},d_2} \times \prod \limits_{l\geq 3}  \prod \limits_{\substack{m\,\in \,(\ZZ/l\ZZ)^{\times}, \\ \frac{1}{2}< \frac{m}{l} < 1}} \mathcal{R}\mu_{r_{l,m},d_l}^{\CC}\, ,
    \end{align}
where $Q_\sigma$ is the realization base change of~$P_\sigma$. 
\end{proposition}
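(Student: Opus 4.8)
The plan is to deduce everything from Theorem~\ref{thm:irreducible-dim-real} together with the elementary fact, recalled just before Lemma~\ref{prop:fiberAB}, that a matrix over $\KK$ has rank at most~$\rho$ precisely when it factors as a product through~$\KK^\rho$. By Theorem~\ref{thm:irreducible-dim-real}, after the realization base change~$Q_\sigma$, the component $\cE_{r,n\times n}^{\sigma,\mathbf r}(\RR)$ is the direct product
\[
\mathcal{M}_{r_{1,1}, d_1 \times d_1}(\RR) \times \mathcal{M}_{r_{2,1}, d_2 \times d_2}(\RR) \times \prod_{l \geq 3} \prod_{\substack{m \in (\ZZ /l \ZZ)^\times,\ \frac12 < \frac ml < 1}} \mathcal{R}(\mathcal{M}_{r_{l,m}, d_l \times d_l}(\CC)).
\]
Since $\mu_{\mathbf r,n}$ in \eqref{eq:paramrealequi} is, by construction, the direct product of the maps $\mu_{r_{1,1},d_1}$, $\mu_{r_{2,1},d_2}$, and $\mathcal{R}\mu_{r_{l,m},d_l}^{\CC}$, and a direct product of surjections is a surjection onto the product of the targets, it suffices to prove that each factor map surjects onto the corresponding factor of the above product.

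For the two real blocks this is immediate: by the factorization fact over~$\RR$, the multiplication map $\mu_{r_{1,1},d_1}\colon \mathcal{M}_{d_1\times r_{1,1}}(\RR)\times\mathcal{M}_{r_{1,1}\times d_1}(\RR)\to\mathcal{M}_{d_1\times d_1}(\RR)$ has image exactly $\mathcal{M}_{r_{1,1}, d_1 \times d_1}(\RR)$, and the same holds for $\mu_{r_{2,1},d_2}$. For a block of the form $\mathcal{R}(\mathcal{M}_{r_{l,m}, d_l \times d_l}(\CC))$ with $l \geq 3$, recall that $\cR$ is an isomorphism of rings applied entry-wise, so for complex matrices $A\in\mathcal{M}_{d_l\times r_{l,m}}(\CC)$ and $B\in\mathcal{M}_{r_{l,m}\times d_l}(\CC)$ one has $\cR(A)\,\cR(B)=\cR(AB)$, exactly as recorded in \eqref{eq:para-realization}. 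Hence the image of $\mathcal{R}\mu_{r_{l,m},d_l}^{\CC}$ is $\{\cR(AB)\}_{A,B}=\cR\big(\{AB\}_{A,B}\big)=\cR(\mathcal{M}_{r_{l,m}, d_l \times d_l}(\CC))$, using the factorization fact over~$\CC$; the rank constraint is respected because $\rank_\RR\cR(Z)=2\rank_\CC Z$. Assembling all factors shows that $\mu_{\mathbf r,n}$ surjects onto $(\cE_{r,n\times n}^{\sigma,\mathbf r})^{\sim Q_\sigma}$, i.e.\ it parameterizes this component.

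The only genuine subtlety—more bookkeeping than obstacle—is aligning the indexing: one must check that the inner multiplication dimension of each factor of \eqref{eq:paramrealequi} is exactly the corresponding~$r_{l,m}$ (so decoder/encoder shapes are $d_l\times r_{l,m}$ and $r_{l,m}\times d_l$, or their realizations), and that the two distinguished blocks $(l,m)=(1,1),(2,1)$ together with the index set $\{(l,m): \frac12<\frac ml<1,\ \gcd(l,m)=1,\ l\ge 3\}$ match exactly the list of direct-product factors appearing in Theorem~\ref{thm:irreducible-dim-real}. Once this is in place the argument above is complete. If one additionally wants the autoencoder reading, one observes that within each factor the map is literally $(\text{decoder})\cdot(\text{encoder})$, and, through the block/realization structure, both factors inherit the sparsity and weight-sharing pattern dictated by the cycle type of~$\sigma$; the explicit rank-one computation in Example~\ref{ex:paramE199} illustrates this pattern.
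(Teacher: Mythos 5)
Your proposal is correct and follows essentially the same route as the paper: identify the component after the base change $Q_\sigma$ with the direct product from Theorem~\ref{thm:irreducible-dim-real}, parameterize the real blocks by the usual two-factor multiplication maps, and parameterize the realization blocks via $\mathcal{R}\mu^{\CC}_{r_{l,m},d_l}$ using that $\cR$ is a ring isomorphism as in \eqref{eq:para-realization}. The extra details you supply (surjectivity of each factor map and the rank bookkeeping $\rank_\RR\cR(Z)=2\rank_\CC Z$) are consistent with, and slightly more explicit than, the paper's argument.
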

\begin{proof}
   After the base change $Q_\sigma$, every matrix in the real irreducible component corresponding to the integer solution $\mathbf{r}$ has the block diagonal structure~\eqref{eq:irrcompequi}. The blocks associated with $l\ge 3$ and $\frac{1}{2}< \frac{m}{l} < 1$ are the realization of $\mathcal{M}_{r_{l,m},\d_l\times d_l}({\CC})$, so they admit a parameterization via  $\mathcal{R}\mu_{r_{l,m},d_l}^{\CC}$.  Therefore, the real component $(\cE_{r,n\times n}^{\sigma,\mathbf{r}})^{\sim Q_{\sigma}}$ is parameterized by~$\mu_{\mathbf{r},n}$. 
\end{proof}

As a direct consequence of the parameterization in~\eqref{eq:paramrealequi}, one deduces a sparsity of the encoder and decoder into which the matrices of the respective irreducible component decompose into. We also obtain a weight-sharing property arising from the realization matrix blocks: the entries on the diagonal of each such matrix are equal, and the ones on the anti-diagonal differ only by a sign.
We now demonstrate these findings in our running example of rotation-invariance.
\begin{example}\label{ex:weightequi}
    We revisit \Cref{ex:real-comp-3by3}. The real irreducible component $(\cE_{3,9\times 9}^{\sigma,\mathbf{r}})^{\sim Q_{\sigma}}$ with $\mathbf{r}=(1,0,1)$ is
    \begin{align}
    \label{eq:(1,0,1)-para}
        \cM_{1,3\times 3}(\RR) \,\times \,  \cM_{0,2\times 2}(\RR) \,\times\, \mathcal{R}\left(\cM_{1,2\times2}(\CC)\right) \, .
    \end{align}
    By \Cref{prop:real-param-equi}, this component is parameterized by $\mu_{\mathbf{r},3}= \mu_{1,3}\times \mu_{0,2} \times \cR \mu_{1,2}^{\CC}$. Thus, every matrix in \eqref{eq:(1,0,1)-para} can be obtained as product of a $9\times 3$ and a $3 \times 9$ matrix of~the~form
\begin{align}\label{eq:matrixblocksequi}
   \begin{pmatrix} 
            *&*&*&0&0&0&0&0&0\\
            0&0&0&0&0&&&&\\
            0&0&0&0&0&\multicolumn{4}{c}{\smash{\raisebox{.5\normalbaselineskip}{$\cR{\begin{pmatrix}
            \star&
            \star
        \end{pmatrix}}$}}}
    \end{pmatrix}^\top \cdot 
    \begin{pmatrix} 
            *&*&*&0&0&0&0&0&0\\
            0&0&0&0&0&&&&\\
            0&0&0&0&0&\multicolumn{4}{c}{\smash{\raisebox{.5\normalbaselineskip}{$\cR{\begin{pmatrix}
            \star&
            \star
        \end{pmatrix}}$}}}
    \end{pmatrix},
    \end{align} 
    where $*$ and $\star$ represent arbitrary real and complex entries, respectively. The induced weight-sharing property on the encoder and decoder is visualized in \Cref{fig:weightsharequi}. 
\end{example}
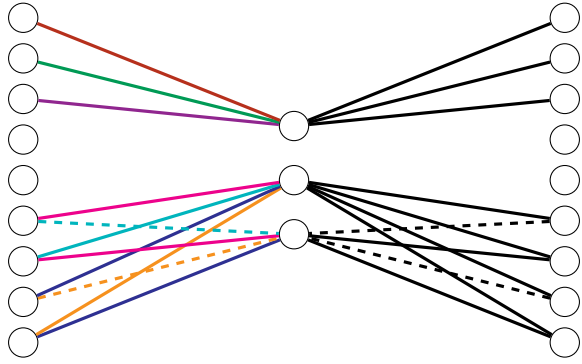
\begin{figure}[h]
\begin{tikzpicture}[scale=0.18]
 \node[shape=circle,draw=black] (A9) at (-5,12) {};
    \node[shape=circle,draw=black] (A8) at (-5,9) {};
	\node[shape=circle,draw=black] (A7) at (-5,6) {};
	\node[shape=circle,draw=black] (A6) at (-5,3) {};
	\node[shape=circle,draw=black] (A5) at (-5,0) {};
	\node[shape=circle,draw=black] (A4) at (-5,-3) {};
    \node[shape=circle,draw=black] (A3) at (-5,-6) {};
    \node[shape=circle,draw=black] (A2) at (-5,-9) {};
    \node[shape=circle,draw=black] (A1) at (-5,-12) {};
    \node[shape=circle,draw=black] (B3) at (15,4) {};
	\node[shape=circle,draw=black] (B2) at (15,0) {};
    \node[shape=circle,draw=black] (B1) at (15,-4) {};
    \node[shape=circle,draw=black] (C9) at (35,12) {};
    \node[shape=circle,draw=black] (C8) at (35,9) {};
    \node[shape=circle,draw=black] (C7) at (35,6) {};
    \node[shape=circle,draw=black] (C6) at (35,3) {};
    \node[shape=circle,draw=black] (C5) at (35,0) {};
    \node[shape=circle,draw=black] (C4) at (35,-3) {};
    \node[shape=circle,draw=black] (C3) at (35,-6) {};
    \node[shape=circle,draw=black] (C2) at (35,-9) {};
    \node[shape=circle,draw=black] (C1) at (35,-12) {};
	\path[very thick, Blue] (A1) edge node[left] {}  (B1); 
    \path[very thick, BurntOrange]  (A1) edge node[left] {} (B2);
    \path[very thick, dashed, BurntOrange]  (A2) edge node[left] {} (B1);
    \path[very thick, Blue] (A2) edge node[left] {}  (B2); 
    \path[very thick, Magenta]  (A3) edge node[left] {} (B1);
    \path[very thick, Aquamarine] (A3) edge node[left] {}  (B2); 
    \path[very thick, dashed, Aquamarine]  (A4) edge node[left] {} (B1);
    \path[very thick, Magenta] (A4) edge node[left] {}  (B2); 
    \path[very thick,  Plum]  (A7) edge node[left] {} (B3);
    \path[very thick, ForestGreen]  (A8) edge node[left] {} (B3);
    \path[very thick, BrickRed]  (A9) edge node[left] {} (B3);
    \path[very thick, black] (C1) edge node[left] {}  (B1); 
    \path[very thick, black]  (C1) edge node[left] {} (B2);
    \path[very thick, black, dashed]  (C2) edge node[left] {} (B1);
    \path[very thick, black] (C2) edge node[left] {}  (B2); 
    \path[very thick, black]  (C3) edge node[left] {} (B1);
    \path[very thick, black] (C3) edge node[left] {}  (B2); 
    \path[very thick, black, dashed]  (C4) edge node[left] {} (B1);
    \path[very thick, black] (C4) edge node[left] {}  (B2); 
    \path[very thick, black]  (C7) edge node[left] {} (B3);
    \path[very thick, black]  (C8) edge node[left] {} (B3);
    \path[very thick, black]  (C9) edge node[left] {} (B3);
\end{tikzpicture}
\caption{Weight-sharing of the encoder and decoder matrices from \Cref{ex:weightequi}. Edges of the same color share the same weight---and differ by sign, in case one of the edges is dashed. To avoid an overload of colors, we here visualized the weight-sharing for the encoder only; the decoder follows the same rules, but would require additional seven color shades. Due to the zero blocks in~\eqref{eq:matrixblocksequi}, the $4$th and $5$th input and output neurons are inactive.}
\label{fig:weightsharequi}
\end{figure}

A parameterization of  $\cE_{r,n\times n}^{\sigma,\mathbf{r}}$ is obtained by simply composing the autoencoders described in this section with the fixed matrix $Q_\sigma$ on the left, and its inverse $Q_\sigma^\top$ on the right. However, the sparsity of the weights of the en- and  decoder is easiest observed via the block diagonal form~\eqref{eq:irrcompequi}. For this reason, we formulated \Cref{prop:real-param-equi} for $(\cE_{r,n\times n}^{\sigma,\mathbf{r}})^{\sim Q_{\sigma}}$.

\subsection{Induced filtration of $\cM_{r,n\times n}$}
Let $\sigma \in \cS_n$. Whenever a matrix is equivariant under $\sigma$, then it is also equivariant under any power of~$\sigma$. Hence, $\cE_{n\times n}^{\sigma^k}\subset \cE_{n\times n}^{\sigma^{l\cdot k}}$ for all $k,l\in \NN$.  Therefore, any $\sigma\in \cS_n$ gives rise to an increasing filtration $\cE_{n\times n}^{\sigma^\bullet}$ of $\cM$.
This filtration is finite: every permutation has a finite order, hence $\cE^{\sigma^l}=\cE_{n\times n}^{\operatorname{id}}=\cM_{n \times n}$ for $l=\ord(\sigma)$. By intersecting with $ \cM_{r,n\times n}$, we obtain analogous statements for $\cE_{r,n\times n}^{\sigma^\bullet}$.

\subsection{Example: Equivariance for non-cyclic groups}\label{sec:exnoncyclic}
We here revisit equivariance for $3\times 3$ pictures. Characterizing equivariance for non-cyclic permutation groups is more complicated than the cyclic case. As a case study, we impose equivariance both under rotation and under reflection, i.e., we consider the group $G=\langle \sigma, \chi \rangle$ generated by the clock-wise rotation by $90$ degrees~$\sigma$ as in \eqref{eq:rot}, and the reflection
\begin{align}\label{eq:reflection}
\chi \colon \ \begin{pmatrix}
 a_{11} & a_{12} & a_{13}\\
a_{21} & a_{22} & a_{23}\\
a_{31} & a_{32} &  a_{33}
\end{pmatrix}  \mapsto  \begin{pmatrix}
 a_{13} & a_{12} & a_{11}\\
a_{23} & a_{22} & a_{21}\\
a_{33} & a_{32} &  a_{31}
\end{pmatrix} .
\end{align}
We will again identify $\RR^{3\times 3}\cong\RR^9$ via 
\begin{align}
\begin{pmatrix}
a_{11} & a_{12} & a_{13}\\
a_{21} & a_{22} & a_{23}\\
a_{31} & a_{32} &  a_{33}
\end{pmatrix} \mapsto 
\begin{pmatrix}
 a_{11} & a_{13} & a_{33} & a_{31} & a_{12} & a_{23} & a_{32}& a_{21} & a_{22}
\end{pmatrix}^\top.
\end{align}
Then $\chi(A)$ is represented by the vector 
$(a_{13} \ a_{11} \ a_{31} \ a_{33} \ a_{12} \ a_{21} \ a_{32} \ a_{23}\ a_{22})^\top.$
Under this identification, the reflection is $\chi=(1 \,2)(3\,4)(6 \,8)\in \cS_9$, and we denote its representing matrix by~$P_{\chi}$.
Hence, equivariance of a matrix $M=(m_{ij})_{i,j}\in \cM_{9\times 9}$ under both $\sigma$ and $\chi$, i.e., $MP_\sigma =P_\sigma M$ and $MP_\chi=P_\chi M$, implies that $M$ has to be of the form 
\begin{align}
M \,=\,
\begin{pmatrix}
\begin{array}{c|c|c}
\begin{matrix}
C_4(\alpha_1,\alpha_2,\alpha_3,\alpha_2)
\end{matrix} &
\begin{matrix}
C_4(\beta_1,\beta_2,\beta_2,\beta_1)
\end{matrix} &
\begin{matrix}
\varepsilon_3 \cdot \mathbbm{1}
\end{matrix} \\ \hline 
\begin{matrix}
C_4(\gamma_1,\gamma_1,\gamma_3,\gamma_3)
\end{matrix} &
\begin{matrix}
C_4(\delta_1,\delta_2,\delta_3,\delta_2)
\end{matrix} &
\begin{matrix}
\varepsilon_4 \cdot \mathbbm{1}
\end{matrix}\\ \hline 
\begin{matrix}
\varepsilon_1 \cdot \mathbbm{1}
\end{matrix}  &
\begin{matrix}
\varepsilon_2 \cdot \mathbbm{1}
\end{matrix} &
\begin{matrix}
\varepsilon_5
\end{matrix}
 \end{array}
\end{pmatrix} .
\end{align}
Therefore, $\dim(\cE_{9\times 9}^G)= 81-66=2\cdot 3+2\cdot 2+5\cdot 1=15$. In comparison to matrices that are required to be equivariant under the rotation~$\sigma$ only (see~\eqref{eq:matrotequi}), the entries $\alpha_4$, $\beta_3$, $\beta_4$, $\gamma_2$, $\gamma_4$, and $\delta_4$ can no longer be chosen freely, which drops the dimension by~$6$. 

\medskip
Let us add the action of another permutation on $3\times 3$ pictures, namely shifting each row by one to the right, i.e., for $i=1,2,3,$ $a_{i,j}\mapsto a_{i,j+1}$ for $j=1,2$, and $a_{i,3}\mapsto a_{i,1} $. In the choice from above, the shift corresponds to the permutation $(1 \, 5 \,2)(3 \,4 \,7)(6 \,8\, 9)\in \cS_9$.
All $9\times 9$ matrices $M$ that are equivariant under rotation, reflection, and shift, are of the following form, with only $3$ degrees $\alpha_1,\alpha_2,\alpha_3$ of freedom:
\begin{align}
M\,=\,\begin{pmatrix}
\begin{array}{c|c|c}
\begin{matrix}
C_4(\alpha_1,\alpha_2,\alpha_3,\alpha_2)
 \end{matrix}
 & \begin{matrix}
C_4(\alpha_2,\alpha_3,\alpha_3,\alpha_2)
 \end{matrix}
 &
 \begin{matrix}
 \alpha_3 \cdot \mathbbm{1}
 \end{matrix} \\ \hline
 \begin{matrix}
 C_4(\alpha_2,\alpha_2,\alpha_3,\alpha_3)
  \end{matrix} &
  \begin{matrix}
  C_4(\alpha_1,\alpha_3,\alpha_2,\alpha_3)
  \end{matrix} &
  \begin{matrix}
  \alpha_2 \cdot \mathbbm{1}
  \end{matrix}
 \\ \hline
  \begin{matrix}
\alpha_3 \cdot  \mathbbm{1}
 \end{matrix} & \begin{matrix}
 \alpha_2 \cdot \mathbbm{1}
\end{matrix} &\begin{matrix} \alpha_1 \end{matrix}
\end{array}
\end{pmatrix}.
\end{align}
\noindent To understand the general behavior, one will need to engage in combinatorial tailoring.

\section{Experiments}\label{sec:experiments}
We apply our findings to train various linear autoencoders on the dataset MNIST~\cite{MNIST}, a widely used benchmark in machine learning. Our implementations in {\tt Python}~\cite{Python} are made available at \href{https://github.com/vahidshahverdi/Equivariant}{\tt https://github.com/vahidshahverdi/Equivariant}. 
MNIST comprises $60{,}000$ training and $10{,}000$ test black-and-white images of handwritten digits, each with a size of $28 \times 28$ pixels. Utilizing MNIST images, we introduce random horizontal shifts of up to six pixels. Some representative images from the test dataset are shown in Figure~\ref{fig:MNIST}. 
\begin{figure}
    \centering
   \includegraphics[scale=0.9]{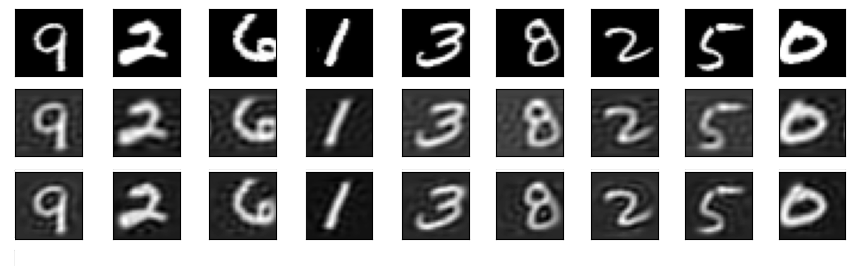}
    \caption{{\em Top row:} Nine samples from the MNIST~\cite{MNIST} test dataset, shifted  horizontally randomly by up to six pixels.
    {\em Middle row:} Output of a linear equivariant autoencoder designed to be equivariant under horizontal translations. The network architecture is determined by the integer vector $\mathbf{r}$, as described in \Cref{eq:order_r}.
    {\em Bottom row:} Output of a dense linear autoencoder with bottleneck~$r=99$ and no equivariance imposed.
    }
    \label{fig:MNIST}
\end{figure}
The task at hand is to design an autoencoder that is equivariant under horizontal translations. To achieve this, we first consider the permutation $\sigma \in \mathcal{S}_{784}$, representing a horizontal shift by one pixel. Consequently, we have $28$ disjoint cycles of size $28$, one for each row of images. The input data matrix~$X$ is a real matrix of size $784 \times 60{,}000$, where each column of $X$ represents the row-wise vectorization of the shifted images. It is important to note that, due to the structure of images, $X$ does not yield a full-rank matrix; in fact, 
its rank is $397 < 784$. We choose $r=99$, and our goal is to find a matrix  $M^{\sim Q_{\sigma}} = \bigoplus_{i=0}^{14} M_{i}$ in $(\mathcal{E}_{99,784 \times 784}^{\sigma})^{\sim Q_{\sigma}}$ such that $M$ minimizes $\lVert MX-X \rVert_F^2 \, .$ In here, $M_0$ and $M_{14}$ are matrices in $\cM_{r_0,28 \times 28}$ and $\cM_{r_{14},28 \times 28}$, respectively, and $M_i \in \cR(\cM_{r_i,28 \times 28}(\CC))$ for $i=1,\ldots, 13$, where $r_0 + 2r_1+\cdots +2r_{13}+r_{14}=99$. For increased readability, we write $r_i$ instead of $r_{l,m}$; see~\Cref{eq:order_r} for the precise matching of the indices.
As explained in \Cref{sec:parameterizingEquiv}, any network can parameterize only one of the real irreducible components $\mathcal{E}_{r,784 \times 784}^{\sigma,\mathbf{r}}$ of $\cE_{r,784 \times 784}^{\sigma}$. Referring to \Cref{thm:irreducible-dim}, we find that the number of irreducible components of $\mathcal{E}_{r,784 \times 784}^{\sigma}$ is
\begin{align}\label{eq:numberirrcompg}
   72{,}425{,}986{,}088{,}826 \, .
\end{align}
  Among these numerous components, we empirically observed that the component corresponding to the following integer vector $\mathbf{r}$ yields a reasonable loss for $r=99$: 
\begin{align}
\label{eq:order_r}
    \mathbf{r} &\,=\, (r_0,r_1,r_2,r_3,r_4,r_5,r_6,r_7,r_8,r_9,r_{10},r_{11},r_{12},r_{13},r_{14}) \nonumber \\
    &\,=\, (r_{1,1}, r_{28,27}, r_{14,13}, r_{28,25}, r_{7,6}, r_{28,23}, r_{14,11}, r_{4,3}, r_{7,5}, r_{28,19}, r_{14,9}, r_{28,17}, r_{7,4}, r_{28,15}, r_{2,1}) \\ \nonumber
    &\,=\, (13, 10, 9, 8, 7, 5, 3, 1, 0, 0, 0, 0, 0, 0, 0) \, .
\end{align}

The choice of the $r_{l,m}$ depends entirely on the structure of the data. Our observation reveals that, in each column of~$X$, the energy is concentrated in low frequencies, as is illustrated in \Cref{fig:covmatrix}. Consequently, the blocks corresponding to eigenvalues $\lambda_{l,m}= e^{\frac{2\pi m}{l}}$ with phases close to the zero angle contain more information. Thus, to effectively encode this information, it might be required to put higher ranks on blocks $M_i$ for $i$ close to~$0$.  \Cref{fig:energy} is a visual representation of this energy distribution for each Fourier mode. In \Cref{fig:MNIST}, we showcase the output for nine samples using our equivariant autoencoder with architecture~$\mathbf{r}$, and compare it to a linear autoencoder with $r=99$, {without  equivariance imposed}.

\smallskip
\begin{figure}[h]
    \centering
   \includegraphics[scale=0.8]{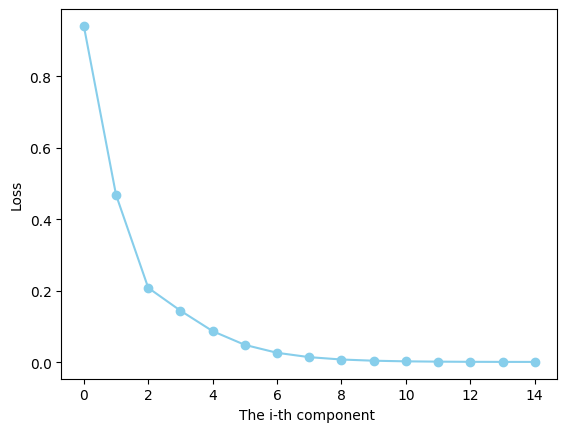}
    \caption{The error incurred by the block $M_i$, $i =0, \ldots, 14$,  when setting $\rank(M_i)=0 .$}
    \label{fig:energy}
\end{figure} 

\begin{figure}[H]
\centering
   \includegraphics[scale=0.86]{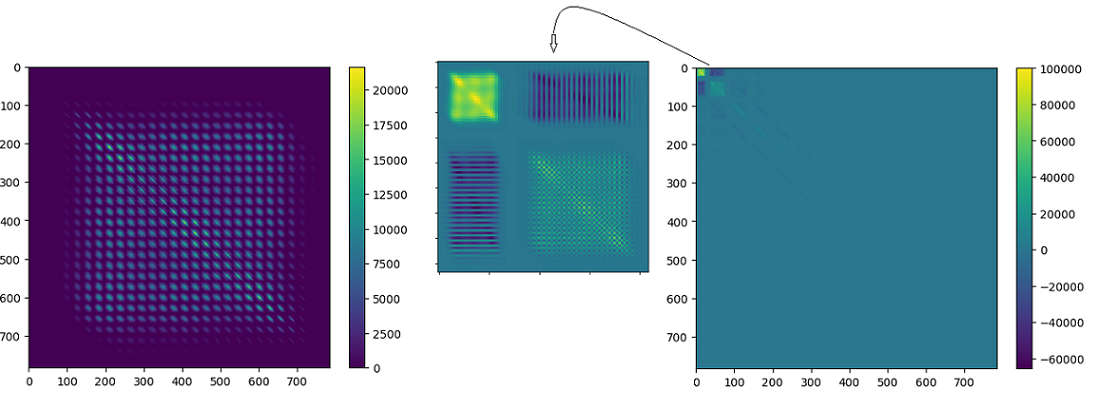}
    \caption{On the left, $XX^\top$ is visualized as an image, where $X$ represents our input data. On the right, $(XX^\top)^{\sim Q_{\sigma}}$ is depicted, 
    {with $Q_{\sigma}$ as defined in \Cref{sec:real-irreduce}.} 
Analyzing the plot on the right hand side, we note that the majority of the signal's energy is concentrated in low~frequencies.
}
\label{fig:covmatrix}
\end{figure} 

The significance of a proper choice of~$\mathbf{r}$ is illustrated in  \Cref{fig:high-mid-energy}. In this figure, we present the outputs of a handwritten digit~``$6$'' under two different equivariant architectures: first, using $r_{l,m}=7$ for every $l$ and $m$, and second, by excluding the first seven blocks following the order in \eqref{eq:order_r}, and the remaining blocks having full rank. Despite both scenarios having a total rank $r$ greater than~$99$, the outcomes are notably inferior.

\vspace*{-1mm}
\begin{figure}[H]
    \centering
\includegraphics[scale=1.9]{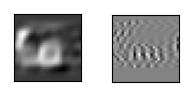}
\vspace*{-2mm}
    \caption{The outputs of linear equivariant autoencoders with suboptimal architectures for a handwritten digit ``6". {\em On the left:} an equivariant autoencoder with equal rank distributed among the blocks, setting all $r_{l,m}=7$. {\em On the right:} High-pass equivariant autoencoder, excluding the first seven blocks in the order of~\eqref{eq:order_r}.}
    \label{fig:high-mid-energy}
\end{figure} 
{From the comparison presented in \Cref{tab:loss_comparison}, one reads that a dense linear autoencoder without imposed equivariance---on average---outperforms equivariant architectures in terms of the mean squared loss.} However, achieving this superior performance demands a substantial parameter count, totaling $2\cdot 99\cdot 784=155{,}232$.  In contrast, our proposed equivariant autoencoder, defined by the architecture in \eqref{eq:order_r}, requires a more efficient parameter count of $2\cdot (28\cdot 13 + 2 \cdot  28\cdot (10+9+8+7+5+3+1))=5{,}544$. It is worth mentioning that, since we empirically chose one of the numerous irreducible components of $\cE_{99,784\times 784}^\sigma$, our proposed equivariant autoencoder may not represent the optimal choice among all possible linear equivariant architectures with~$r=99$. 
\begin{table}[H]
    \centering
    \small 
    \footnotesize
    \begin{tabular}{lSSSS}
        \hline
        & {Equivariant architecture \eqref{eq:order_r}} & {equal-rank equivariant} & {high-pass equivariant} & {non-equivariant} \\
        \hline
        Loss & 0.0082 & 0.0206 & 0.1063 & {\bf 0.0057} \\
        \hline
    \end{tabular}
    \caption{Comparison of average square loss values per pixel between linear equivariant and non-equivariant autoencoders on the MNIST test dataset.
    The equal-rank equivariant architecture is achieved by setting all $r_{l,m}=7$, while the high-pass equivariant architecture is obtained by letting the first seven blocks have rank $0$, and the remaining blocks have full rank. The non-equivariant network is trained with~$r=99$.}
    \label{tab:loss_comparison}
\end{table}
The non-equivariant linear autoencoder with a bottleneck size of $r=99$ exhibits partial equivariance under horizontal shifts, as depicted in \Cref{fig:W_trained-nonequi}. This behavior arises from the fact that for larger shifts, handwritten digits could split into two parts (scenarios which are not---or only barely---present in our modified dataset). 
This observation might be a partial explanation for the superior performance of the dense linear autoencoder compared to our equivariant architecture---the latter, however, is more efficient.

\begin{figure}[H]
   \hspace*{-4mm}
   \includegraphics[scale=0.9]{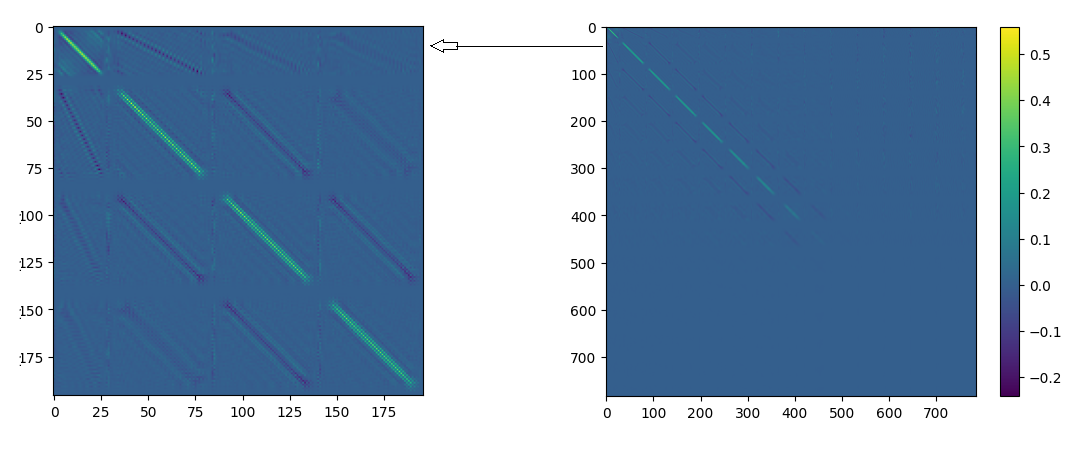}
    \caption{The plot on the right represents the trained dense linear autoencoder with bottleneck $r=99$ after the base change by $Q^{\sim \sigma}$. The one on the left is a magnified image of the plot on the right. It is evident that this final matrix is only partially equivariant under horizontal translations.}
    \label{fig:W_trained-nonequi}
\end{figure} 

\section{Conclusion and outlook}\label{sec:outlook}
We investigated linear neural networks through the lens of algebraic geometry, with an emphasis on linear autoencoders. Their function spaces are determinantal varieties $\cM_{r,n \times n}$ in a natural way. We considered permutation groups $G$ and fully characterized the elements of the function space which are invariant under the action of~$G$. They form an irreducible algebraic variety $\cI_{r,n\times n}^G\subset \cM_{r,n\times n}$ for which we computed the dimension, singular points, and degree.
We showed that the squared-error loss can be minimized on that variety by an explicit calculation using the Eckart--Young theorem.
We proved that all $G$-invariant functions can be parameterized by a linear autoencoder, and we derived implications for the design of such an autoencoder, namely a dimensional constraint on the middle layer and weight sharing in the encoder.  
For equivariance, we treated cyclic subgroups $G=\langle \sigma \rangle$ of permutation groups. Also in this case, the resulting part of the function space is an algebraic variety $\cE_{r,n \times n}^\sigma \subset \cM_{r,n\times n}$. Typically, this variety has several irreducible components; we determined them both over $\CC$ and over $\RR$. We computed their dimension, singular locus, and degree (the latter only for the complex components). Since $\cE_{r,n \times n}^\sigma$ is reducible, no linear neural network can parameterize all of $\cE_{r,n \times n}^\sigma$. However,  we provided a parameterization of each real irreducible component via a sparse autoencoder with the same weight sharing on its en- and decoder. We also explained a simple algorithm that reduces squared-error loss minimization on each real component to applying the Eckart--Young theorem multiple~times. 

\smallskip

Our technique for linear equivariant autoencoders—using rank constraints combined with equivariance—can be generalized to create a broader class of equivariant autoencoders. By representing each layer as a square matrix with constrained rank, we control the number of parameters while maintaining equivariance. The key advantage is the simplified analysis, as the same group representation is applied to both the input and output of each layer. This consistency across layers reduces the complexity, and provides a clear path for designing and analyzing general equivariant architectures.

\smallskip 

{To showcase our results, we trained several autoencoders on the MNIST dataset. 
For a bottleneck rank of $r=99$, the space of linear functions that are equivariant under horizontal shifts has the gigantic number $72{,}425{,}986{,}088{,}826$ of real irreducible components. 
We carefully chose one of the components and compared the outcome of an equivariant autoencoder parameterizing that component with a linear autoencoder without imposed equivariance. The latter did achieve a lower loss, requires however significantly more parameters. We also give a partial explanation of the superior performance of a general linear network; it arises from the nature of the considered dataset, which might be partially equivariant only.} 

\smallskip 
The generalization to non-cyclic groups is more intricate than for invariance. We plan to tackle this problem in follow-up~work.
One should also address groups other than permutation groups, such as  non-discrete groups. Another natural step to take is to generalize the network architecture to a bigger number of layers as well as to allowing non-trivial activation functions, such as~ReLU. For the latter, we expect that tropical expertize will be helpful to study the resulting geometry of the function space. Having the geometry of the function spaces understood, one should also investigate the types of critical points during training processes and how they compare to networks without imposed equi- or invariance.

\smallskip

Finally, a future goal is to employ the insights from this article to graph neural networks (GNNs).
While previous works on GNNs, such as \cite{maron2018invariant}, mainly focused on equivariance and invariance under the full permutation group, our study extends these ideas by considering subgroups of the permutation group. This generalization is important for applications where the symmetry group is more restricted, enabling the design of architectures for specific tasks.

\pagebreak
\noindent{\bf Acknowledgments.}
We thank Joakim And\'{e}n and Luca Sodomaco for insightful discussions on our experiments and on ED degrees, respectively. We also thank the anonymous reviewers whose suggestions helped us to improve this article. KK and ALS were partially supported by the Wallenberg~AI, Autonomous Systems and Software Program~(WASP) funded by the Knut and Alice Wallenberg Foundation. 

\addcontentsline{toc}{section}{References}
{
\normalsize
\bibliographystyle{abbrv} 
}
\end{document}